\pgfplotsset{compat = newest}
\newtheorem{theorem}{Theorem}
\newtheorem{corollary}{Corollary}
\newtheorem{lemma}{Lemma}
\newtheorem{definition}{Definition}
\newcommand{\nc}[2]{\newcommand{#1}{#2}}
\newcommand{\rnc}[2]{\renewcommand{#1}{#2}}
\newcommand{\mb}[1]{\mathbf#1}
\newcommand{\undb}[2]{\underbrace{#1}_\text{#2}}
\newcommand{\lrp}[1]{\left(#1\right)}
\newcommand{\lrs}[1]{\left[#1\right]}
\newcommand{\lrc}[1]{\left\{#1\right\}}
\DeclareMathOperator{\E}{\mathbb{E}}
\DeclareMathOperator*{\argmax}{arg\,max}
\nc{\fa}{\forall}\nc{\te}{\exists}\nc{\pa}{\partial}\nc{\inft}{\infty}\nc{\imp}{\implies}\nc{\ds}{\displaystyle}\nc{\spc}{\text{\,\,\,\,}}\nc{\Spc}{\,\,\,\,\,\,\,\,}
\rnc{\l}{\ell }\nc{\R}{\mathbb{R} }\nc{\Ha}{\mathcal{H} }\nc{\La}{\mathcal{L} }\nc{\nb}{\vc{\nabla}}\nc{\tm}{\times}\nc{\Z}{\mathbb{Z}}\nc{\arrow}{\longrightarrow}\rnc{\mod}{\text{mod }}\nc{\kb}{(\SI{1.380E-23}{J K^{-1}})}\nc{\hb}{(\SI{1.055E-34}{J s})}\rnc{\c}{(\SI{3.00E8}{m s^{-1}})}\nc{\sq}{\Box}\rnc{\dag}{\dagger}\nc{\Sol}{\textbf{Solution:}}\nc{\done}{\;\blacksquare}\nc{\ua}{\uparrow}\nc{\da}{\downarrow}\nc{\rank}{\text{rank}}\nc{\Tr}{\text{tr}}\nc{\iidsim}{\stackrel{i.i.d}{\sim}}\rnc{\P}{\mathbb{P}}
\newcommand{\result}[4]{
    $#1\pm#2$ & {\color{green!#3!red}\rule{#4cm}{8pt}}
}
\newcommand{\resultbest}[4]{
    $\mathbf{#1\pm#2}$ & {\color{green!#3!red}\rule{#4cm}{8pt}}
}
\newcommand{\resultbad}[2]{
    $\textcolor{gray}{#1\pm#2}$ & {\color{green!0!red}\rule{0cm}{8pt}}
}
\newcommand{\noresult}{ & }
\DeclareSymbolFont{cmsymbols}{OMS}{cmsy}{m}{n}
\DeclareSymbolFontAlphabet{\mathcal}{cmsymbols}
\newcommand{\shortcite}[1]{\citep{#1}}
\renewcommand{\cite}[1]{\citep{#1}}
\begin{document}

\title{Optimality Guarantees for Particle Belief Approximation of POMDPs}

\author{\name Michael H. Lim \email michaelhlim@berkeley.edu \\
        \addr University of California, Berkeley,\\
        Electrical Engineering and Computer Sciences Department
        \AND
        \name Tyler J. Becker \email Tyler.Becker-1@colorado.edu \\
        \addr University of Colorado Boulder,\\
        Aerospace Engineering Sciences Department
        \AND
        \name Mykel J. Kochenderfer \email mykel@stanford.edu \\
        \addr Stanford University,\\
        Aeronautics and Astronautics Department
        \AND
        \name Claire J. Tomlin \email tomlin@eecs.berkeley.edu \\
        \addr University of California, Berkeley,\\
        Electrical Engineering and Computer Sciences Department
        \AND
        \name Zachary N. Sunberg \email zachary.sunberg@colorado.edu \\
        \addr University of Colorado Boulder,\\
        Aerospace Engineering Sciences Department
        }


\maketitle

\begin{abstract}
    Partially observable Markov decision processes (POMDPs) provide a flexible representation for real-world decision and control problems.
    However, POMDPs are notoriously difficult to solve, especially when the state and observation spaces are continuous or hybrid, which is often the case for physical systems.
    While recent online sampling-based POMDP algorithms that plan with observation likelihood weighting have shown practical effectiveness, a general theory characterizing the approximation error of the particle filtering techniques that these algorithms use has not previously been proposed.
    Our main contribution is bounding the error between any POMDP and its corresponding finite sample particle belief MDP (PB-MDP) approximation.
    This fundamental bridge between PB-MDPs and POMDPs allows us to adapt any sampling-based MDP algorithm to a POMDP by solving the corresponding particle belief MDP, thereby extending the convergence guarantees of the MDP algorithm to the POMDP.
    Practically, this is implemented by using the particle filter belief transition model as the generative model for the MDP solver.
    While this requires access to the observation density model from the POMDP, it only increases the transition sampling complexity of the MDP solver by a factor of $\mathcal{O}(C)$, where $C$ is the number of particles.
    Thus, when combined with sparse sampling MDP algorithms, this approach can yield algorithms for POMDPs that have no direct theoretical dependence on the size of the state and observation spaces.
    In addition to our theoretical contribution, we perform five numerical experiments on benchmark POMDPs to demonstrate that a simple MDP algorithm adapted using PB-MDP approximation, Sparse-PFT, achieves performance competitive with other leading continuous observation POMDP solvers.
\end{abstract}


\section{Introduction}
\label{sec:introduction}
Maintaining safety and acting efficiently in the midst of uncertainty is an important aspect in a diverse set of challenges from transportation~\cite{holland2013optimizing,sunberg2017value} to autonomous scientific exploration \cite{bresina2002planning,frew2020field}, to healthcare~\cite{ayer2012mammography} and ecology~\cite{memarzadeh2018adaptive}.
The partially observable Markov decision process (POMDP) is a flexible framework for sequential decision making in uncertain environments.

One common method for solving POMDPs is online tree search, which is attractive for several reasons.
First, the approach scales to very large problems because it uses sampled trajectories, making it insensitive to the dimensionality of the state and observation spaces \cite{kearns2002sparse}.
Second, since online computation focuses on the current states and states likely to be encountered in the future, it can reduce the need for offline computation and end-to-end training \cite{deglurkar2023compositional}.
Third, tree search is applicable to a wide range of problems, for example hybrid continuous-discrete and problems with many local optima, because it only depends on a minimal set of problem structure requirements.

Recently proposed POMDP tree search algorithms \cite{sunberg2018pomcpow,garg2019despotalpha,lim2020sparse,lim2021voronoicdc,mern2021bayesian,hoerger2020,wu2021adaptive} have been shown empirically to work on continuous state and observation spaces.
Theoretical analysis, however, has lagged behind.
While there are algorithms that have performance guarantees \cite{lim2020sparse,lim2021voronoicdc} and algorithms that perform well empirically~\cite{sunberg2018pomcpow,garg2019despotalpha,lim2021voronoicdc,mern2021bayesian,hoerger2020,wu2021adaptive}, there has been little progress on a general theory describing why this family of algorithms can enjoys such good performance.
Though there have been some algorithm-specific results (outlined in \cref{sec:previous-analysis}), a considerable gap in the connection between POMDPs and practical approximations using particle methods still remains.

This manuscript formally justifies that optimality guarantees in a finite sample particle belief MDP (PB-MDP) approximation of a POMDP/belief MDP yield optimality guarantees in the original POMDP as well.
We accomplish this by showing that the $Q$-values of the POMDP and PB-MDP are close with high probability by using an intermediary theoretical algorithm called Sparse Sampling-$\omega$.
Specifically, we prove that the Sparse Sampling-$\omega$ $Q$-value estimates are close to both optimal $Q$-values of the POMDP and PB-MDP with high probability.
Since there exists an algorithm that approximates both $Q$-values accurately with high probability, the optimal $Q$-values of the POMDP and PB-MDP themselves must be close to each other with high probability.
This probability scales as $1-\mathcal{O}(C^{D}\exp(-t \cdot C))$, where $C$ is the number of particles; $D$ is the planning depth; and $t$ is a number determined by the POMDP reward function and probability distributions, number of particles, and desired accuracy. 
Notably, this convergence rate does not directly depend on the size of the state space nor the observation space, but rather depends on the R\'enyi divergence that links the probabilities concerning state and observation trajectories and the planning horizon $D$.

This fundamental bridge between PB-MDPs and POMDPs allows us to adapt any sampling-based MDP algorithm of choice to a POMDP by solving the corresponding particle belief MDP approximation and to preserve the convergence guarantees in the POMDP.
Practically, this means additionally assuming we have an explicit observation model $\mathcal{Z}$ and swapping out the state transition generative model with a particle filtering-based model.
This change only increases the computational complexity of transition generation by a factor of $\mathcal{O}(C)$, with $C$ the number of particles in a particle belief state.
This allows us to devise algorithms such as Sparse Particle Filter Tree (Sparse-PFT), which enjoys algorithmic simplicity, theoretical guarantees, and practicality, since it is equivalent to upper confidence trees (UCT) \cite{bjarnason2009lower,shah2020nonasymptotic}, with particle belief states.

The remainder of this paper proceeds as follows:
First, \cref{sec:background} reviews preliminary definitions and previous related work.
\cref{sec:pb-mdp-def} formalizes the notion of particle belief MDPs (PB-MDPs).
Then, \cref{sec:ssw} introduces Sparse Sampling-$\omega$ algorithm, and proves its coupled convergence towards the optimal $Q$-values of a POMDP and its corresponding PB-MDP in \cref{thm:ssw}.
\cref{sec:pomdp-guarantees} formally bridges the gap between POMDPs and PB-MDPs by leveraging the coupled convergence of Sparse Sampling-$\omega$.
In this section, we present two main theorems: \cref{thm:pomdp-qvalue} shows the optimal $Q$-value bounds between POMDP and PB-MDP, and \cref{thm:pomdp-grand} shows the near-optimality of planning with a PB-MDP to solve a POMDP by applying an online $Q$-value-estimating algorithm repeatedly in a closed loop with observations from the environment.
We also introduce Sparse-PFT, a practical example of generating a PB-MDP approximation algorithm from an MDP algorithm.
Finally, \cref{sec:numerical} empirically shows the performance of Sparse-PFT and other practical continuous observation POMDP algorithms over five different simulation experiments, and validates the improvements in performance of PB-MDP approximation with the increase in number of particles $C$ while keeping other hyperparameters fixed.

\section{Background and Related Work}
\label{sec:background}
\subsection{POMDPs}
\begin{figure}
    \centering
    \includegraphics[width=\textwidth]{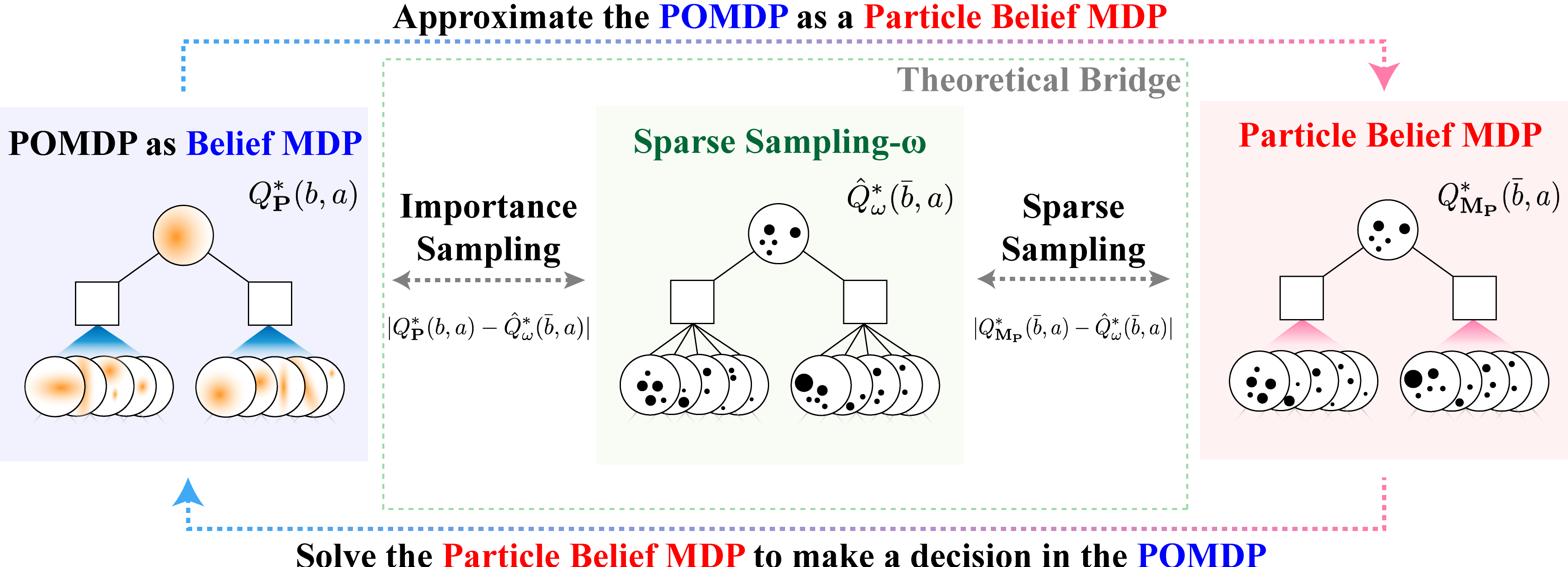}\label{fig:pomdp-pbmdp-bridge}
    \caption{Illustration of the proof of our main theorem, \cref{thm:pomdp-qvalue}: Since Sparse Sampling-$\omega$ algorithm $Q$-value estimator converges to both the optimal $Q$-values of POMDP and PB-MDP, such an existence of algorithm implies that the optimal $Q$-values of POMDP and PB-MDP are also close to each other with high probability.
    This enables us to approximate the POMDP problem as a PB-MDP, and then solve the PB-MDP with an MDP algorithm to make a decision in the original POMDP while retaining the guarantees and computational efficiencies of the original MDP algorithm.}
\end{figure}

The partially observable Markov decision process (POMDP) is a mathematical formalism that can represent a wide range of sequential decision making problems~\cite{kochenderfer2015decision}.
In a POMDP, an agent chooses actions based on observations to maximize the expectation of a cumulative reward signal.
A POMDP is defined by the 7-tuple $(S, A, O, \mathcal{T}, \mathcal{Z}, R, \gamma)$.
In this tuple, $S$, $A$, and $O$ are sets of all possible states, actions, and observations, respectively.
These sets can be discrete, e.g.\ $\{1,2\}$, continuous, e.g. $\mathbb{R}^2$, or hybrid.
The conditional probability distributions $\mathcal{T}$ and $\mathcal{Z}$ define state transitions and observation emissions, respectively.
The transition probability distribution is conditioned on the current state $s$ and action $a$ and is denoted $\mathcal{T}(s' \mid s, a)$.
The observation probability is conditioned on the previous action and current state\footnote{It is possible to use observation probability distributions additionally conditioned on the previous state, $\mathcal{Z}(o \mid s, a, s')$, in all algorithms discussed in this paper, but we use $\mathcal{Z}(o \mid a, s')$ for brevity.} and denoted $\mathcal{Z}(o \mid a, s')$.
The reward function, $R(s, a)$, maps states and actions to an expected reward, and $\gamma \in [0,1)$ is a discount factor.
The agent plans starting from $b_0$, the initial state distribution or the initial belief.
Some POMDP algorithms only require samples from the transition, observation, or reward models rather than explicit knowledge of $\mathcal{T}$, $\mathcal{Z}$, or $R$.
Such samples can be produced using a so-called \emph{generative model}~\cite{kearns2002sparse} denoted with $s', o, r \gets G(s, a)$. In some algorithms, only one or two of the outputs of $G$ are used and the others are discarded, e.g.\ the notation ``$o \gets G(s, a)$'' indicates that $s'$ and $r$ are discarded.

The objective of a POMDP is to find an optimal policy, $\pi^*$, that selects actions that maximizes the discounted sum of future rewards, with an appropriate tie-breaking method:
\begin{equation}\label{eq:objective}
\pi^* = \underset{\pi}{\text{argmax}} \, \E \left[\sum_{t=0}^\infty \gamma^t R(s_t, a_t)\right] \text{.}
\end{equation}
In general, the actions may be chosen based on the entire history of actions and observations,
\begin{equation}
h_t \equiv (b_0, a_0, o_1, a_1, \ldots, o_{t-1}, a_{t-1}, o_t) \text{.}
\end{equation}
However, because of the Markov property, it can be shown that optimal decisions can be made based only on the conditional distribution of the state given the history \cite{kaelbling1998planning}, known as the belief,
\begin{equation}
    b_t(s) \equiv \P(s_t=s \mid h_t) \text{.}
\end{equation}
This belief can be updated using Bayes's rule or an efficient approximation such as a Kalman filter or particle filter, and it is often more straightforward to determine actions based on beliefs rather than the history.
Since the belief and history fulfill the Markov property, a POMDP is a Markov decision process (MDP) on the belief or history space, commonly referred to as the belief MDP \cite{kaelbling1998planning}.

In order to maximize the objective in \cref{eq:objective}, the policy must take into account both the immediate reward from taking the action in the current state and whether that action will lead to states favorable for attaining rewards in the future.
For a history $h$ and a corresponding belief $b$, the history-action and belief-action value functions, defined as
\begin{equation}
    Q^\pi(h, a) \equiv Q^\pi(b, a) \equiv \E \left[\sum_{t=0}^\infty \gamma^t R(s_t, a_t) \, \middle| \, b_0=b, \, a_0=a,\, a_t = \pi(b_t)\right] \text{,}
\end{equation}
take both of these factors into account.
When $\pi$ is also used for the current step, the expected accumulated reward from a history or belief is denoted with $V^\pi(h) = V^\pi(b) = Q^\pi(b, \pi(b))$.
When $\pi$ is an optimal policy, these value functions are denoted with $Q^*$ and $V^*$.
If $Q^*$ can be calculated, an optimal policy $\pi^*$ can simply be extracted with $\pi^*(h) = \text{argmax}_a\,Q^*(h, a)$.
Thus, a common strategy for solving POMDPs involves iteratively improving estimates of $Q^*$ denoted simply with $Q$ for brevity.

Early research~\cite{smallwood1973optimal,kaelbling1998planning,shani2013survey} sought to find optimal solutions to POMDPs \emph{offline}; that is, they attempted to optimize actions for every possible belief before interacting with the environment.
However, since POMDPs are generally intractable~\cite{papadimitriou1987complexity}, it is often impossible to find a complete solution for a POMDP offline.
Instead, we seek to compute solutions online only for the part of the problem that may be reached in the immediate future.

\subsection{Importance Sampling and Particle Filtering}

In many real-world applications, updating the belief exactly based on a new action and observation is impractical.
Fortunately, Monte Carlo methods provide simple and effective tools for approximate reasoning about distributions such as beliefs.

We often need to reason about a random variable $X \sim \mathcal{P}$ based only on samples from another related random variable, $Y \sim \mathcal{Q}$. 
\emph{Importance sampling} allows us to, among other tasks, calculate the expectation of a function by observing that
\begin{equation}
    \E_{X \sim \mathcal{P}}[f(X)] = \int f(x) \mathcal{P}(x) dx = \int f(x) \frac{\mathcal{P}(x)}{\mathcal{Q}(x)} \mathcal{Q}(x) dx \approx \frac{1}{N} \sum_{i=1}^N \frac{\mathcal{P}(y_i)}{\mathcal{Q}(y_i)} f(y_i) \text{,}
\end{equation}
where $\{y_i\}_{i=1}^N$ are samples from distribution $\mathcal{Q}$.
The convergence property of this approximation relevant to the present work is described formally in \cref{sec:is}.

The particle filter is an application of Monte Carlo estimation to the task of Bayesian belief updating~\cite{thrun2005probabilistic,kochenderfer2015decision}.
The simplest form is an unweighted particle filter, in which the belief is represented by a collection of $N$ states, known as \textit{particles}, $\tilde{b} = \{s_i\}_{i=1}^N$.
The density is approximated by $\tilde{b}(s) \approx \sum_{i=1}^N \delta(s_i = s)$, where $\delta(\cdot)$ is a Dirac or Kronecker delta function depending on the form of the state space.
At each step of the POMDP, after an action $a$ is taken and an observation $o$ is received, a new state $s'_i$ and observation $o_i$ is simulated once or more for each of the particles to create the new belief, $\tilde{b}' = \{s'_i : o_i=o\}$.
In most cases, few particles will match $o$ so it is difficult to maintain a large number of particles in the belief.
Various domain-specific techniques can be used to reduce this problem, but it is difficult to solve completely in the unweighted particle filter.
 
The weighted particle filter is usually much more effective.
The belief is represented by a collection of state particles and corresponding weights, 
$\tilde{b} = \{(s_i, w_i)\}_{i=1}^N$.
The density is approximated with $\tilde{b}(s) \approx \frac{\sum_{i=1}^N w_i \delta(s_i = s)}{\sum_{i=1}^N w_i}$.
A belief update consists of simulating each particle once or more and then calculating the new weight according to the importance sampling correction: $w'_i = w_i\cdot Z(o \mid s, a, s'_i)$.
Typically, the weights of a few particles grow much larger than the others, so a resampling step creates many particles from those with large weights and eliminates those with very small weights~\cite{thrun2005probabilistic,kochenderfer2015decision}.

\subsection{Online POMDP Solvers}
Monte Carlo tree search (MCTS) is a common solution technique for Games, MDPs, and POMDPs \cite{browne2012survey,silver2010pomcp}.
In an MDP context, MCTS constructs a tree consisting of state and state-action nodes.
In a POMDP context, each node corresponds to an action- or observation-terminated history node, estimating $Q(h, a)$ at each action-terminated history node.
The most common variant is called partially observable upper confidence trees (PO-UCT) or partially observable Monte Carlo planning (POMCP)\footnote{Strictly speaking, POMCP also includes a specialized unweighted particle filter update that re-uses simulations from the planning step, but the term is often used informally as a synonym for PO-UCT.}~\cite{silver2010pomcp} and constructs the tree by using Upper Confidence Bound (UCB), asymmetrically favoring regions of the history and action spaces that are likely to be visited when the optimal policy is executed~\cite{kocsis2006bandit,bjarnason2009lower,shah2020nonasymptotic}.

In addition to the PO-UCT algorithm described above, there are several other approaches to solve POMDPs through online planning.
Early solvers attempted to use exact Bayesian belief updates on discrete state spaces~\cite{ross2008online}, however, these are much less scalable than PO-UCT.
Two other popular solvers with scalability similar to UCT are determinized sparse partially observable trees (DESPOT)~\cite{ye2017despot} and adaptive belief trees (ABT)~\cite{kurniawati2016online}.
DESPOT uses a small number of determinized scenarios instead of independent random simulations to reduce variance and relies on heuristic tree search guided by upper and lower bounds rather than Monte Carlo tree search.
ABT is designed to efficiently adapt to changes in the environment without discarding previous computation.

Since PO-UCT, DESPOT, and ABT all rely on unweighted particle belief representations, they will fail to find optimal policies in continuous observation spaces because the probability of generating the same observation twice, and hence creating beliefs with multiple particles, is zero~\cite{sunberg2018pomcpow,lim2020sparse}.
Partially observable Monte Carlo planning with observation widening (POMCPOW) approaches the continuous observation challenge by introducing a weighted particle filter and the continuous action challenge with progressive widening~\cite{sunberg2018pomcpow}.
DESPOT-$\alpha$~\cite{garg2019despotalpha} incorporates a similar weighting scheme and uses the $\alpha$-vector concept to generalize value estimates between sibling nodes.
Adaptive online packing-guided search (AdaOPS) \cite{wu2021adaptive} fuses similar observation branches in the search tree to improve performance.
Lazy Belief Extraction for Continuous Observation POMDPs (LABECOP) \cite{hoerger2020} builds the planning tree by re-weighting particles and extracting belief sequence values efficiently.

\subsection{Theoretical Analysis of Particle-based POMDP Algorithms} \label{sec:previous-analysis}

Several previous studies have analyzed particle-based POMDP algorithms from a theoretical perspective.
\citet{silver2010pomcp} claim that POMCP value estimates converge to the optimal value for discrete POMDPs on the basis that it equivalent to applying UCT to the history MDP corresponding to the POMDP.
However, as mentioned above, POMCP does not converge in continuous observation POMDPs~\cite{sunberg2018pomcpow,lim2020sparse}.
\citet{ye2017despot} analyzed the approximation of a POMDP with a finite set of \emph{scenarios}, which essentially correspond to random seeds that are fixed across different possible action sequences, and bounded the performance of the DESPOT algorithm that uses these scenarios.
However, these bounds depend on the size of the observation space, $|O|$, and thus cannot be applied to continuous observation spaces.
This analysis was expanded by \citet{luo2019importance} to cover a case in which scenarios are selected from an importance distribution.

\citet{bai2014integrated} also provide convergence guarantees for their Monte Carlo value iteration (MCVI) algorithm which uses simulations in a manner somewhat akin to particle filtering.
Their analysis extends to continuous observation spaces, but the algorithm is best suited for offline use, unlike the algorithms we focus on.
According to \citet{bai2014integrated}, MCVI spends hours computing a policy graph that can be executed quickly online.

\citet{lim2020sparse} presented the first theoretical analysis of online POMDP tree search algorithms that use weighted particle filtering.
However, the partially observable weighted sparse sampling (POWSS) algorithm analyzed in that work is not efficient enough to be practically useful.
\citet{wu2021adaptive} provide analytical performance guarantees for a simplified version of AdaOPS, a recent particle belief tree search POMDP solver included in our numerical analysis in \cref{sec:numerical}.
However, the full AdaOPS algorithm used in the numerical experiments is more complex than the simplified version used in the theoretical portion of the work.

\citet{du2021particle} analyzed the number of particles needed to control partially observable linear systems.
Finally, there is a large body of work on particle filters without consideration of decision making.
Some results from this field are summarized by \citet{crisan2002survey}.

In contrast to these works that provide guarantees for individual algorithms or limited cases, the analysis in this paper provides a general bound for particle belief approximation of a broad class of POMDPs, giving justification for MDP algorithms to be adapted to solve POMDPs efficiently.

\section{Particle Belief MDPs (PB-MDPs)}\label{sec:pb-mdp-def}

In this section, we define the corresponding particle belief MDP (PB-MDP) for a given POMDP.
Deriving the corresponding particle belief MDP of a POMDP is equivalent to approximating the belief MDP with a finite number of particles.

\begin{definition}[Particle Belief MDP]
The corresponding particle belief MDP for a given POMDP problem $\mathbf{P} = (S, A, O, \mathcal{T}, \mathcal{Z}, R, \gamma)$ is the MDP $\mathbf{M}_{\mathbf{P}} = (\Sigma, A, \tau, \rho, \gamma)$ defined by the following elements:
\end{definition}
\begin{itemize}
\item $\Sigma$: State space over particle beliefs $\bar{b}_{d}$. An element in this set, $\bar{b}_d \in \Sigma$, is a particle collection, $\bar{b}_d = \{(s_{d,i},w_{d,i})\}_{i=1}^C$, where $s_{d,i}\in S$, $w_{d,i} \in \R^+$. \footnote{The $d$ subscript, the number of steps, is included for subscript order consistency with the rest of the paper, but is not meaningful in this context.} For the sake of brevity in the rest of the paper, we drop$\phantom{\}}_{i=1}^C$ and render a particle belief as $\{s_{d,i},w_{d,i}\}$.
      The beliefs are not assumed to be permutation invariant, meaning that particle beliefs with different particle orders are considered different elements in $\Sigma$. This simplifies derivation of the transition distribution (see \cref{eqn:pb-trans-dist}) because each particle transition is independent.
  \item $A$: Action space. Remains the same as the original action space.
  \item $\tau$: Transition density $\tau(\bar{b}_{d+1}\mid \bar{b}_{d},a)$: We define the likelihood weights $w_{d,i}$ of particles $s_{d,i}$ to be updated through unnormalized Bayes rule:
  \begin{align}
    w_{d+1,i} &= w_{d,i}\cdot \mathcal{Z}(o\mid a,s_{d+1,i}).
  \end{align}
  Then, the transition probability from $\bar{b}_{d}$ to $\bar{b}_{d+1}$ by taking the action $a$ can be defined as:
  \begin{align}
    \tau(\bar{b}_{d+1}\mid \bar{b}_{d},a) &\equiv \int_{O} \P(\bar{b}_{d+1}\mid \bar{b}_{d},a,o)\P(o\mid \bar{b}_{d},a)\, do.
  \end{align}
  The first term in the integrand product $\P(\bar{b}_{d+1}\mid \bar{b}_{d},a,o)$ is the conditional transition density given some observation $o$.
  Since each new state particle is generated independently and the likelihood weight updates are deterministic given $s_{d,i},s_{d+1,i}, a$ and $o$, this term can be written in terms of $\mathcal{T}$ and $\mathcal{Z}$:
  \begin{align}
    \P(\bar{b}_{d+1}\mid \bar{b}_{d},a,o) &= \P(\{s_{d+1,i}, w_{d+1,i}\} \mid \{s_{d,i}, w_{d,i}\}, a,o) \\
    &= \prod_{i=1}^C \P(s_{d+1,i}, w_{d+1,i} \mid s_{d,i}, a, o) \label{eqn:pb-trans-dist} \\
    &= \begin{cases}
        \prod_{i=1}^C \mathcal{T}(s_{d+1,i}\mid s_{d,i},a) & \text{if }w_{d+1,i} = w_{d,i}\cdot \mathcal{Z}(o\mid a,s_{d+1,i})\; \forall i\\
        0 & \text{otherwise}.\\
    \end{cases}
  \end{align}
  The second term in the integrand product $\P(o\mid \bar{b}_{d},a)$ is the observation likelihood given a particle belief and an action. 
  This is equivalent to weighted sum of observation likelihoods conditioning on the observation having been generated from the respective $i$-th particle:
  \begin{align}
    \P(o\mid \bar{b}_{d},a) &= \P(o\mid \{s_{d,i},w_{d,i}\},a) = \frac{\sum_{i=1}^C w_{d,i}\cdot\P(o\mid s_{d,i},a)}{\sum_{i=1}^C w_{d,i}}\\
    &= \frac{\sum_{i=1}^C w_{d,i}\cdot[\int_S \mathcal{Z}(o\mid a,s')\mathcal{T}(s'\mid s_{d,i},a) ds']}{\sum_{i=1}^C w_{d,i}}.
  \end{align}
  Note that this density $\tau$ is usually impossible or very difficult to calculate explicitly.
  However, it is rather easy to sample from it using generative models.
  \item $\rho$: Reward function $\rho(\bar{b}_{d},a)$: 
  \begin{align}
    \rho(\bar{b}_{d},a) &= \frac{\sum_i w_{d,i} \cdot R(s_{d,i},a)}{\sum_i w_{d,i}}.
  \end{align}
  Note that if $R$ is bounded by $R_{\max}$, $\rho$ is also bounded with $||\rho||_{\infty} \leq R_{\max}$, since the normalized weights sum to 1.
  \item $\gamma$: Discount factor. Remains the same as the original discount factor.
\end{itemize}

The significance of defining a corresponding particle belief MDP is that we can directly adapt any sampling-based MDP algorithm to approximately solve a POMDP by only changing the transition generative model.
The transition generative model will now be a sampler based on particle filtering, as the particle belief MDP deals with particle belief states.
Furthermore, this allows $Q$-value convergence guarantees of the MDP algorithms to translate nicely into solving the POMDP, as we will prove later in this paper that the optimal $Q$-values of the POMDP $Q_{\mathbf{P}}^*$ and PB-MDP $Q_{\mathbf{M}_{\mathbf{P}}}^*$ are close with high probability.

\section{Sparse Sampling-$\omega$}
\label{sec:ssw}
In order to show that the optimal $Q$-values of the POMDP, $Q_{\mathbf{P}}^*$, and PB-MDP, $Q_{\mathbf{M}_{\mathbf{P}}}^*$, are approximately equivalent, we first introduce an algorithm called Sparse Sampling-$\omega$ (sparse sampling with weights), which will serve as a theoretical bridge between POMDP and PB-MDP.
Sparse Sampling-$\omega$ is a sparse sampling solver that uses particle belief states with particle likelihood weighting to deal with observation uncertainty.
As is evident from the name, Sparse Sampling-$\omega$ takes inspiration from sparse sampling \cite{kearns2002sparse} for continuous state MDPs, using particle belief states.
Note that Sparse Sampling-$\omega$ is purely a theoretical intermediary tool to bridge POMDPs and PB-MDPs, and fully expanding the state and action nodes is extremely computationally inefficient.
Rather, this theoretically well-behaved algorithm is what lets us effectively bridge the gap between $Q_{\mathbf{P}}^*$ and $Q_{\mathbf{M}_{\mathbf{P}}}^*$.

\subsection{Algorithm Definition}
\begin{algorithm}[t]
\algrenewcommand\algorithmicprocedure{\textbf{algorithm}}
\caption{Sparse Sampling-$\omega$}\label{alg:ssw}
\begin{multicols}{2}
\small
\textbf{Global Variables:} $\gamma,G,C,D$.\\

\textbf{Procedure:} \textsc{GenPF}($\bar{b},a$)\\
\textbf{Input:} particle belief set $\bar{b}=\{(s_i,w_i)\}$, action $a$.\\
\textbf{Output:} New updated particle belief set $\bar{b}'=\{(s_i',w_i')\}$, mean reward $\rho$.
\begin{algorithmic}[1]
    \State $s_o \leftarrow$ sample $s_i$ from $\bar{b}$ w.p. $\frac{w_i}{\sum_i w_i}$
    \State $o \leftarrow G(s_o,a)$
    \For{$i = 1,\ldots,C$}
        \State $s_i', r_i \leftarrow G(s_i, a)$ \label{eq:independent}
        \State $w_i' \leftarrow w_i \cdot \mathcal{Z}(o|a,s_i')$
    \EndFor
    \State $\bar{b}' \gets \left\{(s_i', w_i')\right\}_{i=1}^C$
    \State $\rho \leftarrow \sum_i w_i r_i/\sum_i w_i $
    \State \Return $\bar{b}', \rho$
\end{algorithmic}
\small
\columnbreak
\textbf{Procedure:} \textsc{EstimateV}($\bar{b}, d$)\\
\textbf{Input:} particle belief set $\bar{b}=\{(s_i,w_i)\}$, depth $d$.\\
\textbf{Output:} A scalar $\hat{V}_d^*(\bar{b})$ that is an estimate of $V^*(\bar{b})$.
\begin{algorithmic}[1]
    \If{$d\geq D$}
        \State \Return 0
    \EndIf
    \For{$a\in A$}
        \State $\hat{Q}_d^*(\bar{b},a) \leftarrow \textsc{EstimateQ}(\bar{b}, a, d)$
    \EndFor
    \State \Return $\hat{V}_d^*(\bar{b}) \leftarrow \max_{a\in A} \hat{Q}_d^*(\bar{b},a)$
\end{algorithmic}
\vspace{1em}
\textbf{Procedure:} \textsc{EstimateQ}($\bar{b}, a, d$)\\
\textbf{Input:} particle belief set $\bar{b}=\{(s_i,w_i)\}$, action $a$, depth $d$.\\
\textbf{Output:} A scalar $\hat{Q}_d^*(\bar{b},a)$ that is an estimate of $Q_d^*(b,a)$.
\begin{algorithmic}[1]
    \For{$i = 1,\ldots,C$}
        \State $\bar{b}'_i, \rho \leftarrow \textsc{GenPF}(\bar{b},a)$ \label{line:diff}
        \State $\hat{V}_{d+1}^*(\bar{b}'_i) \leftarrow \textsc{EstimateV}(\bar{b}'_i, d+1)$
    \EndFor
    \State \Return $\hat{Q}_d^*(\bar{b},a) \leftarrow \rho + \frac{1}{C}\sum_{i=1}^{C} \gamma\cdot\hat{V}_{d+1}^*(\bar{b}'_i)$
\end{algorithmic}

\end{multicols}
\end{algorithm}

The Sparse Sampling-$\omega$ algorithm is defined with the procedures listed in \cref{alg:ssw}.
The global variables are the discount factor $\gamma$, the generative model $G$, the observation width and number of particles $C$, and the planning depth $D$.
\textsc{GenPF} is the helper function to generate the next-step particle belief set, where the particles are evolved according to the transition density $\mathcal{T}$ and the weights are updated through the observation density $\mathcal{Z}$.
In \textsc{GenPF}, the sampled states $s'_i$ are inserted into each next-step particle belief set $\overline{bao_j}$ with the new weights $w_i' = w_i \cdot \mathcal{Z}(o_j\mid a,s_i')$, which are the adjusted probability of hypothetically sampling observation $o_j$ from state $s_i'$.
Furthermore, the reward returned by \textsc{GenPF} is the particle likelihood weighted reward $\rho = \sum_i w_i r_i/\sum_i w_i $ of the current particle belief state, which is a constant output for a fixed pair of $\bar{b},a$.

The main planning functions in Sparse Sampling-$\omega$ are the \textsc{EstimateV} and \textsc{EstimateQ} procedures.
We use particle belief set $\bar{b}$ at every step $d$, which contain pairs $(s_i,w_i)$ that correspond to the generated sample and its corresponding weight.
\textsc{EstimateV} is a subroutine that returns the value function $V$, for an estimated state or belief, by calling \textsc{EstimateQ} for each action and returning the maximum.
Similarly, \textsc{EstimateQ} performs sampling and recursively calls \textsc{EstimateV} to estimate the $Q$-function at a given step with a weighted average. 
In \textsc{EstimateQ}, Sparse Sampling-$\omega$ samples the next particle belief state using \textsc{GenPF}. 

Consequently, the Sparse Sampling-$\omega$ policy action can be obtained by calling the value estimation function \textsc{EstimateV}$(\bar{b}_0,0)$ at the root node and taking an action that maximizes the $Q$-value.
The particle belief set is initialized by drawing samples from $b_0$ and setting weights to $1/C$, as the samples were drawn directly from $b_0$.
Sparse Sampling-$\omega$ is not computationally efficient as it fully expands the sparsely sampled tree with full particle belief states.
It serves only to demonstrate theoretical convergence and is only practically applicable to very small toy POMDP problems.

Sparse Sampling-$\omega$ is identical to the sparse sampling algorithm \cite{kearns2002sparse} planning on a particle belief MDP.
It also is a slight modification of the previously-published POWSS algorithm~\cite{lim2020sparse}.
Specifically, whereas POWSS generates exactly one observation and corresponding new belief for each particle in a belief, Sparse Sampling-$\omega$ randomly selects a state to generate the observation each time \textsc{GenPF} is called in \cref{line:diff} of \cref{alg:ssw}.
This means that Sparse Sampling-$\omega$ performs a Monte Carlo sampling estimate of the next step value, while POWSS performs an importance weighted summation over the estimates.

Most importantly, this duality of being a modification of POWSS algorithm maintaining similar convergence guarantees for POMDPs while simultaneously being an adaptation of the sparse sampling algorithm for particle belief MDP makes it the ideal candidate to bridge POMDPs and PB-MDPs together.
As an added benefit, the definition of Sparse Sampling-$\omega$ is much simpler than the original POWSS algorithm, while still allowing us to use similar analysis techniques used in both POWSS and sparse sampling.

\subsection{Theoretical Analysis}
\label{sec:ssw-convergence}
In this section, we will prove that Sparse Sampling-$\omega$ algorithm can be made to approximate both optimal $Q$-values of the POMDP $Q_{\mathbf{P}}^*$ and PB-MDP $Q_{\mathbf{M}_{\mathbf{P}}}^*$ arbitrarily closely by increasing the observation width $C$.
\cref{thm:ssw} proves that the Sparse Sampling-$\omega$ algorithm approximates these $Q$-values with high probability by combining results from self-normalized importance sampling estimators and POWSS optimality proofs \cite{lim2020sparse} to prove the optimality in $Q_{\mathbf{P}}^*$, and sparse sampling proof \cite{kearns2002sparse} to prove the optimality in $Q_{\mathbf{M}_{\mathbf{P}}}^*$.

\subsubsection{Importance Sampling} \label{sec:is}
We begin the theoretical portion of this work by stating an important property about self-normalized importance sampling estimators (SN estimators). 
We have previously published this property \cite{lim2020sparse} but present it again here because of its importance to our analysis.
One goal of importance sampling is to estimate an expected value of a function $f(x)$ where $x$ is drawn from a distribution $\mathcal{P}$ while the estimator only has access to another distribution $\mathcal{Q}$ along with the importance weights $w_{\mathcal{P}/\mathcal{Q}}(x) \propto \mathcal{P}(x)/\mathcal{Q}(x)$.
This technique is crucial for Sparse Sampling-$\omega$ because we wish to estimate the value for beliefs conditioned on observation sequences while only being able to sample from the marginal distribution of states for a given action sequence.

We define the following quantities:
\begin{align}
  \tilde{w}_{\mathcal{P}/\mathcal{Q}}(x) &\equiv \frac{w_{\mathcal{P}/\mathcal{Q}}(x)}{\sum_{i=1}^N w_{\mathcal{P}/\mathcal{Q}}(x_i)} \tag{SN Importance Weight}\\
  d_\alpha(\mathcal{P}||\mathcal{Q}) &\equiv \E_{x \sim \mathcal{Q}}[w_{\mathcal{P}/\mathcal{Q}}(x)^\alpha] \tag{R\'enyi Divergence}\\
  \tilde{\mu}_{\mathcal{P}/\mathcal{Q}} &\equiv \sum_{i=1}^N \tilde{w}_{\mathcal{P}/\mathcal{Q}}(x_i) f(x_i) \tag{SN Estimator}\text{.}
\end{align}

Of particular importance is the infinite R\'enyi Divergence, $d_\infty$, which can be rewritten as an almost sure bound on the ratio of $\mathcal{P}$ and $\mathcal{Q}$:
\begin{equation}
d_{\infty}(\mathcal{P}||\mathcal{Q}) = \underset{x \sim \mathcal{Q}}{\text{ess}\sup} \, w_{\mathcal{P}/\mathcal{Q}}(x) \text{.}
\end{equation}
Assuming $d_{\infty}(\mathcal{P}||\mathcal{Q})$ is finite, we prove an estimator concentration bound in the following theorem.

\begin{theorem}[SN $d_{\infty}$-Concentration Bound]\label{thm:sn}
  Let $\mathcal{P}$ and $\mathcal{Q}$ be two probability measures on the measurable space $(\mathcal{X},\mathcal{F})$ with $\mathcal{P}$ absolutely continuous w.r.t. $\mathcal{Q}$ and $d_{\infty}(\mathcal{P}||\mathcal{Q}) < +\infty$. Let $x_1,\ldots, x_N$ be $N$ independent identically distributed random variables  with distribution $\mathcal{Q}$, and $f: \mathcal{X} \to \R$ be a bounded function ($\norm{f}_\infty < + \infty$). Then, for any $\lambda > 0$ and $N$ large enough such that $\lambda > \norm{f}_{\infty}  d_{\infty}(\mathcal{P}||\mathcal{Q})/\sqrt{N}$, the following bound holds with probability at least $1-3\exp(-N\cdot t^2(\lambda,N))$:
  \begin{align} 
    &|\E_{x\sim \mathcal{P}}[f(x)] - \tilde{\mu}_{\mathcal{P}/\mathcal{Q}}| \leq \lambda, \\
    &t(\lambda,N) \equiv \frac{\lambda}{\norm{f}_{\infty}d_{\infty}(\mathcal{P}||\mathcal{Q})}-\frac{1}{\sqrt{N}}.
  \end{align}
\end{theorem} 
\cref{thm:sn} builds upon the derivation in Proposition D.3 of Metelli \textit{et al.}~(\citeyear{Metelli2018}), which provides a polynomially decaying bound by assuming $d_2$ is bounded. Here, we compromise by further assuming that $d_{\infty}$ exists and is bounded to get an exponentially decaying bound.
The proof of \cref{thm:sn} is given in \cref{app:theorem1}, and the intuitive explanation of the $d_{\infty}$ assumption in the POMDP planning context is given in \cref{sec:assumptions}.

This exponential decay is important for the proofs in this section. 
We need to ensure that all nodes of the Sparse Sampling-$\omega$ tree at all depths $d$ reach convergence. 
The branching of the tree induces a factor proportional to $C^{D}$. 
Theorem 1 applied with $N=C$ will not only help offset the $C^{D}$ factor even with increasing depths, but also be consistent with Hoeffding-type bound exponential error rate that we also use to bound intermediate estimator errors.

\subsubsection{Assumptions for Analyzing Sparse Sampling-$\omega$.}\label{sec:assumptions}
The following assumptions are needed for the Sparse Sampling-$\omega$ coupled convergence proof:
\begin{enumerate}[label=(\roman*)]
    \item $S$ and $O$ are continuous spaces, and the action space has a finite number of elements, $|A|<+\inft$. \label{req:space}
    \item The densities $\mathcal{Z},\mathcal{T},b_0$ have the property that, for any observation sequence $\{o_{n,j}\}_{n=1}^d$, the R\'enyi divergence of the target distribution $\mathcal{P}^d$ and sampling distribution $\mathcal{Q}^d$ (\cref{eq:pqdef,eq:pqdef2}) is bounded above by $d_{\inft}^{\max}$ for all $d = 0,\ldots,D-1$: \label{req:Renyi}
    \begin{align}
        d_{\inft}(\mathcal{P}^d||\mathcal{Q}^d) = \text{ess sup}_{x \sim \mathcal{Q}^d}\, w_{\mathcal{P}^d/\mathcal{Q}^d}(x) \leq d_{\inft}^{\max}
    \end{align}
    \item The reward function $R$ is bounded by a finite constant $R_{\max}$, and hence the value function is bounded by $V_{\max} \equiv \frac{R_{\max}}{1-\gamma}$. \label{req:r}
    \item We can sample from the generating function $G$ and evaluate the observation density $\mathcal{Z}$. \label{req:generate}
    \item The POMDP terminates after no more than $D < \infty$ steps. \label{req:finite}
    \item We restrict our analysis to all the beliefs $b \in B$ that are realizable from the initial belief $b_0$ through Bayesian updates with action sequences $\{a_n\}$ and observation sequences $\{o_n\}$. \label{req:realizable}
\end{enumerate}
Intuitively, condition \ref{req:Renyi} means that the ratio of the observation probability conditioned on the true state to the marginal observation probability cannot be too high.
Additionally, the results still hold even when either of $S$ or $O$ are discrete, so long as it does not violate condition \ref{req:Renyi}, by appropriately switching the integrals to sums.

Although our analysis is restricted to the case when $\gamma<1$ and the problem has a finite horizon, we believe that similar results can be derived for either when $\gamma = 1$ for a finite horizon or for infinite horizon problems when $\gamma < 1$ by using the common argument that eventually future discounted rewards will be small~\cite{silver2010pomcp,kearns2002sparse}.
Furthermore, while the results from this section repeat steps taken in proving POWSS \cite{lim2020sparse}, we significantly modify the details for Sparse Sampling-$\omega$.

\subsubsection{Particle Likelihood Weighting Accuracy.} \label{sec:plw_accuracy}
As a precursor to \cref{thm:ssw}, we establish a general result about function estimation using state particles with likelihood weights. 
This is useful because the inductive proof for showing Sparse Sampling-$\omega$ convergence in \cref{lemma:induction} relies heavily upon an SN estimator concentration inequality as well as a Hoeffding-type inequality.

\begin{lemma}[Particle Likelihood SN Estimator Convergence]\label{lemma:pwl_sn}
  Suppose a function $f$ is bounded by a finite constant $\norm{f}_{\infty} \leq f_{\max}$, and a particle belief state $\bar{b}_d=\{s_{d,i},w_{d,i}\}$ at depth $d$ represents $b_d$ with particle likelihood weighting that is recursively updated as $w_{d,i} = w_{d-1,i} \cdot \mathcal{Z}(o_{d}\mid~a,s_d)$.
  Then, for all $d = 0,\ldots,D-1$, the following weighted average is the SN estimator of $f$ under the belief $b_d$ corresponding to the actions $\{a_n\}_{n=0}^{d-1}$ and observations $\{o_n\}_{n=1}^d$, for all beliefs $b_d \in B$ that are realizable given the initial belief $b_0$:
  \begin{align}
      \tilde{\mu}_{\bar{b}_d}[f] &= \frac{\sum_{i = 1}^C w_{d,i}f(s_{d,i})}{\sum_{i = 1}^C w_{d,i}},
  \end{align}
  and the following concentration bound holds with probability at least $1-3\exp(-C\cdot t_{\max}^2(\lambda,C))$,
  \begin{align}
    &|\E_{s\sim {b_d}}[f(s)] - \tilde{\mu}_{\bar{b}_d}[f] | \leq \lambda,\\
    &t_{\max}(\lambda,C) \equiv \frac{\lambda}{f_{\max} d_{\inft}^{\max}} - \frac{1}{\sqrt{C}}.
  \end{align}
\end{lemma}
\begin{proof}
We only outline the important steps here, and defer the detailed proof of this lemma to \cref{app:lemma1}.
The key of this proof lies in the fact that the state particles trajectories $\{s_{n,1}\},\ldots,$ $\{s_{n,C}\}$ are independent identically distributed random variable sequences of depth $d$, as \textsc{GenPF} independently generates each state sequence $i$ according to the transition density $\mathcal{T}$.
While \textsc{GenPF} generates highly correlated observation sequences and histories $\{o_{n}\}_{n=1}^d$, the dependence on observation sequence for a given particle belief state is only through the particle likelihood weights.

We abbreviate some terms of interest with the following notation:
\begin{align}
  \mathcal{T}_{1:d}^i &\equiv \prod_{n=1}^d \mathcal{T}(s_{n,i}\mid s_{n-1,i},a_n);\quad \mathcal{Z}_{1:d}^{i} \equiv \prod_{n=1}^d \mathcal{Z}(o_{n}\mid a_{n},s_{n,i}) \text{,}
\end{align}
where $d$ is the depth, and $i$ is the index of the state sample.
Intuitively, $\mathcal{T}_{1:d}^i$ is the transition density of the $i$th state sequence, $\{s_{n,i}\}_{n=1}^d$, and $\mathcal{Z}_{1:d}^{i}$ is the conditional density of observation sequence $\{o_n\}$ given the $i$th state sequence from the root node to depth $d$.
Additionally, $b_d^i$ denotes $b_d(s_{d,i})$ and $w_{d,i}$ the weight of $s_{d,i}$.

Then, we apply importance sampling to our system for all depths $d=0,\ldots,D-1$.
Here, $\mathcal{P}^d$ is the normalized measure of the state sequence $\{s_{n,i}\}_{n=0}^d$ conditioned on the observation sequence $\{o_{n}\}_{n=1}^d$ and action sequence $\{a_n\}_{n=0}^{d-1}$ up to the node at depth $d$, and $\mathcal{Q}^d$ is the measure of the state sequence conditioned only on the action sequence. 
For simplicity, we use $\mathcal{Z}_{1:d}$ to denote the product of observation likelihoods $\prod_{n=1}^d\mathcal{Z}(o_n\mid a_{n-1},s_n)$ and $\mathcal{T}_{1:d}$ to denote the product of transition densities $\prod_{n=1}^d\mathcal{T}(s_n\mid s_{n-1}, a_{n-1})$.
Then, for an arbitrary action sequence $\{a_n\}$, the following describes the densities necessary to define importance weighting:
\begin{align}
  &\mathcal{P}^d = \mathcal{P}_{\{a_n, o_{n}\}}^d(\{s_{n,i}\}) = \frac{(\mathcal{Z}_{1:d}^{i}) (\mathcal{T}_{1:d}^i) b_{0}^i}{\int_{S^{d+1}} (\mathcal{Z}_{1:d}) (\mathcal{T}_{1:d}) b_{0} ds_{0:d}}\label{eq:pqdef}\\
  &\mathcal{Q}^d = \mathcal{Q}_{\{a_n\}}^d(\{s_{n,i}\}) = (\mathcal{T}_{1:d}^i) b_{0}^i \label{eq:pqdef2}\\
  &w_{\mathcal{P}^d/\mathcal{Q}^d}(\{s_{n,i}\}) = \frac{(\mathcal{Z}_{1:d}^{i}) }{\int_{S^{d+1}} (\mathcal{Z}_{1:d}) (\mathcal{T}_{1:d}) b_{0} ds_{0:d}} \text{.}
\end{align}
Here, the integral to calculate the normalizing constant is taken over $S^{d+1}$, the Cartesian product of the state space $S$ over $d+1$ steps.
Now, we can show that the recursive likelihood updating scheme in \cref{lemma:pwl_sn} produces valid likelihood weights up to a normalization by simply expanding the weight $w_{d,i}$:
\begin{align}
  w_{d,i} &= w_{d-1,i}\cdot \mathcal{Z}(o_{d}\mid a_{d-1},s_{d,i}) = w_{d-2,i} \prod_{n=d-1}^d \mathcal{Z}(o_{n}\mid a_{n-1},s_{n,i}) = \ldots = \mathcal{Z}_{1:d}^{i} \propto w_{\mathcal{P}^d/\mathcal{Q}^d}(\{s_{n,i}\}).
\end{align}
Consequently, we conclude that the weighted average with particle likelihood weights indeed corresponds to the proper SN estimator:
\begin{align}
  \tilde{\mu}_{\bar{b}_d}[f] &= \frac{\sum_{i = 1}^C w_{d,i}\cdot f(s_{d,i})}{\sum_{i = 1}^C w_{d,i}} = \frac{\sum_{i = 1}^C w_{\mathcal{P}^{d}/\mathcal{Q}^{d}}(\{s_{n,i}\}) \cdot f(s_{d,i})}{\sum_{i = 1}^C w_{\mathcal{P}^{d}/\mathcal{Q}^{d}}(\{s_{n,i}\}) }.
\end{align}
We can apply the SN concentration inequality in \cref{thm:sn} to obtain the concentration bound. 
\end{proof}
Note that proving this lemma allows us to apply the particle likelihood weighting SN inequality whenever we encounter weighted averages with particle likelihood weights for a realizable particle belief.
Also, this result does not depend on any specific choice of observation sequence $\{o_n\}$.

\subsubsection{Coupled Convergence of Sparse Sampling-$\omega$.}
The theorem below describes Sparse Sampling-$\omega$'s coupled convergence to both optimal $Q$-values of the POMDP $Q_{\mathbf{P}}^*$ and PB-MDP $Q_{\mathbf{M}_{\mathbf{P}}}^*$, as $C$ is increased.

\begin{theorem}[Sparse Sampling-$\omega$ Coupled Optimality]\label{thm:ssw}
  Suppose conditions \ref{req:space}-\ref{req:realizable} are satisfied. Then, for any $\lambda >0$ and $0<\delta\leq 1$, choosing particle count constant $C$ that satisfies:
  \begin{align}
    C &= \max\lrc{\lrp{\frac{4V_{\max} d_{\inft}^{\max}}{\lambda}}^2, \frac{64V_{\max}^2 }{\lambda^2}\lrp{D\log\frac{24|A|^{\frac{D+1}{D}}V_{\max}^2D }{\lambda^2} + \log\frac{1}{\delta}}},
  \end{align}
  the $Q$-function estimates $\hat{Q}^*_{\omega,d}(\bar{b}_d,a)$ obtained for all depths $d=0,\ldots,D-1$, realized beliefs or histories $b_d$ encountered in the Sparse Sampling-$\omega$ tree, and actions $a$ are jointly near-optimal with respect to $Q_{\mathbf{P},d}^*$ and $Q_{\mathbf{M}_{\mathbf{P}},d}^*$ with probability at least $1 - \delta$: 
  \begin{align}
    |Q_{\mathbf{P},d}^*(b_d,a) - \hat{Q}^*_{\omega,d}(\bar{b}_d,a) | & \leq \frac{\lambda}{1-\gamma}, \label{eq:ssw-powss}\\ 
    |Q_{\mathbf{M}_{\mathbf{P}},d}^*(\bar{b}_d,a) - \hat{Q}^*_{\omega,d}(\bar{b}_d,a) | & \leq \frac{\lambda}{1-\gamma}. \label{eq:ssw-ss}
  \end{align}
\end{theorem}

To prove \cref{thm:ssw}, we follow a similar proof strategy from our previous proof for POWSS \shortcite{lim2020sparse} to show that \cref{eq:ssw-powss} holds, and a similar strategy of the original sparse sampling proof \shortcite{kearns2002sparse} to show that \cref{eq:ssw-ss} holds. 
In essence, this Sparse Sampling-$\omega$ convergence guarantee builds on POWSS and sparse sampling convergence guarantees, providing coupled convergence results to optimal $Q$-values of the POMDP $Q_{\mathbf{P}}^*$ and PB-MDP $Q_{\mathbf{M}_{\mathbf{P}}}^*$.

First, we use induction in \cref{lemma:induction} to prove a concentration inequality for the value function at all nodes in the tree, starting at the leaves and proceeding up to the root.
Consequently, proving \cref{lemma:induction} allows us to prove \cref{thm:ssw}, with some justifications of how the parameter $C$ can actually be explicitly chosen with the choice of $\lambda,\delta$.
The detailed proof for \cref{thm:ssw} is in \cref{app:theorem2}.

\begin{lemma}[Sparse Sampling-$\omega$ Estimator $Q$-Value Coupled Convergence]\label{lemma:induction}
  For all $d = 0,\ldots,D-1$ and $a$, the following bounds hold with probability at least $1 - 6|A|(4|A|C)^{D}\exp(-C\cdot \tilde{t}^2)$:
  \begin{align} 
    &|Q^*_{\mathbf{P},d}(b_d,a) - \hat{Q}_{\omega,d}^*(\bar{b}_d,a)| \leq \alpha_d,\; \alpha_{d} = \lambda + \gamma\alpha_{d+1},\; \alpha_{D-1} = \lambda,\label{eq:p_to_ssw}\\
    &|Q^*_{\mathbf{M}_{\mathbf{P}},d}(\bar{b}_d,a)- \hat{Q}_{\omega,d}^*(\bar{b}_d,a)| \leq \beta_d,\; \beta_{d} = \gamma(\lambda + \beta_{d+1}),\; \beta_{D-1} = 0, \label{eq:ssw_to_m}\\
    &t_{\max}(\lambda,C) = \frac{\lambda}{4V_{\max} d_{\inft}^{\max}} - \frac{1}{\sqrt{C}},\; \tilde{t} = \min\{t_{\max},\lambda/4\sqrt{2}V_{\max}\}
  \end{align}
\end{lemma}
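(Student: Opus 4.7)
The plan is to prove the two inequalities simultaneously by backward induction on depth $d$, exploiting the Bellman recursion for $Q^*_{\mathbf{P}}$, $Q^*_{\mathbf{M}_{\mathbf{P}}}$, and $\hat{Q}^*_\omega$. At each node I would decompose the gap into a fixed number of concentration events (SN-estimator bounds from \cref{lemma:pwl_sn} plus Hoeffding bounds on Monte Carlo averages over bounded summands), propagate the tree-wide error by a single union bound over all actions, observation branches, and depths, and then absorb the branching into the $(4|A|C)^D$ factor that appears in the failure probability. The constants in the choice of $C$ in the eventual theorem are calibrated so that each concentration event holds with error $\leq \lambda$.

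\textbf{Base case $d=D-1$.} At the leaves, $\hat{V}^*_{\omega,D}=0$, $V^*_{\mathbf{M}_\mathbf{P},D}=0$, and $V^*_{\mathbf{P},D}=0$, so $Q^*_{\mathbf{M}_\mathbf{P},D-1}(\bar{b},a)=\hat{Q}^*_{\omega,D-1}(\bar{b},a)=\rho(\bar{b},a)$ identically, establishing $\beta_{D-1}=0$. On the POMDP side, $Q^*_{\mathbf{P},D-1}(b,a)=\mathbb{E}_{s\sim b}[R(s,a)]$ while $\hat{Q}^*_{\omega,D-1}(\bar{b},a)=\rho(\bar{b},a)=\tilde{\mu}_{\bar{b}_{D-1}}[R(\cdot,a)]$; \cref{lemma:pwl_sn} with $f=R(\cdot,a)$ and $f_{\max}=R_{\max}\leq V_{\max}$ yields the bound $\alpha_{D-1}=\lambda$ with the stated probability.

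\textbf{Inductive step, PB-MDP side.} Since $\rho(\bar{b},a)$ is common to both quantities, the gap reduces to $\gamma\bigl|\mathbb{E}_{\bar{b}'\sim\tau(\cdot|\bar{b},a)}[V^*_{\mathbf{M}_\mathbf{P},d+1}(\bar{b}')]-\tfrac{1}{C}\sum_{i=1}^{C}\hat{V}^*_{\omega,d+1}(\bar{b}'_i)\bigr|$. Inserting the intermediate $\tfrac{1}{C}\sum_i V^*_{\mathbf{M}_\mathbf{P},d+1}(\bar{b}'_i)$, the first difference is a Hoeffding bound over $C$ i.i.d.\ draws of $V^*$ (bounded by $V_{\max}$) from $\tau$ giving $\leq\lambda$, while the second is bounded termwise by the inductive hypothesis since $V^*=\max_a Q^*$ inherits the $Q$-gap through the max. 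Summing gives $\gamma(\lambda+\beta_{d+1})=\beta_d$.

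\textbf{Inductive step, POMDP side.} I would split $|Q^*_{\mathbf{P},d}-\hat{Q}^*_{\omega,d}|$ into the reward gap $|\mathbb{E}_{s\sim b}[R]-\rho(\bar{b},a)|\leq\lambda$ (\cref{lemma:pwl_sn}) and a value gap, and control the value gap through three intermediates: (i) $\tilde{\mu}_{\bar{b}}\bigl[\mathbb{E}_{o|s,a}[V^*_{\mathbf{P},d+1}(b^{ao})]\bigr]$, which by Fubini equals $\mathbb{E}_{o\sim P(o|\bar{b},a)}[V^*_{\mathbf{P},d+1}(b^{ao})]$ because $P(o|\bar{b},a)$ is precisely the SN-weighted convolution of $\mathcal{Z}$ with $\mathcal{T}$ over $\bar{b}$; (ii) $\tfrac{1}{C}\sum_k V^*_{\mathbf{P},d+1}(b^{a,o_k})$, the Monte Carlo average over the observations sampled in \textsc{GenPF}; and (iii) $\tfrac{1}{C}\sum_k \hat{V}^*_{\omega,d+1}(\bar{b}'_k)$. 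The true $\mathbb{E}_{o|b,a}\to$(i) gap is handled by \cref{lemma:pwl_sn} applied to the bounded function $s\mapsto\mathbb{E}_{o|s,a}[V^*_{\mathbf{P},d+1}(b^{ao})]$; the (i)$\to$(ii) gap is a Hoeffding bound, since the $o_k$ are i.i.d.\ from $P(\cdot|\bar{b},a)$ by construction of \textsc{GenPF}; and (ii)$\to$(iii) is bounded termwise by the inductive hypothesis applied to each child $(b^{a,o_k},\bar{b}'_k)$, noting that the likelihood-weight update in \textsc{GenPF} is exactly the particle-filter image of the Bayesian update so that the inductive pairing $(b,\bar{b})$ is preserved down the tree. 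Calibrating the concentration parameters so that each of the four sub-events contributes at most its share of $\lambda$ produces $\alpha_d=\lambda+\gamma\alpha_{d+1}$.

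\textbf{Main obstacle.} The principal difficulty is the POMDP-side step: the observations fed to \textsc{GenPF} are drawn from the particle marginal $P(o|\bar{b},a)$, not the true marginal $P(o|b,a)$, and the inductive hypothesis must be invoked at $(b^{a,o_k},\bar{b}'_k)$ where $b^{a,o_k}$ is the Bayesian update of $b$ but $\bar{b}'_k$ is the likelihood-weighted update of $\bar{b}$; reconciling these two processes inside the SN/Hoeffding decomposition, and then executing a union bound over the $O((4|A|C)^D)$ nodes and sub-events without inflating the per-node error budget past $\lambda$, is where the bulk of the analytical work sits.
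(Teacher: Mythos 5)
Your proposal is correct and follows essentially the same route as the paper's proof: a coupled backward induction in which the reward gap and the importance-sampling part of the value gap are controlled by \cref{lemma:pwl_sn}, the Monte Carlo parts by Hoeffding bounds, the remaining term by the inductive hypothesis, and everything is assembled with a worst-case union bound over actions, children, and depths to get the $6|A|(4|A|C)^D\exp(-C\tilde t^2)$ factor. The one place you genuinely deviate is the POMDP-side value term: the paper splits the Monte Carlo error into two Hoeffding events --- the error from sampling the observation-generating indices $I_i$ from the weight distribution $p_{w,d}$, and the zero-mean one-sample error $\mathbf{V}_{d+1}^*(b_d,a)^{[I_i]}-V^*_{\mathbf{P},d+1}(b_dao^{[I_i]})$ --- whereas you merge them into a single Hoeffding bound by noting that the observations produced by \textsc{GenPF} are, conditionally on $\bar b_d$, i.i.d.\ draws from the particle marginal $\P(o\mid\bar b_d,a)$, so that $\tfrac1C\sum_k V^*_{\mathbf{P},d+1}(b_dao_k)$ concentrates directly around $\tilde\mu_{\bar b_d}\bigl[\,\E_{o\mid s,a}V^*_{\mathbf{P},d+1}(b_dao)\bigr]$ via your Fubini identity. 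This is valid (the bound is uniform in $\bar b_d$ and is de-conditioned by the tower property, exactly as the paper does for its index-sampling term), uses one fewer concentration event per node, and has summands bounded by $V_{\max}$ rather than $2V_{\max}$; the paper's finer split buys only an explicit treatment of the one-sample estimate of the next-step $(s',o)$ integral. Your remaining bookkeeping --- GenPF sampling exactly from $\tau$ on the PB-MDP side, the per-event calibration of shares of $\lambda$ so the recursion closes as $\alpha_d=\lambda+\gamma\alpha_{d+1}$ (the paper achieves this by bounding the reward term by $\tfrac{R_{\max}}{4V_{\max}}\lambda\le\tfrac{1-\gamma}{4}\lambda$), and the $(4|A|C)^D$ union bound --- matches the paper's argument.
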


\begin{proof} 
We outline how we use the particle likelihood SN estimator inequality and Hoeffding inequality to bound the $Q$-values, and defer the detailed proof to \cref{app:lemma2}.

The optimal $d$-step $Q$-values for the POMDP $Q_{\mathbf{P}}^*$ and the corresponding PB-MDP $Q_{\mathbf{M}_{\mathbf{P}}}^*$ are
\begin{align}
  Q^*_{\mathbf{P},d}(b_d,a) &= \E_{\mathbf{P}}[R(s_{d},a) + \gamma V_{\mathbf{P},d+1}^*(b_dao)\mid b_d] \\
  &= \int_S R(s_{d},a) b_{d} \cdot ds_{d} + \gamma \int_S \int_S \int_O V_{\mathbf{P},d+1}^*(b_dao)(\mathcal{Z}_{d+1}) (\mathcal{T}_{d,d+1})b_d \cdot ds_{d:d+1}do, \\
  Q^*_{\mathbf{M}_{\mathbf{P}},d}(\bar{b}_d,a) &= \rho(\bar{b},a) + \gamma\E_{\mathbf{M}_{\mathbf{P}}}[V_{\mathbf{M}_{\mathbf{P}},d+1}^*(\bar{b}_{d+1})\mid \bar{b}_d, a]\\
  &= \frac{\sum_i w_{d,i} \cdot R(s_{d,i},a)}{\sum_{i=1}^C w_{d,i}} + \gamma\int_\Sigma V_{\mathbf{M}_{\mathbf{P}},d+1}^*(\bar{b}_{d+1})\tau(\bar{b}_{d+1}\mid \bar{b}_{d},a) d\bar{b}_{d+1}.
\end{align}
The Sparse Sampling-$\omega$ value estimates are mathematically equal to
\begin{align}
    \hat{V}^*_{\omega,d}(\bar{b}_d) &= \max_{a\in A}\hat{Q}^*_{\omega,d}\left(\bar{b}_d,a\right)\\
    \hat{Q}^*_{\omega,d}(\bar{b}_d,a) &= \frac{\sum_{i = 1}^C w_{d,i}r_{d,i}}{\sum_{i = 1}^C w_{d,i}}+\frac{1}{C}\sum_{i=1}^{C}\gamma\cdot \hat{V}^*_{\omega,d+1}\left(\bar{b}_{d+1}'^{[I_i]}\right),
\end{align}
where $\{I_i\}$ are $C$ independent identically distributed random variables with finite discrete distribution $p_{w,d}$ with probability mass $p_{w,d}(I=j) = (w_{d,j}/\sum_k w_{d,k})$, and particle belief state $\bar{b}_{d+1}'^{[I_i]}$ is updated by an observation generated from $s_{d,I_i}$.
This reflects the fact that \textsc{GenPF} randomly selects a state particle $s_o$ with probability $w_{d,o}/\sum_k w_{d,k}$ $C$ times independently to generate a new observation for the particle belief state after next step.

\textbf{POMDP Value Convergence:} First, we show that \cref{eq:p_to_ssw} is satisfied, which is an adapted and substantially modified proof of POWSS convergence \cite{lim2020sparse}.
Using the triangle inequality for a given step $d$ of the inductive proof, we split the difference into two terms, the reward estimation error (A) and the next-step value estimation error (B):
\begin{align}
  |Q_{\mathbf{P},d}^*(b_d,a) - \hat{Q}^*_{\omega,d}(\bar{b}_d,a) | \leq &\undb{\left| \E_{\mathbf{P}}[R(s_d,a)\mid b_d] - \frac{\sum_{i = 1}^C w_{d,i}r_{d,i}}{\sum_{i = 1}^C w_{d,i}}\right|}{(A)} \\
  &+ \gamma \undb{\left| \E_{\mathbf{P}}[V_{\mathbf{P},d+1}^*(b_dao)\mid b_d] - \frac{1}{C}\sum_{i=1}^{C} \hat{V}^*_{\omega,d+1}(\bar{b}_{d+1}'^{[I_i]}) \right|}{(B)} \notag
\end{align}

The reward estimation error (A) is exactly the particle likelihood importance sampling error for estimating the reward function $R(\cdot, a)$, which can be bounded by applying \cref{lemma:pwl_sn}. This also proves the base case.

To bound the next-step value estimation error (B), we introduce particle likelihood SN estimators and Monte Carlo average estimators to bridge the following quantities (detailed definitions and bounds of each terms are in \cref{app:theorem2}):
{\small
\begin{align}
    &\undb{\left| \E_{\mathbf{P}}[V_{\mathbf{P},d+1}^*(b_dao)\mid b_d] - \frac{1}{C}\sum_{i=1}^{C} \hat{V}^*_{\omega,d+1}(\bar{b}_{d+1}'^{[I_i]}) \right|}{(B)}\\
    &\leq \undb{\left| \E_{\mathbf{P}}[V_{\mathbf{P},d+1}^*(b_dao)\mid b_d] - \frac{\sum_{i = 1}^C w_{d,i}\mathbf{V}_{\mathbf{P},d+1}^*(b_d,a)^{[i]}}{\sum_{i = 1}^C w_{d,i}} \right|}{(1) Importance sampling error}  + \undb{\left| \frac{\sum_{i = 1}^C w_{d,i}\mathbf{V}_{\mathbf{P},d+1}^*(b_d,a)^{[i]}}{\sum_{i = 1}^C w_{d,i}} - \frac{1}{C}\sum_{i=1}^{C}\mathbf{V}_{\mathbf{P},d+1}^*(b_d,a)^{[I_i]} \right|}{(2) MC weighted sum approximation error} \notag\\
    &\quad + \undb{\left| \frac{1}{C}\sum_{i=1}^{C}\mathbf{V}_{\mathbf{P},d+1}^*(b_d,a)^{[I_i]} - \frac{1}{C}\sum_{i=1}^{C}V_{\mathbf{P},d+1}^*(b_dao^{[I_i]}) \right|}{(3) MC next-step integral approximation error} + \undb{\left|\frac{1}{C}\sum_{i=1}^{C}V_{\mathbf{P},d+1}^*(b_dao^{[I_i]}) - \frac{1}{C}\sum_{i=1}^{C} \hat{V}^*_{\omega, d+1}(\bar{b}_{d+1}'^{[I_i]}) \right|}{(4) Inductive function estimation error}. \notag
\end{align}
}

\textbf{PB-MDP Value Convergence:} Second, we show that \cref{eq:ssw_to_m} is satisfied, which is an adapted and substantially modified proof of sparse sampling convergence \cite{kearns2002sparse}.
Once again, we split the difference between the SN estimator and the $Q_{\mathbf{M}_{\mathbf{P}}}^*$ function into two terms, the reward estimation error (A) and the next-step value estimation error (B):
\begin{align}
  |Q^*_{\mathbf{M}_{\mathbf{P}},d}(\bar{b}_d,a)- \hat{Q}_{\omega,d}^*(\bar{b}_d,a) | \leq& \undb{\left| \rho(\bar{b}_d,a) - \rho(\bar{b}_d,a)\right|}{(A) = 0} \\
  &+ \gamma \undb{\left| \E_{\mathbf{M}_{\mathbf{P}}}[V_{\mathbf{M}_{\mathbf{P}},d+1}^*(\bar{b}_{d+1})\mid \bar{b}_{d}, a] - \frac{1}{C}\sum_{i=1}^{C} \hat{V}^*_{\omega,d+1}(\bar{b}_{d+1}'^{[I_i]}) \right|}{(B)} \notag
\end{align}

Since our particle belief MDP induces no reward estimation error, the term (A) is always 0 and proving the base case $d=D-1$ is trivial as (A) and (B) are both 0.
Then, we show that the difference (B) is bounded for all $d=0,\ldots,D-1$.
We use the triangle inequality repeatedly to separate it into two terms; (1) the MC transition approximation error, and (2) the inductive function estimation error (detailed definitions and bounds of each terms are in \cref{app:theorem2}):
{\small
\begin{align}
    &\undb{\left| \E_{\mathbf{M}_{\mathbf{P}}}[V_{\mathbf{M}_{\mathbf{P}},d+1}^*(\bar{b}_{d+1})\mid \bar{b}_{d}, a] - \frac{1}{C}\sum_{i=1}^{C} \hat{V}^*_{\omega,d+1}(\bar{b}_{d+1}'^{[I_i]}) \right|}{(B)}\\
    &\leq \undb{\left| \E_{\mathbf{M}_{\mathbf{P}}}[V_{\mathbf{M}_{\mathbf{P}},d+1}^*(\bar{b}_{d+1})\mid \bar{b}_{d}, a] - \frac{1}{C}\sum_{i=1}^{C} V_{\mathbf{M}_{\mathbf{P}},d+1}^*(\bar{b}_{d+1}'^{[I_i]}) \right|}{(1) MC transition approximation error} + \undb{\left| \frac{1}{C}\sum_{i=1}^{C} V_{\mathbf{M}_{\mathbf{P}},d+1}^*(\bar{b}_{d+1}'^{[I_i]}) - \frac{1}{C}\sum_{i=1}^{C} \hat{V}^*_{\omega,d+1}(\bar{b}_{d+1}'^{[I_i]}) \right|}{(2) Inductive function estimation error}. \notag
\end{align}
}

Combining the probability bounds used in both of these procedures results in a worst case $\mathcal{O}(\exp(-t \cdot C))$ probability factor, where $t$ is some constant, as both the SN concentration bound and the Hoeffding bound are exponentially decaying.
Since this upper bound on the estimation error needs to hold for all steps $d=0,\ldots, D-1$, we must apply the worst case union bound on the probability to ensure that every node in the tree achieves the desired concentration bound.
This results in a worst case probability factor that is $\mathcal{O}(C^D)$.
Therefore, we can obtain the $Q$-value estimator concentration inequality, with convergence rate $\mathcal{O}(C^{D}\exp(-t\cdot C))$.
\end{proof}

\section{Particle Belief MDP Approximation Guarantees}
\label{sec:pomdp-guarantees}
In this section, we establish the theoretical guarantees for using any approximately optimal MDP planning algorithm to solve the POMDP problem $\mathbf{P}$ by planning in the particle belief MDP $\mathbf{M}_{\mathbf{P}}$.
\cref{thm:pomdp-qvalue} shows that the $Q$-values $Q_{\mathbf{P}}^*$ and $Q_{\mathbf{M}_{\mathbf{P}}}^*$ are close to each other with high probability, and \cref{thm:pomdp-grand} shows that using any approximately optimal MDP planning algorithm $\mathcal{A}$ in the particle belief MDP $\mathbf{M}_{\mathbf{P}}$ as a policy yields near-optimal value in the original POMDP if applied repeatedly in a closed loop with the environment and an exact belief updater.

\subsection{Particle Belief MDP $Q$-Value Approximation Optimality}
We introduce \cref{thm:pomdp-qvalue}, which probabilistically bridges the POMDP $\mathbf{P}$ and its corresponding particle belief MDP $\mathbf{M}_{\mathbf{P}}$.
In essence, this theorem claims that the two optimal $Q$-values $Q_{\mathbf{P}}^*$ and $Q_{\mathbf{M}_{\mathbf{P}}}^*$ are close with high probability, because creating a very accurate $Q$-value estimator via Sparse Sampling-$\omega$ that is close to both $Q_{\mathbf{P}}^*$ and $Q_{\mathbf{M}_{\mathbf{P}}}^*$ happens with high probability.

\begin{theorem}[Particle Belief MDP $Q$-Value Approximation Optimality]\label{thm:pomdp-qvalue}
    Given a finite horizon POMDP $\mathbf{P}$ and its corresponding particle belief MDP $\mathbf{M}_{\mathbf{P}}$, there exists a number of particles $C$ for which the optimal $Q$-value of the POMDP problem $Q_{\mathbf{P}}^*(b,a)$ can be approximated by the optimal $Q$-value of the particle belief MDP problem $Q_{\mathbf{M}_{\mathbf{P}}}^*(\bar{b},a)$ with arbitrary precision. 
    Namely, under the regularity conditions \ref{req:space}-\ref{req:realizable}, the following bound holds for a given realizable belief $b$, corresponding sampled particle belief $\bar{b}$, and all available actions $a$ with probability at least $1-\delta_{\mathbf{M}_{\mathbf{P}}}$ for a desired accuracy $\epsilon_{\mathbf{M}_{\mathbf{P}}}$:
    \begin{align}
        |Q_{\mathbf{P}}^*(b,a) - Q_{\mathbf{M}_{\mathbf{P}}}^*(\bar{b},a)| \leq \epsilon_{\mathbf{M}_{\mathbf{P}}}.
    \end{align}
\end{theorem}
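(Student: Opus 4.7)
The plan is to invoke \cref{thm:ssw} as a purely analytical bridge and then close the gap with a one-line triangle inequality. Sparse Sampling-$\omega$ produces a single random estimator $\hat{Q}^*_\omega(\bar b, a)$ that concentrates around both $Q_{\mathbf{P}}^*(b,a)$ and $Q_{\mathbf{M}_{\mathbf{P}}}^*(\bar b, a)$ on a common high-probability event, so the two deterministic targets must themselves be within $\epsilon_{\mathbf{M}_{\mathbf{P}}}$ of one another. No new concentration analysis is required beyond what is already established in \cref{thm:ssw}.

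First, I would set $\lambda := \epsilon_{\mathbf{M}_{\mathbf{P}}}(1-\gamma)/2$ and the failure probability $\delta := \delta_{\mathbf{M}_{\mathbf{P}}}$. Using the explicit dependence of $C$ on $(\lambda, \delta)$ given in the statement of \cref{thm:ssw}, I would pick $C$ large enough that both bounds \cref{eq:ssw-powss} and \cref{eq:ssw-ss} hold simultaneously at depth $d=0$ for every $a \in A$ with probability at least $1 - \delta_{\mathbf{M}_{\mathbf{P}}}$. The initial particle belief $\bar b$ is constructed exactly as in the \textsc{Plan} routine of \cref{alg:aux-ss}: $C$ i.i.d.\ states drawn from $b$ with uniform weights $1/C$, so the hypotheses of \cref{thm:ssw} are met at the root.

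Second, on the joint event guaranteed by \cref{thm:ssw}, I would apply the triangle inequality at the root for every action $a$:
\begin{align*}
|Q_{\mathbf{P}}^*(b,a) - Q_{\mathbf{M}_{\mathbf{P}}}^*(\bar b, a)|
&\leq |Q_{\mathbf{P}}^*(b,a) - \hat Q^*_\omega(\bar b, a)| + |\hat Q^*_\omega(\bar b, a) - Q_{\mathbf{M}_{\mathbf{P}}}^*(\bar b, a)| \\
&\leq \frac{\lambda}{1-\gamma} + \frac{\lambda}{1-\gamma} = \epsilon_{\mathbf{M}_{\mathbf{P}}},
\end{align*}
which is exactly the claim. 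Propagating the parameter translation backwards through \cref{thm:ssw} then reproduces the convergence rate $1 - \mathcal{O}(C^{D}\exp(-t\cdot C))$ advertised in the introduction.

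There is no genuine technical obstacle; all of the real analytic labor has already been carried out inside \cref{thm:ssw}. The one subtlety worth emphasizing in the writeup is that both error bounds in \cref{thm:ssw} must be read off of \emph{the same} random run of Sparse Sampling-$\omega$, so that combining them via the triangle inequality incurs no additional union-bound penalty on the failure probability. That is precisely what the ``coupled'' part of the coupled convergence theorem delivers: a common random witness that forces the two deterministic quantities $Q_{\mathbf{P}}^*$ and $Q_{\mathbf{M}_{\mathbf{P}}}^*$ to agree up to $\epsilon_{\mathbf{M}_{\mathbf{P}}}$, even though the algorithm itself is never intended to be executed.
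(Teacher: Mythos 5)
Your proposal is correct and mirrors the paper's own argument: both invoke \cref{thm:ssw} to obtain a single Sparse Sampling-$\omega$ run whose root estimate $\hat{Q}^*_\omega(\bar b,a)$ is simultaneously within $\lambda/(1-\gamma)$ of $Q^*_{\mathbf{P}}$ and $Q^*_{\mathbf{M}_{\mathbf{P}}}$, then close with the triangle inequality after choosing $\lambda$ so that $2\lambda/(1-\gamma)=\epsilon_{\mathbf{M}_{\mathbf{P}}}$. Your emphasis that the two bounds come from the same coupled event (so no extra union-bound cost) is exactly the point the paper relies on, and the resulting $1-\mathcal{O}(C^{D}\exp(-t\cdot C))$ rate matches.
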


\begin{proof} 
The main idea of the proof is that we bridge the two $Q$-values, $Q_{\mathbf{P}}^*$ and $Q_{\mathbf{M}_{\mathbf{P}}}^*$, via approximation through Sparse Sampling-$\omega$ with $C$ particles. 
From \cref{thm:ssw}, we have established that there exists an algorithm, Sparse Sampling-$\omega$, which is jointly optimal in both senses of POMDP $\mathbf{P}$ and its corresponding particle belief MDP $\mathbf{M}_{\mathbf{P}}$.
Then, if we were to hypothetically perform Sparse Sampling-$\omega$ of depth $D$, the sum of the errors between the three types of $Q$-values at the root node, $Q_{\mathbf{P}}^*$, $Q_{\mathbf{M}_{\mathbf{P}}}^*$ and $\hat{Q}_{\omega}^*$, are jointly bounded with probability at least $1-\delta_{\mathbf{M}_{\mathbf{P}}}$ through \cref{thm:ssw}, where $\delta_{\mathbf{M}_{\mathbf{P}}}=\delta$ for notational clarity in this context.
We use the fact that $Q_{\mathbf{P}}^*$ and $Q_{\mathbf{M}_{\mathbf{P}}}^*$ are the optimal $Q$-values at $d=0$ for the POMDP and PB-MDP, respectively:
\begin{align}
  |Q_{\mathbf{P}}^*(b,a) - Q_{\mathbf{M}_{\mathbf{P}}}^*(\bar{b},a)| &\leq |Q_{\mathbf{P}}^*(b,a) - \hat{Q}_{\omega}^*(\bar{b},a)| + |\hat{Q}_{\omega}^*(\bar{b},a) - Q_{\mathbf{M}_{\mathbf{P}}}^*(\bar{b},a)|\\
  &\equiv |Q^*_{\mathbf{P},0}(b_0,a) - \hat{Q}_{\omega,0}^*(\bar{b}_0,a)| + |Q^*_{\mathbf{M}_{\mathbf{P}},0}(\bar{b}_0,a)- \hat{Q}_{\omega,0}^*(\bar{b}_0,a)|\\
  &\leq \frac{2\lambda}{1-\gamma} \;\;\equiv \epsilon_{\mathbf{M}_{\mathbf{P}}}.
\end{align}
Since this bound holds with high probability for creating any hypothetical Sparse Sampling-$\omega$ tree, this must mean that $|Q_{\mathbf{P}}^*(b,a) - Q_{\mathbf{M}_{\mathbf{P}}}^*(\bar{b},a)| \leq \epsilon_{\mathbf{M}_{\mathbf{P}}}$ in general with high probability.

The convergence rate of $\delta_{\mathbf{M}_{\mathbf{P}}}$ is $\mathcal{O}(C^{D}\exp(-\tilde{t} \cdot C))$.
This means that as we increase the number of particles, we can expect better performance by approximately solving a POMDP via particle belief approximation.
\end{proof}

\subsection{Particle Belief MDP Planning Optimality}
\begin{figure}[t]
    \centering
    \includegraphics[width=\textwidth]{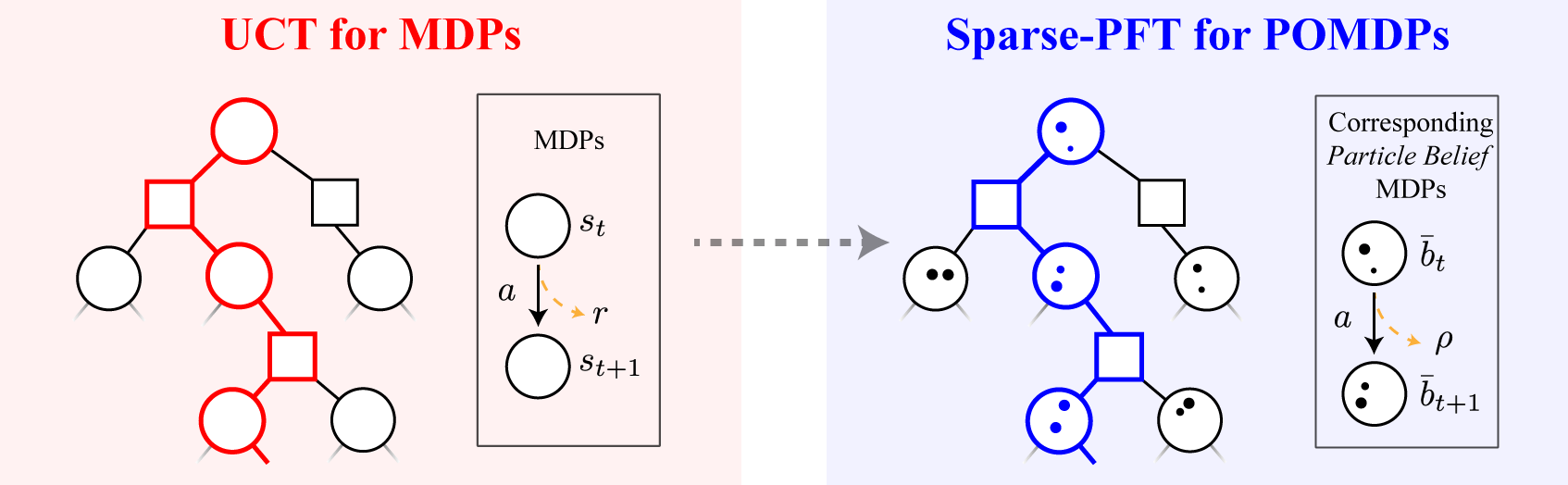}
    \caption{Illustration of promoting an MDP algorithm (UCT) into a POMDP algorithm (Sparse-PFT). 
    This practically only involves changing the transition generative model into a particle filtering-based transition generative model that deals with particle belief states.}
    \label{fig:uct_to_pft}
\end{figure}

\begin{corollary}[Particle Belief MDP Planning Optimality]\label{cor:planning}
    Under regularity conditions necessary for both the particle belief MDP and an MDP planning algorithm $\mathcal{A}$, if the optimal planner can approximate $Q$-values with arbitrary precision $\epsilon_{\mathcal{A}}$ with probability at least $1-\delta_{\mathcal{A}}$ in the corresponding particle belief MDP of a given POMDP, then the planning algorithm can approximate the POMDP $Q$-values within $\epsilon_{\mathbf{M}_{\mathbf{P}}} + \epsilon_{\mathcal{A}}$ with probability at least $1-\delta_{\mathbf{M}_{\mathbf{P}}}-\delta_{\mathcal{A}}$:
    \begin{align}
        |Q_{\mathbf{P}}^*(b,a) - \hat{Q}_{\mathbf{M}_{\mathbf{P}}}^{\mathcal{A}}(\bar{b},a)| \leq \epsilon_{\mathbf{M}_{\mathbf{P}}} + \epsilon_{\mathcal{A}}.
    \end{align}
\end{corollary}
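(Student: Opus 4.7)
The plan is to prove \cref{cor:planning} as a direct consequence of \cref{thm:pomdp-qvalue} together with the hypothesis on the algorithm $\mathcal{A}$, using nothing more than the triangle inequality and a union bound over two high-probability events. Conceptually, \cref{thm:pomdp-qvalue} already does the heavy lifting of transferring value approximations between the POMDP and its PB-MDP; the corollary just inserts one additional layer of approximation, the algorithm's own estimation error in the PB-MDP, between them.

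First, I would introduce the natural intermediate quantity $Q_{\mathbf{M}_{\mathbf{P}}}^*(\bar{b},a)$ and decompose
\begin{equation*}
    |Q_{\mathbf{P}}^*(b,a) - \hat{Q}_{\mathbf{M}_{\mathbf{P}}}^{\mathcal{A}}(\bar{b},a)|
    \;\leq\; |Q_{\mathbf{P}}^*(b,a) - Q_{\mathbf{M}_{\mathbf{P}}}^*(\bar{b},a)|
    \;+\; |Q_{\mathbf{M}_{\mathbf{P}}}^*(\bar{b},a) - \hat{Q}_{\mathbf{M}_{\mathbf{P}}}^{\mathcal{A}}(\bar{b},a)|.
\end{equation*}
Call the first summand the \emph{approximation error} (POMDP versus PB-MDP) and the second the \emph{estimation error} (PB-MDP versus the algorithm's output). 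Let $E_1$ be the event that the approximation error is at most $\epsilon_{\mathbf{M}_{\mathbf{P}}}$ and $E_2$ the event that the estimation error is at most $\epsilon_{\mathcal{A}}$.

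Next, I would invoke \cref{thm:pomdp-qvalue} to conclude $\Pr(E_1)\geq 1-\delta_{\mathbf{M}_{\mathbf{P}}}$, provided $C$ is chosen large enough according to the explicit formula in \cref{thm:ssw} (this fixes the number of particles used to build $\bar{b}$). By hypothesis on $\mathcal{A}$ applied to the PB-MDP $\mathbf{M}_{\mathbf{P}}$ with that same $\bar{b}$, I would then conclude $\Pr(E_2)\geq 1-\delta_{\mathcal{A}}$. A union bound yields $\Pr(E_1\cap E_2)\geq 1-\delta_{\mathbf{M}_{\mathbf{P}}}-\delta_{\mathcal{A}}$, and on this joint event the triangle inequality above gives the stated bound $\epsilon_{\mathbf{M}_{\mathbf{P}}}+\epsilon_{\mathcal{A}}$.

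Since both steps are already established results, there is no genuine obstacle; the only subtlety worth flagging is to ensure the two events are defined with respect to a common $C$-sized particle belief $\bar{b}$ so that a union bound is actually valid. One should also note that the randomness in $E_1$ comes from the hypothetical Sparse Sampling-$\omega$ construction used internally in the proof of \cref{thm:pomdp-qvalue}, while the randomness in $E_2$ comes from $\mathcal{A}$; assuming these are run on the same fixed $\bar{b}$, their failure probabilities add independently of whether the underlying random draws are coupled. This modularity is precisely what makes the PB-MDP abstraction useful: any MDP planner with a high-probability $Q$-value guarantee inherits a POMDP guarantee at the cost of a single extra $\epsilon_{\mathbf{M}_{\mathbf{P}}}+\delta_{\mathbf{M}_{\mathbf{P}}}$ penalty.
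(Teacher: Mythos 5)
Your proposal is correct and matches the paper's own argument, which is exactly the triangle inequality through the intermediate quantity $Q_{\mathbf{M}_{\mathbf{P}}}^*(\bar{b},a)$ combined with a worst-case union bound over the two failure events. The paper states this in a single sentence; you simply spell out the same decomposition and the bookkeeping about a common particle belief $\bar{b}$, which is a reasonable elaboration rather than a different route.
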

\begin{proof}
  This is a straightforward application of triangle inequality for the $Q$-value estimation accuracy and worst case union bound for the probability.
\end{proof}
Note that it would also be possible to devise an expected value version of the bounds by converting the probability statement into an expected value statement.

Essentially, \cref{cor:planning} means that we can use any approximately optimal MDP planning algorithm to solve the POMDP problem by planning in the particle belief MDP instead, and still retain similar optimality guarantees.
The most remarkable thing about this result is that it does not directly depend on the size of the state space nor the observation space.
However, the dependence may indirectly come through the observation density and thus the R\'enyi divergence factor, and in practice, the generative model sampling complexity often depends on the dimensionality of the state space.
Moreover, even though this approach is insensitive to the state and observation space size, the guarantees and practical algorithms are highly sensitive to the planning horizon $D$.

In most practical cases, this method would usually incur an additional $\mathcal{O}(C)$ compute time factor in a given transition sampling step as single particle belief state generation now needs to propagate $C$ particles forward instead of a single particle/state.
Moreover, if the algorithm requires storing the beliefs, the memory requirements are increased by an $\mathcal{O}(C)$ factor compared with the MDP algorithm.
Fortunately, as demonstrated in \cref{sec:numerical}, a modest number of particles often gives adequate performance in practice.

Proving \cref{cor:planning} allows us to prove \cref{thm:pomdp-grand} with additional results from Kearns \textit{et al.} (\citeyear{kearns2002sparse}) and Singh and Yee~(\citeyear{Singh1994}). 
Through the near-optimality of the $Q$-functions, we conclude that the value obtained by employing a near-optimal MDP policy in the PB-MDP is also near-optimal in the original POMDP with further assumptions on the closed-loop POMDP system.
In this context, we mean a near-optimal MDP planning algorithm $\mathcal{A}$ to be an algorithm with small values of $\epsilon_{\mathcal{A}},\delta_{\mathcal{A}}$ that would satisfy the conditions required in the proof of the theorem.
Examples of such algorithms include sparse sampling \cite{kearns2002sparse} and others \cite{bjarnason2009lower,couetoux2011double}.
The detailed proof for \cref{thm:pomdp-grand} is in \cref{app:theorem4}.

\begin{theorem}[Particle Belief MDP Approximate Policy Convergence] \label{thm:pomdp-grand}
    Suppose a near-optimal MDP planning algorithm $\mathcal{A}$ is used to plan with particle belief MDP $\mathbf{M}_{\mathbf{P}}$ repeatedly in a closed loop with POMDP environment $\mathbf{P}$ and an exact Bayesian belief updater to process observations from the environment. Further assume that regularity conditions \ref{req:space}-\ref{req:realizable} are met for $\mathbf{M}_{\mathbf{P}}$ and that $\mathcal{A}$ can approximate the $Q$-values of $\mathbf{M}_{\mathbf{P}}$ with arbitrary precision $\epsilon_\mathcal{A}$ with probability at least $1-\delta_\mathcal{A}$.
    Then, for any $\epsilon > 0$, we can choose $C$ such that the value obtained by planning with $\mathcal{A}$ in $\mathbf{M}_{\mathbf{P}}$ is within $\epsilon$ of the optimal POMDP value function at $b_0$:
    \begin{align}
        V_{\mathbf{P}}^*(b_0)-V_{\mathbf{M}_{\mathbf{P}}}^{\mathcal{A}}(b_0) &\leq \epsilon.
    \end{align}
\end{theorem}

\subsection{Sparse Particle Filter Tree (Sparse-PFT)}

\begin{algorithm}[t]
\algrenewcommand\algorithmicprocedure{\textbf{algorithm}}
\caption{Sparse-PFT Algorithm}\label{alg:spt}
\begin{multicols}{2}
\small
\textbf{Global Variables:} $\gamma,n,c_\textsc{UCB},\beta_\textsc{UCB},G,D$.\\

\textbf{Procedure:} \textsc{Plan}($b$)\\
\textbf{Input:} Belief $b$.\\
\textbf{Output:} An action $a$.
\begin{algorithmic}[1]
    \For{$i = 1,\ldots,C$}
        \State $s \leftarrow \text{sample from } b$
        \State $\bar{b} \leftarrow \bar{b} \cup \{(s, 1/C)\}$
    \EndFor
    \For{$i = 1,\ldots,n$}
        \State \textsc{Simulate}($\bar{b}, 0$)
    \EndFor
    \State \Return $a \leftarrow \argmax_{a\in C(b)} Q(b,a)$
\end{algorithmic}
\columnbreak
\textbf{Procedure:} \textsc{Simulate}($\bar{b}, d$)\\
\textbf{Input:} particle belief set $\bar{b}=\{(s_i,w_i)\}$, depth $d$.\\
\textbf{Output:} A scalar $q$ that is the total discounted reward of one simulated trajectory sample.
\begin{algorithmic}[1]
    \If{$d=D$}
        \State \Return 0
    \EndIf
    \State $a \leftarrow \argmax_{a\in C(\bar{b})} Q(\bar{b},a) + c_\textsc{UCB}\frac{N(\bar{b})^{\beta_\textsc{UCB}}}{\sqrt{N(\bar{b},a)}}$ \label{line:ucb}
    \If{$|C(\bar{b},a)| = C$}
        \State $\bar{b}', \rho \leftarrow $ sample from $C(\bar{b},a)$
    \Else 
        \State $\bar{b}', \rho \leftarrow \textsc{GenPF}(\bar{b}, a)$
        \State $C(\bar{b},a) \leftarrow C(\bar{b},a) \cup \{(\bar{b}', \rho)\}$
    \EndIf
    \If{$N(\bar{b}) = 0$} 
        \State $q \leftarrow \rho + \gamma \cdot \textsc{Rollout}(\bar{b}', d-1)$
    \Else 
        \State $q \leftarrow \rho + \gamma \cdot \textsc{Simulate}(\bar{b}', d-1)$ 
    \EndIf
    \State $N(\bar{b}) \leftarrow N(\bar{b}) + 1$
    \State $N(\bar{b}, a) \leftarrow N(\bar{b}, a) + 1$
    \State $Q(\bar{b}, a) \leftarrow Q(\bar{b}, a) + \frac{q - Q(\bar{b}, a)}{N(\bar{b}, a)}$
    \State \Return $q$
\end{algorithmic}
\end{multicols}
\end{algorithm}

By utilizing the results in \cref{thm:pomdp-qvalue} and \cref{thm:pomdp-grand}, we can promote a variant of sampling-based MDP planning algorithm Upper Confidence Tree (UCT), Sparse UCT \cite{bjarnason2009lower}, into Sparse Particle Filter Tree (Sparse-PFT) and retain similar convergence guarantees for the POMDP \cite{kocsis2006bandit,bjarnason2009lower,shah2020nonasymptotic}.
This results in an algorithm that is simple to implement, and enjoys both theoretical guarantees and high performance in practice.

The entry point of Sparse-PFT is the \textsc{Plan} procedure which repeatedly calls the the \textsc{Simulate} procedure to construct the tree and choose an action.
Both of these procedures are defined in \cref{alg:spt}.
The set of global variables for Sparse-PFT includes the same global variables used for Sparse Sampling-$\omega$ with the addition of $n$, the number of tree search queries, and $c_\textsc{UCB}$ and $\beta_\textsc{UCB}$, the polynomial Upper Confidence Bound parameters that determine the amount of exploration in \cref{line:ucb} \citep{shah2020nonasymptotic}.
The \textsc{Simulate} function is analogous to the function of the same name from UCT \cite{kocsis2006bandit,bjarnason2009lower}, with the only difference being that Sparse-PFT manages particle belief sets through \textsc{GenPF} (defined in \cref{alg:ssw}) rather than states directly.

In the above algorithm definition, $C(\cdot)$ represents the list of children nodes, $N(\cdot)$ the number of visits to the node, $Q(\cdot)$ the estimated $Q$-value at the node, and $c_{\textsc{UCB}}$ the Upper Confidence Bound exploration parameter.
These lists are all implicitly initialized to 0 or $\emptyset$.
The \textsc{Rollout} procedure is an optional heuristic that runs a simulation with a heuristic rollout policy for $d$ steps to estimate the value, while avoiding building a large computation tree at each step of simulation.

With the introduction of Sparse-PFT, we can view the recent POMDP algorithms as practical extensions of Sparse-PFT.
For instance, PFT-DPW \cite{sunberg2018pomcpow} is a simple modification of Sparse-PFT by utilizing the double progressive widening (DPW) technique to additionally handle continuous action spaces, and POMCPOW \cite{sunberg2018pomcpow} is a further extension that plans based on particle trajectory that allows for flexible particle number representations of a given belief node.
However, further theoretical analyses of these algorithms would most likely require more sophisticated techniques and further assumptions.

\section{Numerical Experiments}
\label{sec:numerical}
\begin{figure}[t]
    \centering
    \subfigure[Laser Tag POMDP]{\includegraphics[width=0.49\textwidth]{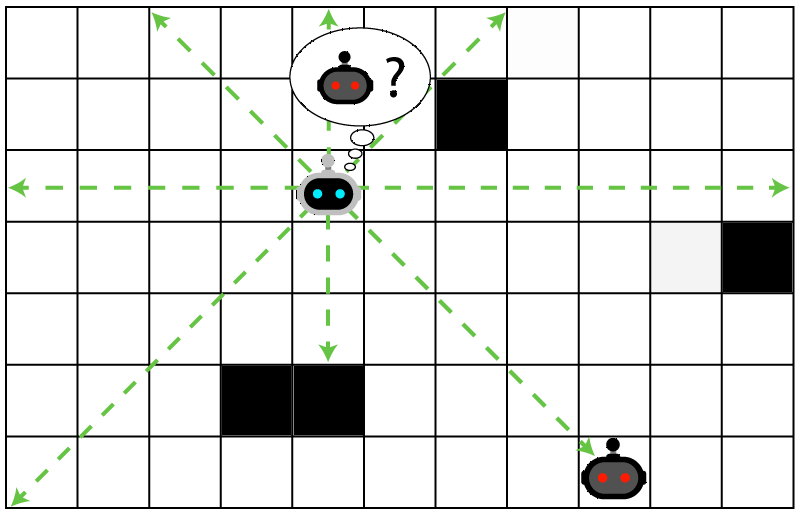}} 
    \subfigure[Light Dark POMDP]{\includegraphics[width=0.49\textwidth]{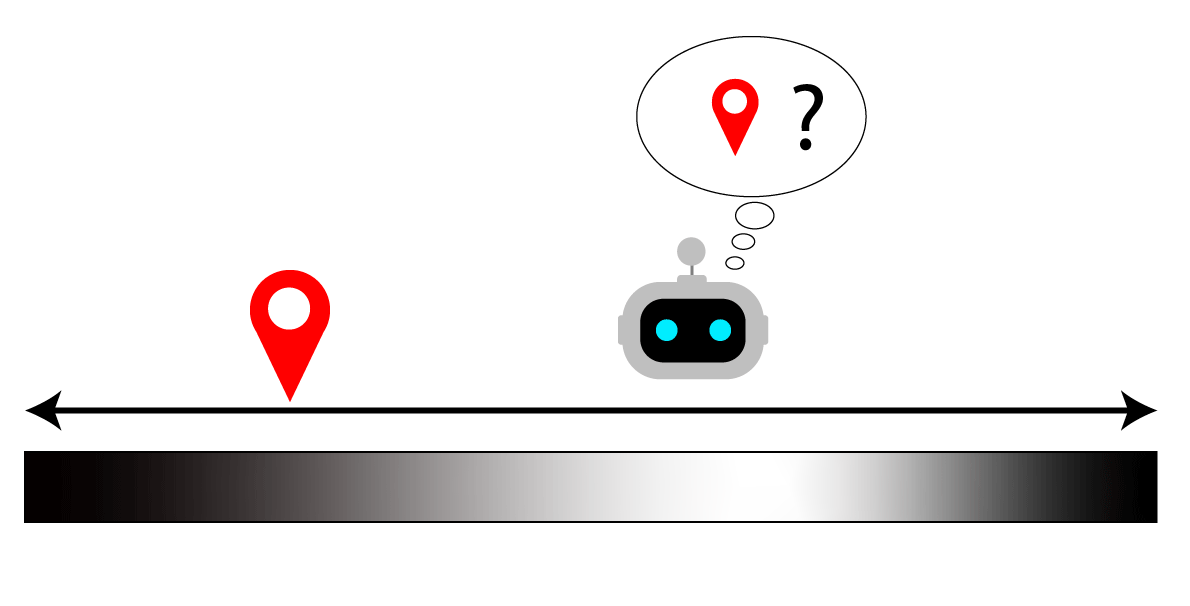}} 
    \subfigure[Sub Hunt POMDP]{\includegraphics[width=0.49\textwidth]{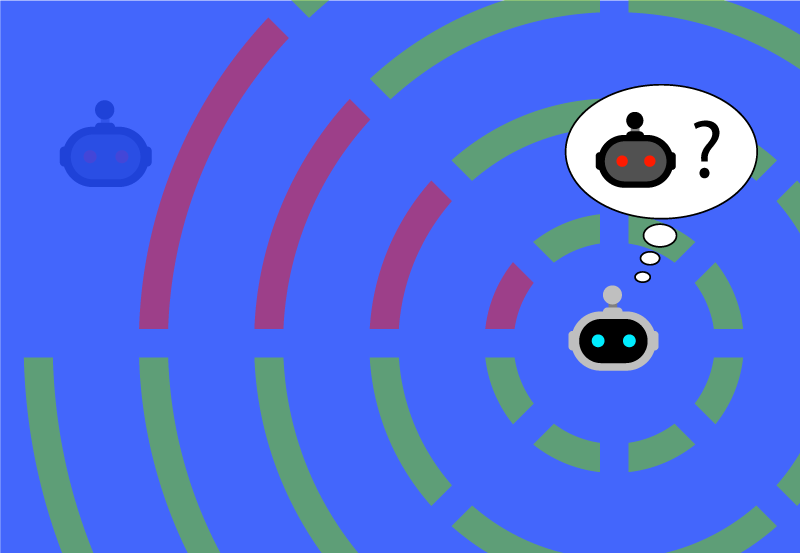}} 
    \subfigure[VDP Tag POMDP]{\includegraphics[width=0.49\textwidth]{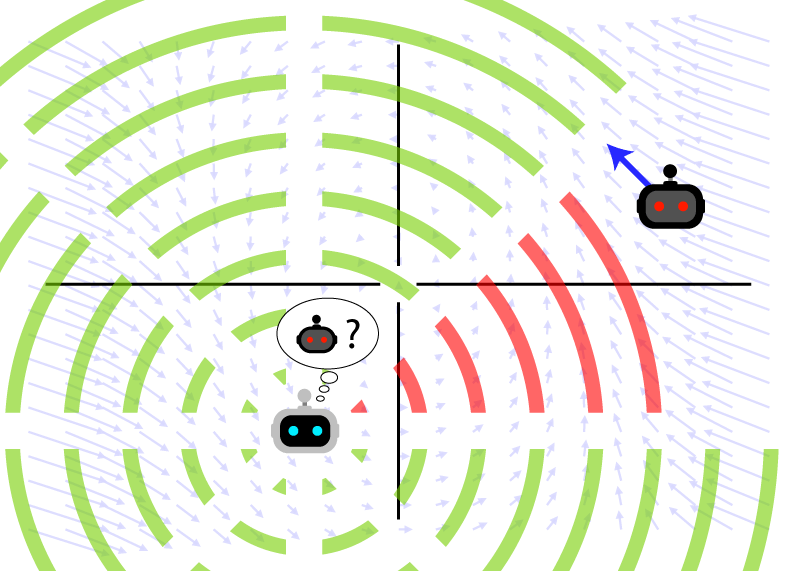}} 
    \caption{Illustration of the four environments we test our algorithms on: (a) Laser Tag, (b) Light Dark, (c) Sub Hunt, and (d) VDP Tag (discrete and continuous versions).}
    \label{fig:problems}
\end{figure}
Numerical simulation experiments were conducted in order to evaluate and compare the performances of our new simple algorithm, Sparse-PFT, along with other solvers.
In particular, we also ran experiments for Adaptive online packing-guided search (AdaOPS) \cite{wu2021adaptive}, a recent solver with practical performance and partial theoretical guarantees.
We also show performances of other hallmark algorithms like QMDP and POMCP along with random policy to demonstrate the need for continuous observation POMDP solvers that can handle more general assumptions.

The following sections contain descriptions of the evaluation problems along with discussion of solver performance.
In all five of the numerical experiments shown in \cref{fig:problems}, the POMDP solvers were limited to at most 1 second of planning time per step.
For closed-loop planning, whenever an observation was received from the environment, the belief was updated with a particle filter independent of the particle filter used in planning, and no part of the planning tree was saved for re-use on subsequent steps as done by \citet{silver2010pomcp}.
Since the observations received from the environment in this outer simulation loop were not generated from state particles in the filter, a larger number particles compared to \textsc{GenPF} are used to ensure likely states are present.
A total of 5000 simulation experiments were conducted for each configuration combination of solver and environment in order to obtain the Monte Carlo mean and standard error estimates for the Laser Tag and VDP tag environments, and 1000 simulation experiments for the Light Dark and Sub Hunt problems since planners typically yielded more consistent performances for these problems.
The tabular summary of all results is given in \cref{tab:experimental_results}, and corresponding figure summary of all results for different planning time allotments is given in \cref{fig:all-plots}.
We also vary belief particle count $C$ with \textsc{Simulate} calls held constant to demonstrate the effect of particle belief approximation resolution on the quality of the resulting policy in \cref{fig:Sparse-PFTParticleSweep} by using the optimized hyperparameters from \cref{tab:hyper} with 1000 simulation experiments for Laser Tag, Light Dark and Sub Hunt, and 100 for VDP Tag and Discrete VDP Tag.
The open source code for the experiments is built on the POMDPs.jl framework \cite{egorov2017pomdps}, and is available at: \texttt{github.com/WhiffleFish/PFTExperiments}.
The hyperparameter values used for the experiments are shown in \cref{app:experiments}.

\begin{figure}[p]
    \centering
    \includegraphics[width=0.95\textwidth]{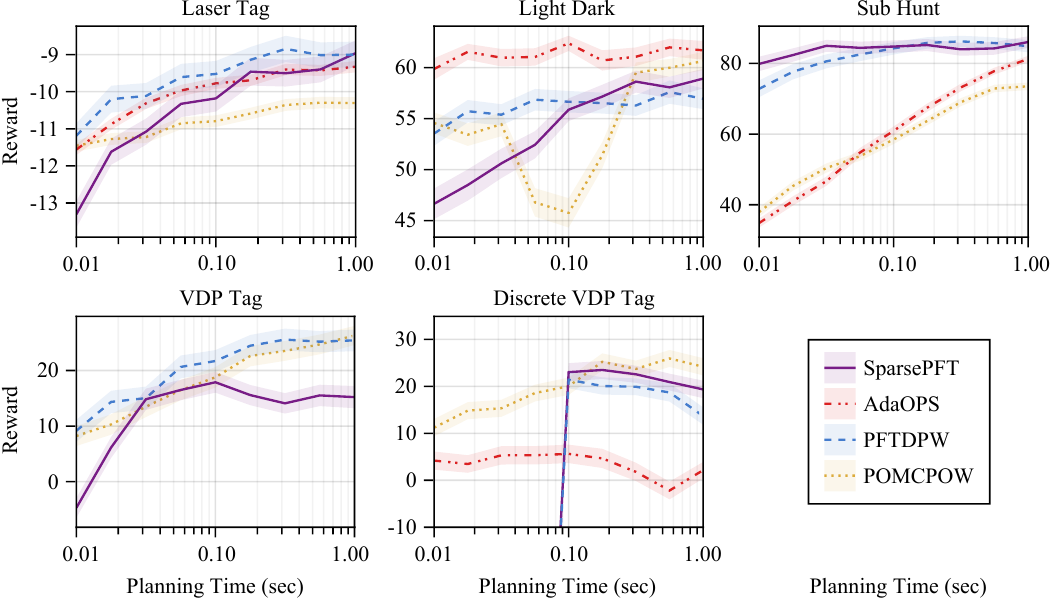}
    \caption{Comparative benchmark results summary over different planning time allotments. 
    Ribbons surrounding the Monte Carlo benchmark mean estimates indicate two standard error confidence bands. 
    Planning times are given in log scale.}
    \label{fig:all-plots}
\end{figure}
\begin{table}[p]
    \small
    \centering
    \begin{tabularx}{\linewidth}{lrlrlrl}
        \toprule
        & Laser Tag \makebox[0pt][l]{(D, D, D)} & & Light Dark \makebox[0pt][l]{(D, D, C)} & & Sub Hunt \makebox[0pt][l]{(D, D, C)}\\
        \midrule
        \textbf{Sparse-PFT} & \resultbest{-8.96}{0.17}{100}{1.0}     & \result{58.9}{0.5}{96}{0.96}      & \resultbest{86.0}{0.8}{100}{1.00}\\
        PFT-DPW   & \resultbest{-8.99}{0.07}{99}{0.99}     & \result{56.9}{0.5}{93}{0.93}      & \result{84.9}{0.9}{98}{0.98}\\
        POMCPOW   & \result{-10.3}{0.08}{81}{0.81}     & \result{60.6}{0.4}{98}{0.98}      & \result{73.5}{0.7}{78}{0.78}\\
        AdaOPS  & \resultbest{\dag-9.33}{0.08}{90}{0.90}  & \resultbest{61.7}{0.5}{100}{1.00} & \result{81.3}{0.6}{92}{0.92}\\
        QMDP               & \result{-10.4}{0.08}{80}{0.80}     & \result{3.28}{0.5}{24}{0.24}     & \result{28.0}{0.6}{1}{0.01}\\
        POMCP              & \result{-16.0}{0.09}{1}{0.01}      & \result{-14.86}{2.3}{1}{0.01}     & \result{30.0}{0.8}{3}{0.03}\\
        \textcolor{gray}{Random Policy} & \resultbad{-51.0}{0.18} & \resultbad{-85.0}{0.72}         & \resultbad{4.20}{0.27}\\
        
        \midrule
        & VDP Tag \makebox[0pt][l]{(C, C, C)}  & & VDP Tag$^D$ \makebox[0pt][l]{(C, D, C)} & \\
        \midrule
        \textbf{Sparse-PFT}              &  \result{15.2}{1.0}{58}{0.58}     & \result{19.3}{0.9}{78}{0.78}      \\
        PFT-DPW                &  \resultbest{25.4}{1.0}{97}{0.97}          & \result{13.7}{0.9}{53}{0.53}      \\
        POMCPOW                &  \resultbest{26.3}{0.9}{100}{1.00}         & \resultbest{24.2}{0.9}{100}{1.00} \\
        AdaOPS                          & \noresult{}                       & \result{2.0}{0.4}{1}{0.01}       \\
        \textcolor{gray}{Random Policy} & \resultbad{-66.8}{0.24}           & \resultbad{-66.6}{0.25}           \\
        \bottomrule
    \end{tabularx}
    \caption{Tabular comparative benchmark summary. 
    All results are given as (mean $\pm$ standard error) for a 1 second planning time allotment.
    The algorithm(s) with the best average performance within one standard error is shown in boldface for each experiment.
    The three letters after each problem name indicate whether the state, action, and observation spaces are continuous or discrete, respectively. 
    $\dag$ For AdaOPS, the result obtained by \citet{wu2021adaptive} on Laser Tag was $-8.31 \pm 0.18$, so this entry was also bolded.
    }
    \label{tab:experimental_results}
\end{table}

\subsection{Laser Tag}

\begin{wrapfigure}[15]{r}{0.5\textwidth}
    \centering
    \vspace{-1.4cm}
    \includegraphics[width=0.95\linewidth]{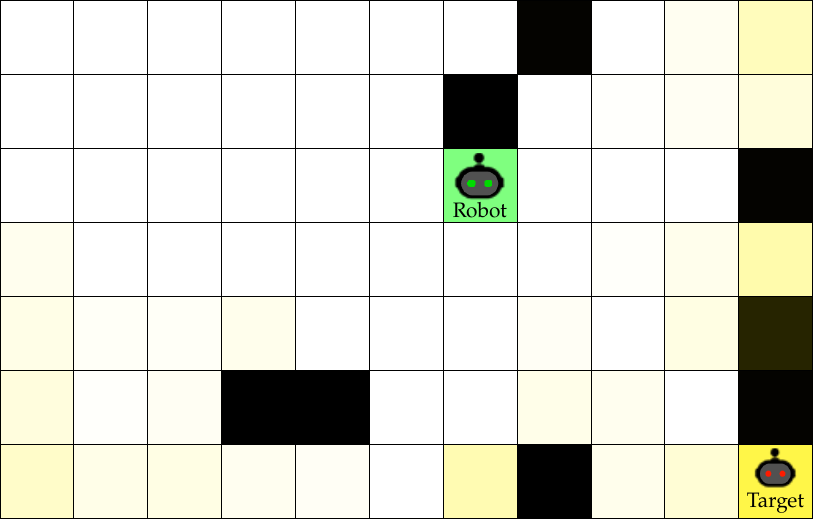}
    \caption{Example Sparse-PFT operating in Laser Tag: as indicated by the belief (yellow), the agent (green) has roughly located the evading robot at bottom right corner.}
    \label{fig:lasertag}
\end{wrapfigure}

The Laser Tag POMDP (\cref{fig:lasertag}) is taken from the DESPOT benchmarks~\cite{ye2017despot} wherein a robot is required to use laser sensors to localize with the ultimate goal of catching an evading robot. 
The agent’s laser sensors extend radially in 8 evenly spaced directions and each return a rounded sensed distance sampled from a normal distribution given by $\mathcal{N}(d, 2.5)$ where $d$ is the true distance to the nearest obstacle. 
Although the observation space is not continuous, it is sufficiently large (on the order of $10^6$) that most online solvers would have to treat this as close to continuous.

From the results, we find that the PFT methods outperform both POMCPOW and AdaOPS: PFT-DPW consistently outperforms both planners across different planning times, and Sparse-PFT outperforms all other planners with increased planning time.
This suggests that for Laser Tag, having a full particle belief approximation rather than dynamically varying particle size is helpful for keeping track of likely particle hypotheses.
Furthermore, this also suggests that the double progressive widening has diminishing returns when the action space has a fixed small size.

We note that POMCP particularly struggles on this problem compared to all other algorithms.
The large observation space forces the trees constructed by POMCP to become extremely shallow due to each unique sampled observation resulting in a new leaf node~\cite{sunberg2018pomcpow}. 
This hinders POMCP’s ability to develop a non-myopic multi-step plan and yield accurate action values, empirically showing the importance of particle weighting.
On the other hand, QMDP exhibits performance similar to the modern solvers. 
While the agent does not initially know its own location, it has sufficient information to localize using the laser sensor observations after some steps, and the evading robot behavior leads it reliably to the corners.
Thus, since this problem requires less active information gathering, the crude QMDP approximation performs well.

\subsection{Light Dark}
\begin{wrapfigure}[14]{R}{0.5\textwidth}
    \centering
    \vspace{-0.5cm}
    \includegraphics[width=\linewidth]{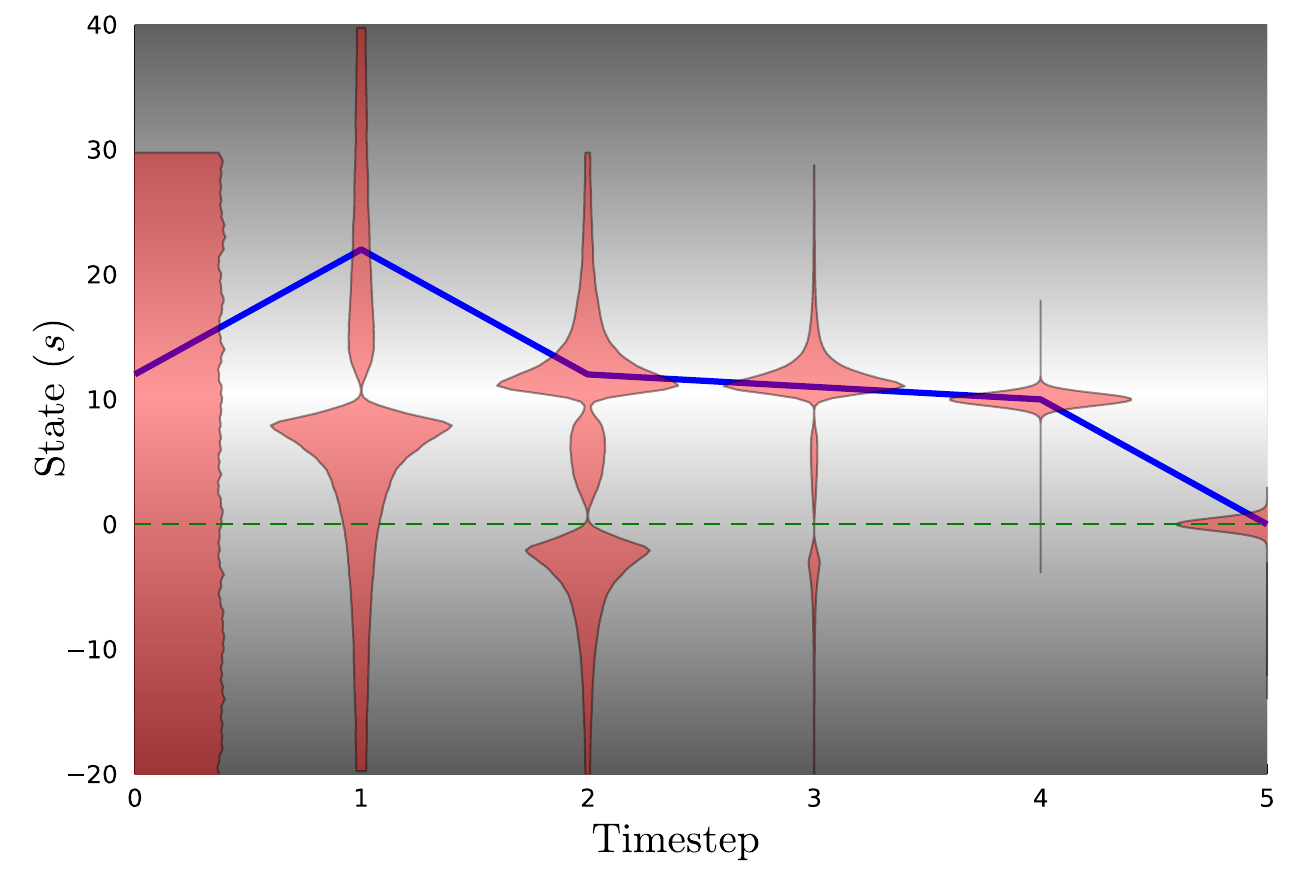}
    \vspace{-0.7cm}
    \caption{Example Sparse-PFT trajectory for Light Dark, successfully localizing in the light region to reach the goal in the dark region.}
    \vspace{-0.7cm}
    \label{fig:lightdark}
\end{wrapfigure}

The 1-dimensional Light Dark POMDP is designed to require active information gathering. The state is an integer representing the position of the agent and the action space is $\mathcal{A} = \{-10,-1,0,1,10\}$. Deterministic transitions are given by  $s' = s + a$. The reward, 
\begin{align}
R(s,a) = \begin{cases} 
    +100 \quad & \text{if } s =0,a=0 \\ 
    -100 \quad & \text{if }s \ne 0,a=0 \\ 
    -1 \quad & \text{otherwise} \\\end{cases},
\end{align}
dictates that the optimal policy drive the state to the origin as quickly as possible. 
Because the state is not immediately known, inferences over the true state must be made over noisy observations that grow in variance proportional to the agent’s distance from the light location at $s=10$. 
The observation distribution is $\mathcal{N}(s, |s-10|+\epsilon)$, where $\epsilon$ is some small constant included to prevent observation weights from reaching $+\infty$ due to a collapse to a Dirac distribution when the agent arrives at the light location.

The planners that yield the highest expected reward in the Light Dark domain roughly follow a 2-step plan: first localizing at the light location, then traveling down to the goal location. Essentially, the light location becomes a necessary subgoal. 
We can demonstrate this by creating a heuristic policy that initially steers towards the light region via certainty-equivalent control, and then takes action $a=-10$ down to the goal. 
This heuristic policy yields an expected reward of $62.0 \pm 0.19$ which is as good or better than any planner shown in \cref{tab:experimental_results}.

Surprisingly, higher planning times do not necessarily correspond to increasing expected rewards in the Light Dark domain. 
For AdaOPS, the solver converges to its peak expected reward with a planning time as low as 0.01 seconds leading to marginal improvement with further increases in planning time. 
Within a planning time interval of $[0.03,0.1]$ seconds, the performance of POMCPOW decreases, indicating that the planner becomes increasingly confident in a suboptimal plan. 
One possible source of this overconfidence is beliefs represented by a single particle. 
This same behavior becomes evident in PFT planners when the PFT planner is supplied with a single rollout value estimation. 
However, by increasing the number of sampled particles that are chosen as the true state in belief-based rollouts, the belief value estimate is granted lower variance and greater accuracy, effectively reducing the time spent exploring suboptimal branches of the constructed tree.

Because Light Dark requires costly information gathering, QMDP performs suboptimally. Specifically, regardless of belief distribution entropy, QMDP myopically steers directly towards the goal location but rarely commits to taking action 0 within the simulation horizon due to high state uncertainty.
POMCP also performs poorly due to the high branching factor introduced by the continuous observation space~\cite{sunberg2018pomcpow}.

\subsection{Sub Hunt}
\begin{wrapfigure}[16]{R}{0.5\textwidth}
    \centering
    \vspace{-0.5cm}
    \includegraphics[width=\linewidth]{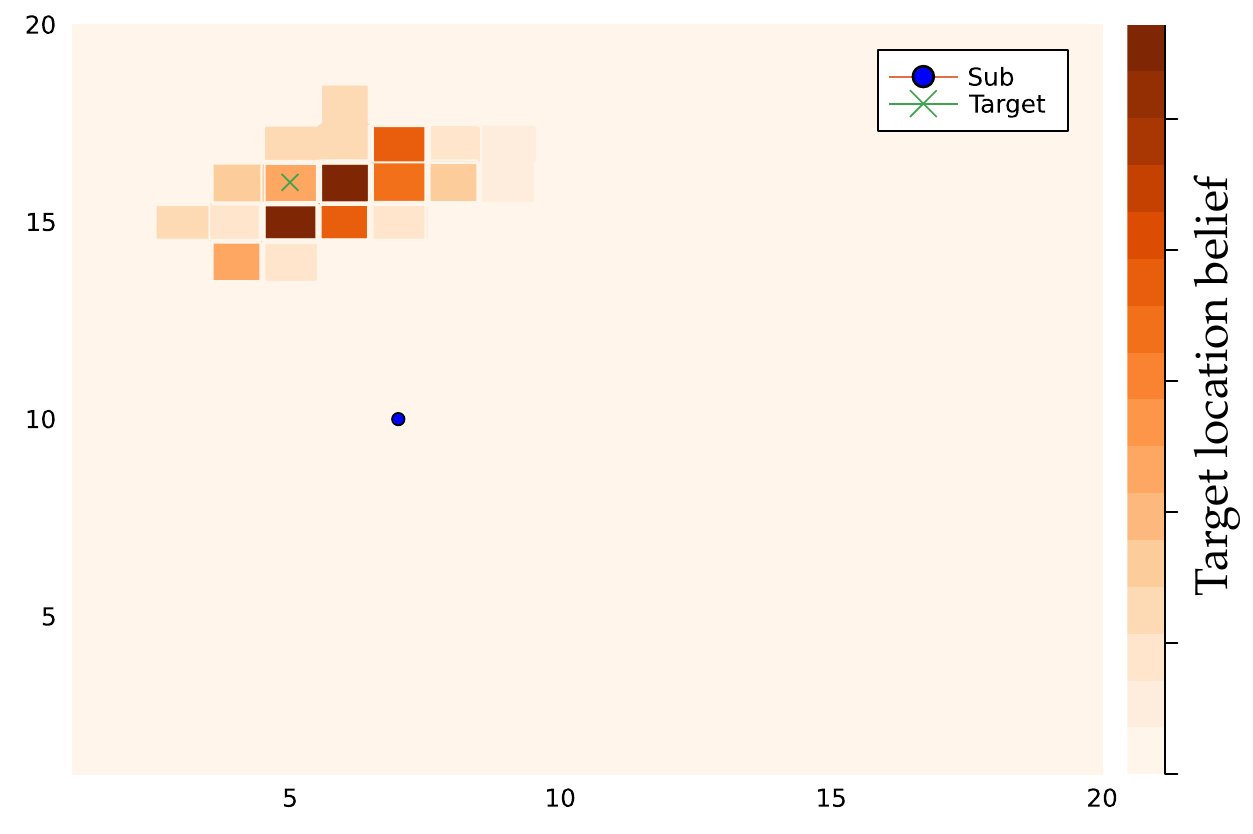}
    \vspace{-1.0cm}
    \caption{Example Sparse-PFT policy behavior for Sub Hunt, where the agent's submarine pursues the target with active information gathering. 
    }
    \label{fig:subhunt}
\end{wrapfigure}

In the Sub Hunt POMDP, from the POMCPOW benchmark~\cite{sunberg2018pomcpow}, the agent controls a submarine with the goal of finding and destroying an opposing submarine. The state space consists of the grid locations of both the agent and enemy submarines, a Boolean determining whether or not the enemy is aware of the agent’s presence, and the enemy’s goal direction $(\{1,\dots,20\}^4 \times \{\text{aware, unaware}\} \times \{N,S,E,W\})$. The agent is given the option to move three steps in any of the four cardinal directions, attack the enemy, or ping the enemy with active sonar while the enemy randomly chooses between taking two steps forward or one step diagonally forward.

In the Sub Hunt domain, PFT methods dominate all other planners over all planning times, with Sparse-PFT having a slight edge over all other planners. 
Because the state space is discrete, value iteration can be used to calculate $Q$-values for the fully observable MDP, and QMDP can be used for the rollout policy. With this strong belief-based rollout policy, both PFT-DPW and Sparse-PFT are able to construct nearly-optimal policies with planning times as low as 0.01 seconds, leading to no noticeable further improvement over longer planning times. 
Conversely, POMCPOW and AdaOPS have gradually increasing planning curves in \cref{fig:all-plots}, with AdaOPS nearly reaching the performance of PFT planners at 1 second and POMCPOW reaching an earlier inflection point, resulting in a final performance lower than the other three planners.
POMCP and QMDP perform poorly due to the large branching factor~\cite{sunberg2018pomcpow} and inability to perform costly information gathering, respectively.

\subsection{VDP Tag}
\begin{figure}[t]
    \centering
    \includegraphics[width=0.24\textwidth]{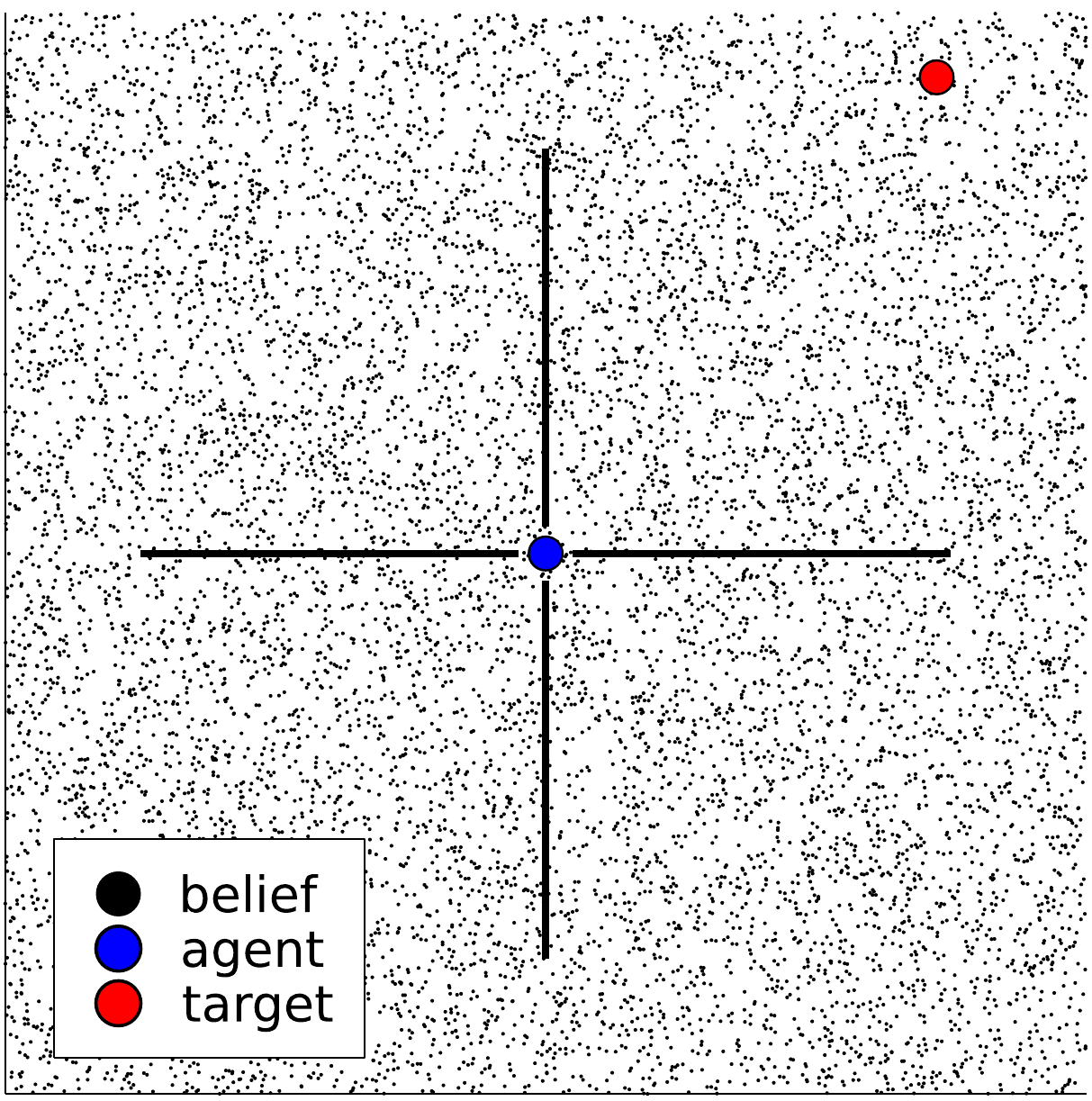}
    \includegraphics[width=0.24\textwidth]{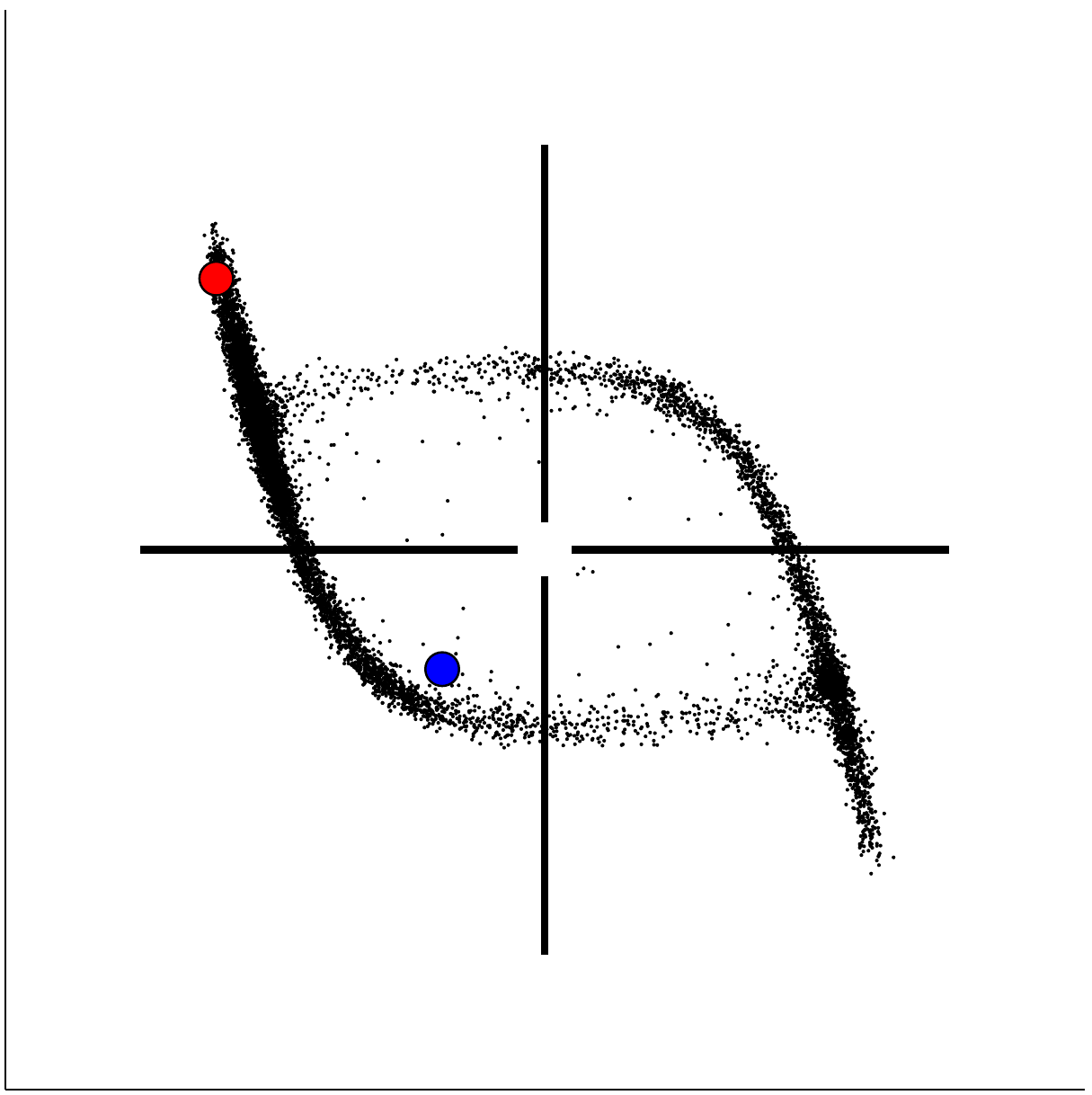}
    \includegraphics[width=0.24\textwidth]{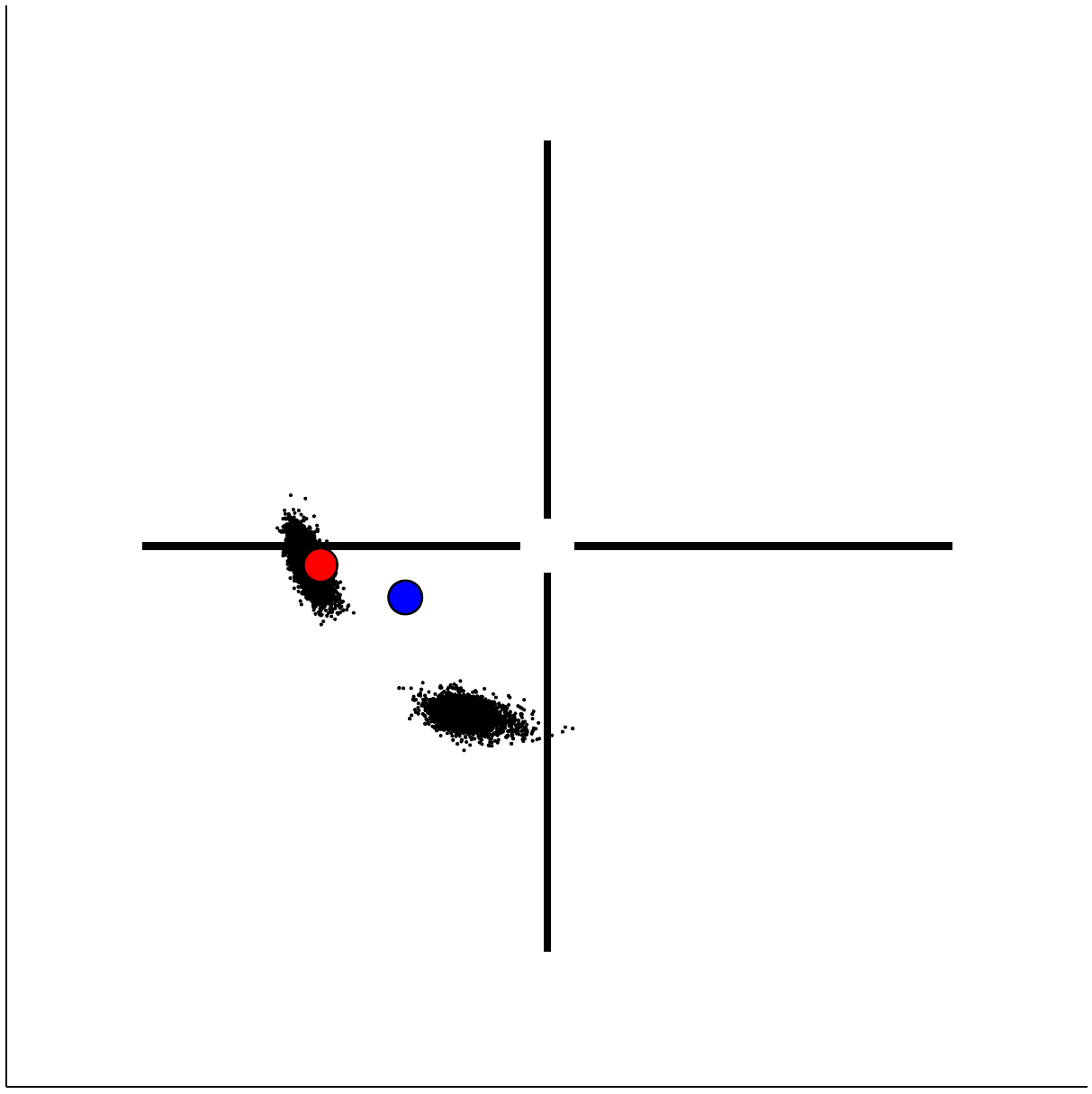}
    \includegraphics[width=0.24\textwidth]{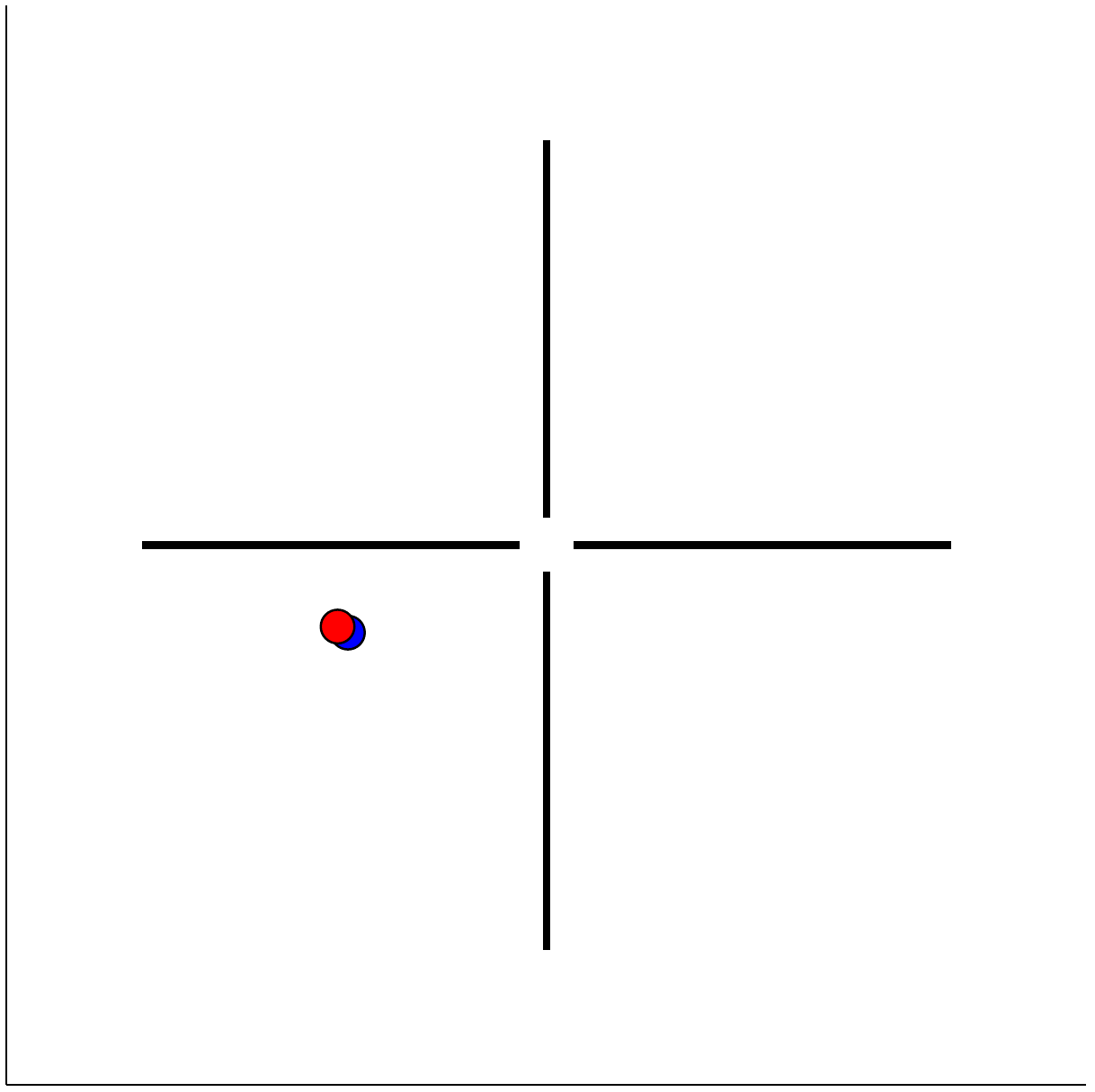}
    \caption{Example Sparse-PFT policy behavior for VDP Tag, which shows a successful localization of the target following the Van Der Pol dynamics, then a successful capture. }
    \label{fig:vdp_traj}
\end{figure}

\begin{wrapfigure}[19]{R}{0.5\textwidth}
    \centering
    \vspace{-0.5cm}
    \begin{tikzpicture}
    \begin{axis}[
        xmin = -4, xmax = 4,
        ymin = -4, ymax = 4,
        zmin = 0, zmax = 1,
        axis equal image,
        view = {0}{90},
    ]
        \addplot3[
            quiver = {
                u = {0.5*(x - x^3/3 - y)},
                v = {x/0.5},
                scale arrows = 0.10,
            },
            -stealth,
            domain = -4:4,
            domain y = -4:4,
        ] {0};
    \end{axis}
    \end{tikzpicture}
    \vspace{-0.3cm}
    \caption{An example Van Der Pol vector field $(\mu=0.5)$ which defines the target dynamics. 
    Unlike the agent, the target is not blocked by the barriers. }
    \label{fig:vanderpol}
\end{wrapfigure}
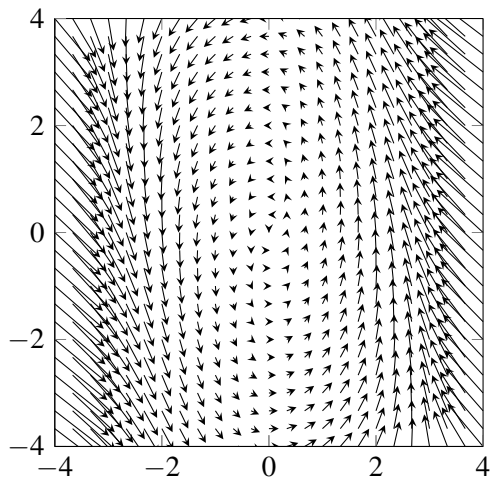


The Van Der Pol Tag (VDP Tag) POMDP formulation tasks the agent with moving through a two-dimensional space to catch an opponent whose dynamics are governed by the Van Der Pol differential equations,
\begin{align}
    \dot{x} = \mu\left(x-\frac{x^3}{3} - y\right), \quad \dot{y} = \frac{x}{\mu} ,   
\end{align}
for which we use scaling constant $\mu=2$. Because this problem has a continuous state space $(\mathcal{S} = \mathbb{R}^4)$, a continuous action space $(\mathcal{A}=[0,2\pi)\times\{0,1\})$ and a continuous observation space $(\mathcal{O} = \mathbb{R}^8)$, discrete value iteration is no longer admissible as input for a value estimation policy. 
AdaOPS is unable to handle continuous action spaces thus it is omitted from this benchmark. 
For the action-discretized VDP Tag, the available movement directions are 20 evenly spaced angles from $0^\circ$ to $360^\circ$.

The continuous VDP Tag domain is the first in which there exists a noticeable performance gap between Sparse-PFT and PFT-DPW, indicating that action progressive widening offers some utility over fixed widening in continuous action space problems.
Furthermore, PFT-DPW and POMCPOW have similar performances across different planning times, suggesting that the main challenge of VDP Tag is being able to handle continuity of state, action, and observation spaces, while the particle belief approximation resolution does not affect the performance as much.

For discrete VDP Tag, we come across two new peculiarities: AdaOPS performance decreases with increased planning time, and PFT methods perform orders of magnitude worse than other planners at very low allotted planning times. 
This is likely attributable to a large action space $(|\mathcal{A}| = 20)$ and a relatively expensive simulation function (RK4 integration). 
Because PFT methods propagate a collection of particles upon tree expansion, belief value estimates tend to be more accurate at the cost of added computation scaling linearly with the number of particles representing each belief. 
Thus, expanding all possible actions while using a computationally expensive simulator on all particles takes a long time, leading to very shallow trees.

\subsection{Experimental Validation of Particle Belief Approximation Convergence}
\begin{figure}[t]
    \centering
    \includegraphics[width=\textwidth]{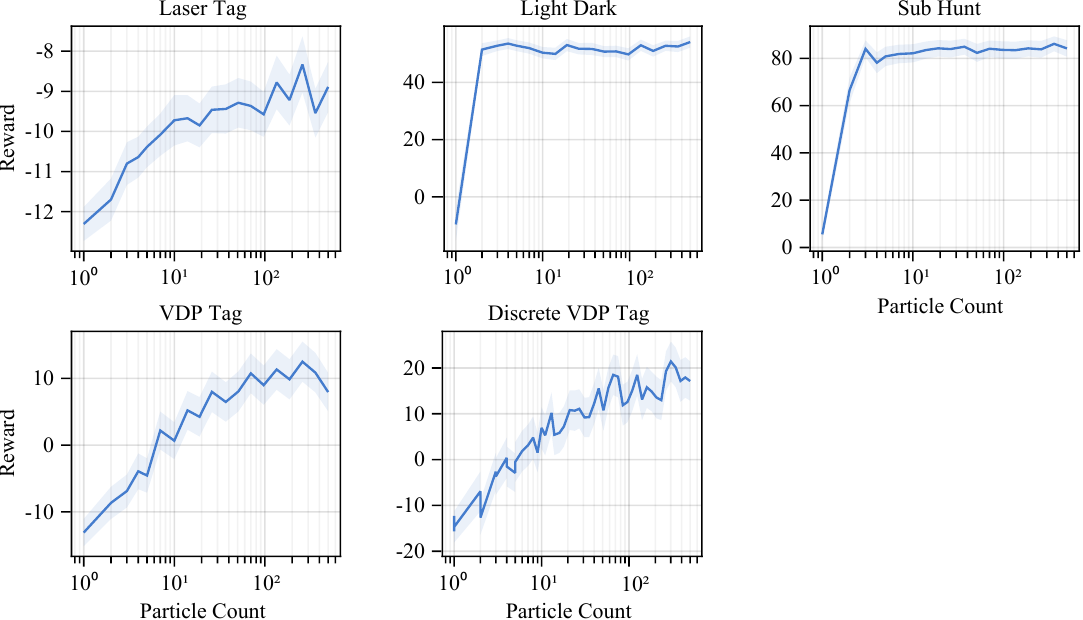}
    \caption{Sparse-PFT policy mean rewards with two standard error confidence band, varying belief particle count while holding number of tree search \textsc{Simulate} queries constant.}
    \label{fig:Sparse-PFTParticleSweep}
\end{figure}
In order to test the effect of particle belief approximation resolution, or the number of belief particles $C$, on planner performance, we vary $C$ while fixing the number of \textsc{Simulate} calls and using optimal hyperparameters found in \cref{app:experiments} for Sparse-PFT planner.
By increasing the the number of belief particles, the particle belief becomes a more accurate representation of the actual belief function, and should lead to a better optimal $Q$-value estimation.

Across all five problem domains, increasing the number of particles $C$ results in a roughly monotonic non-decreasing performance gain as shown in \cref{fig:Sparse-PFTParticleSweep}.
In particular, we see a gradual performance increase for Laser Tag, VDP Tag, and Discrete VDP Tag as we increase the number of particles, while Light Dark and Sub Hunt problems reach their performance capacity rather quickly at less than 10 particles.
However, when applying this principle to promote MDP algorithms into PB-MDP algorithms, the particle filtering transition generative model requires an extra $\mathcal{O}(C)$ computation and memory factor, so it is important to balance computational resource needs and value estimation accuracy when deploying these algorithms in practice.

These results offer two valuable insights.
First, the experiment outcomes are consistent with the general trend suggested by the theoretical analysis: increasing the number of particles results in improved performance, presumably because the $Q$-value estimates are more likely to be accurate as established in \cref{sec:pomdp-guarantees}.
Second, not all problems benefit the same way from increasing the number of particles.
Light Dark and Sub Hunt have rather simple state spaces, and adding more particles did not significantly improve the policy performance.
In contrast, the other three problems continually benefited from having increased resolution of belief approximation.
The belief approximation resolution is not the only factor contributing to the problem difficulty, but also other factors like action space cardinality and existence of simple rollout policies that perform well will contribute to the problem difficulty.


\section{Conclusion}
\label{sec:conclusion}
In this work, we formally show that optimality guarantees in a finite sample particle belief MDP (PB-MDP) approximation of a POMDP yields optimality guarantees in the original POMDP as well, which allows for simple yet powerful adaptations of MDP algorithms to solve POMDPs.
By proving that the Sparse Sampling-$\omega$ $Q$-value estimates are close to both optimal $Q$-values of the POMDP and PB-MDP with high probability, we conclude that the optimal $Q$-values of the POMDP and PB-MDP themselves are close with high probability.
This fundamental bridge between PB-MDPs and POMDPs allows us to adapt any sampling-based MDP algorithm of choice to a POMDP by solving the corresponding particle belief MDP approximation and to preserve the convergence guarantees in the POMDP.
The transformation only increases the computational complexity of transition generation by a factor of $\mathcal{O}(C)$ by using particle filtering-based generative models.
Our convergence result is not directly dependent on the size of the state space nor the observation space, but rather dependent on the R\'enyi divergence that links the probabilities concerning state and observation trajectories.
This motivates particle belief-based POMDP algorithms such as Sparse Particle Filter Tree (Sparse-PFT), which enjoys algorithmic simplicity, theoretical guarantees, and practicality.

There are many interesting avenues for future research.
First, the broader theoretical justification of more complex algorithms, such as POMCPOW and DESPOT-$\alpha$, still do not exist. 
Showing theoretical validity of these algorithms would help to close the gap between theory and practice even further.
In addition, as seen in our numerical experiments, the best performing algorithm varies across different types of benchmarks.
Further theoretical and empirical characterization of which algorithms are most effective for which problems could greatly aid practitioners.
Also, the particle number sweep suggests a method to characterize the difficulty of a POMDP problem, which may be of interest for both practitioners and researchers alike.
Lastly, while the algorithms presented here perform well in low dimensional continuous observation spaces, tree search for more difficult POMDPs, such as those with high dimensional observations~\cite{deglurkar2023compositional} and continuous/hybrid action spaces \cite{seiler2015online,lim2021voronoicdc,mern2021bayesian} is more difficult, and further analytical and empirical research is warranted.

\acks{This material is based upon work supported by a DARPA Assured Autonomy Grant, the SRC CONIX program, NSF CPS Frontiers, NSF National Robotics Initiative Grant No. 2133141, and the National Science Foundation Graduate Research Fellowship Program under Grant No. DGE 1752814 and DGE 2146752.
Any opinions, findings, and conclusions or recommendations expressed in this material are those of the authors and do not necessarily reflect the views of any sponsoring organizations.
}

\appendix
\setcounter{equation}{0}
\setcounter{lemma}{0}
\setcounter{theorem}{0}

\section{Proof of Theorem 1 - SN $d_{\infty}$-Concentration Bound}\label{app:theorem1}
\begin{theorem}[SN $d_{\infty}$-Concentration Bound]
  Let $\mathcal{P}$ and $\mathcal{Q}$ be two probability measures on the measurable space $(\mathcal{X},\mathcal{F})$ with $\mathcal{P}$ absolutely continuous w.r.t. $\mathcal{Q}$ and $d_{\infty}(\mathcal{P}||\mathcal{Q}) < +\infty$. Let $x_1,\ldots, x_N$ be $N$ independent identically distributed random variables with distribution $\mathcal{Q}$, and $f: \mathcal{X} \to \R$ be a bounded function ($\norm{f}_\infty < + \infty$). Then, for any $\lambda > 0$ and $N$ large enough such that $\lambda > \norm{f}_{\infty}  d_{\infty}(\mathcal{P}||\mathcal{Q})/\sqrt{N}$, the following bound holds with probability at least $1-3\exp(-N\cdot t^2(\lambda,N))$:
  \begin{align} 
    &|\E_{x\sim \mathcal{P}}[f(x)] - \tilde{\mu}_{\mathcal{P}/\mathcal{Q}}| \leq \lambda,\\
    &t(\lambda,N) \equiv \frac{\lambda}{\norm{f}_{\infty}d_{\infty}(\mathcal{P}||\mathcal{Q})}-\frac{1}{\sqrt{N}}.
  \end{align}
\end{theorem} 

\begin{proof} This proof follows similar proof steps as in Metelli \textit{et al.}~\shortcite{Metelli2018}. Since we have upper bounds on the infinite R\'enyi divergence $d_{\inft}(\mathcal{P}||\mathcal{Q})$, we can start from Hoeffding's inequality for bounded random variables applied to the regular IS estimator $\hat{\mu}_{\mathcal{P}/\mathcal{Q}} = \frac{1}{N}\sum_{i=1}^N w_{\mathcal{P}/\mathcal{Q}}(x_i)f(x_i)$, which is unbiased. While applying Hoeffding's inequality, we can view importance sampling on $f(x)$ weighted by $w_{\mathcal{P}/\mathcal{Q}}(x)$ as Monte Carlo sampling on $g(x) = w_{\mathcal{P}/\mathcal{Q}}(x)f(x)$, which is a function bounded by $\norm{g}_\inft = d_{\inft}(\mathcal{P}||\mathcal{Q})\norm{f}_{\inft}$:
\begin{align}
  \P\lrp{\hat{\mu}_{\mathcal{P}/\mathcal{Q}} - \E_{x \sim P}[f(x)] \geq \lambda} &= \P\lrp{\hat{\mu}_{\mathcal{P}/\mathcal{Q}} - \E_{x \sim Q}[\hat{\mu}_{\mathcal{P}/\mathcal{Q}}(x)f(x)] \geq \lambda}\\
  &\leq \exp\lrp{-\frac{2N^2\lambda^2}{\sum_{i=1}^N 2(d_{\inft}(\mathcal{P}||\mathcal{Q})\norm{f}_{\inft})^2}}\\
  &\leq \exp\lrp{-\frac{N\lambda^2}{d_{\inft}^2(\mathcal{P}||\mathcal{Q})\norm{f}_{\inft}^2}} \equiv \delta\\
  \P\lrp{|\hat{\mu}_{\mathcal{P}/\mathcal{Q}} - \E_{x \sim P}[f(x)]| \geq \lambda} & \leq 2\exp\lrp{-\frac{N\lambda^2}{d_{\inft}^2(\mathcal{P}||\mathcal{Q})\norm{f}_{\inft}^2}} = 2\delta
\end{align}
We prove a similar bound for the SN estimator $\tilde{\mu}_{\mathcal{P}/\mathcal{Q}} = \sum_{i=1}^N \tilde{w}_{\mathcal{P}/\mathcal{Q}}(x_i) f(x_i)$, which is a biased estimator. However, we need to take a step further and analyze the absolute difference, requiring us to split the difference up into two terms:
\begin{align}
  \P(|&\E_{x\sim \mathcal{P}}[f(x)] - \tilde{\mu}_{\mathcal{P}/\mathcal{Q}}|\geq \lambda) \\
  &\leq \P(\tilde{\mu}_{\mathcal{P}/\mathcal{Q}} - \E_{x\sim \mathcal{P}}[f(x)] \geq \lambda) + \P(\E_{x\sim \mathcal{P}}[f(x)] - \tilde{\mu}_{\mathcal{P}/\mathcal{Q}}\geq \lambda)\\
  &\leq \P(\tilde{\mu}_{\mathcal{P}/\mathcal{Q}} - \E_{x\sim \mathcal{Q}}[\tilde{\mu}_{\mathcal{P}/\mathcal{Q}}]\geq \tilde{\lambda}) + \P(\E_{x\sim \mathcal{Q}}[\tilde{\mu}_{\mathcal{P}/\mathcal{Q}}] - \tilde{\mu}_{\mathcal{P}/\mathcal{Q}}\geq \tilde{\lambda})\\
  &\leq \tilde{\delta} + \P(\E_{x\sim \mathcal{P}}[f(x)] - \tilde{\mu}_{\mathcal{P}/\mathcal{Q}}\geq \lambda)
\end{align}
The first term is bounded by $\tilde{\delta}$ from the above bound and recasting $\lambda$ to $\tilde{\lambda}$ to account for the bias of the SN estimator:
\begin{align}
  \tilde{\lambda} &= \lambda - \left|\E_{x\sim \mathcal{P}}[f(x)] - \E_{x\sim \mathcal{Q}}[\tilde{\mu}_{\mathcal{P}/\mathcal{Q}}]\right|\\
  \tilde{\delta} &= \exp\lrp{-\frac{N\tilde{\lambda}^2}{d_{\inft}^2(\mathcal{P}||\mathcal{Q})\norm{f}_{\inft}^2}}
\end{align}
Note that the bias term in the SN estimator is bounded by following through Cauchy-Schwarz inequality, closely following steps from \citeauthor{Metelli2018}~(\citeyear{Metelli2018}):
\begin{align}
  |\E_{x\sim \mathcal{P}}&[f(x)] - \E_{x\sim \mathcal{Q}}[\tilde{\mu}_{\mathcal{P}/\mathcal{Q}}]| = \left|\E_{x\sim \mathcal{Q}}[\hat{\mu}_{\mathcal{P}/\mathcal{Q}}-\tilde{\mu}_{\mathcal{P}/\mathcal{Q}}] \right| \leq \E_{x\sim \mathcal{Q}}[|\hat{\mu}_{\mathcal{P}/\mathcal{Q}}-\tilde{\mu}_{\mathcal{P}/\mathcal{Q}}|]\\
  &\leq \E_{x\sim \mathcal{Q}}\left| \frac{\sum_{i=1}^N w_{\mathcal{P}/\mathcal{Q}}(x_i) f(x_i)}{\sum_{i=1}^N w_{\mathcal{P}/\mathcal{Q}}(x_i) }-\frac{1}{N}\sum_{i=1}^N w_{\mathcal{P}/\mathcal{Q}}(x_i) f(x_i) \right|\\
  &= \E_{x\sim \mathcal{Q}}\lrs{\left| \frac{\sum_{i=1}^N w_{\mathcal{P}/\mathcal{Q}}(x_i) f(x_i)}{\sum_{i=1}^N w_{\mathcal{P}/\mathcal{Q}}(x_i) } \right| \left| 1-\frac{\sum_{i=1}^N w_{\mathcal{P}/\mathcal{Q}}(x_i)}{N} \right|}\\
  &\leq \E_{x\sim \mathcal{Q}}\lrs{ \lrp{\frac{\sum_{i=1}^N w_{\mathcal{P}/\mathcal{Q}}(x_i) f(x_i)}{\sum_{i=1}^N w_{\mathcal{P}/\mathcal{Q}}(x_i) } }^2}^{1/2} \E_{x\sim \mathcal{Q}}\lrs{\lrp{1-\frac{\sum_{i=1}^N w_{\mathcal{P}/\mathcal{Q}}(x_i)}{N}}^2 }^{1/2}\\
  &\leq \norm{f}_{\inft}  \sqrt{\frac{d_{2}(\mathcal{P}||\mathcal{Q}) - 1}{N}} \leq \norm{f}_{\inft}  \frac{d_{\inft}(\mathcal{P}||\mathcal{Q})}{\sqrt{N}}
\end{align}
In the last step, the first term is bounded by $\norm{f}_{\inft}$ as the function is bounded, and the second term is bounded by the fact that we can bound the square root of variance with the supremum squared, where we square it for the convenience of the definition of $t(\lambda,N)$ later on such that the $1/\sqrt{N}$ factor is nicely separated. We use the assumption in the theorem that $N$ is chosen large enough that $\lambda > \norm{f}_{\inft}  d_{\inft}(\mathcal{P}||\mathcal{Q})/\sqrt{N}$ to bound the $\tilde{\delta}$ term:
\begin{align}
  \tilde{\delta}&\leq \exp\lrp{-\frac{N(\lambda - \norm{f}_{\inft}  d_{\inft}(\mathcal{P}||\mathcal{Q})/\sqrt{N})^2}{d_{\inft}^2(\mathcal{P}||\mathcal{Q})\norm{f}_{\inft}^2}}\\
  &= \exp\lrp{-N\lrp{\frac{\lambda - \norm{f}_{\inft}  d_{\inft}(\mathcal{P}||\mathcal{Q})/\sqrt{N}}{\norm{f}_{\inft}d_{\inft}(\mathcal{P}||\mathcal{Q})}}^2 }\\
  &\equiv \exp\lrp{-N\cdot t^2(\lambda,N) }
\end{align}
Here, we define $t(\lambda,N) \equiv \frac{\lambda}{\norm{f}_{\inft}d_{\inft}(\mathcal{P}||\mathcal{Q})} - \frac{1}{\sqrt{N}}$, which satisfies $0 < t(\lambda,N) \leq \frac{\lambda}{\norm{f}_{\inft}d_{\inft}(\mathcal{P}||\mathcal{Q})}$. The second term can be bounded similarly by rebounding the bias term with $\tilde{\lambda}$, using symmetry and Hoeffding's inequality:
\begin{align}
  \P(\E_{x\sim \mathcal{P}}[f(x)] - \tilde{\mu}_{\mathcal{P}/\mathcal{Q}}\geq \lambda) &\leq \P(\E_{x\sim \mathcal{Q}}[\tilde{\mu}_{\mathcal{P}/\mathcal{Q}}] - \tilde{\mu}_{\mathcal{P}/\mathcal{Q}}\geq \tilde{\lambda})\\
  &\leq \P(|\E_{x\sim \mathcal{Q}}[\tilde{\mu}_{\mathcal{P}/\mathcal{Q}}] - \tilde{\mu}_{\mathcal{P}/\mathcal{Q}}|\geq \tilde{\lambda}) \leq 2\tilde{\delta}
\end{align}
Thus, we obtain the following bound:
\begin{align}
  \P(|\E_{x\sim \mathcal{P}}[f(x)] - \tilde{\mu}_{\mathcal{P}/\mathcal{Q}}|\geq \lambda) &\leq 3\exp(-N\cdot t^2(\lambda,N))
\end{align}
\end{proof}

\section{Proof of Lemma 1 (Continued) - Particle Likelihood SN Estimator Convergence}\label{app:lemma1}
In the main paper, we show that $\tilde{\mu}_{\bar{b}_d}[f]$ is an SN estimator of $\E_{s\sim {b_d}}[f(s)]$. We apply the concentration inequality proven in \cref{thm:sn} to finish the proof of \cref{lemma:pwl_sn}.
\begin{lemma}[Particle Likelihood SN Estimator Convergence]
  Suppose a function $f$ is bounded by a finite constant $\norm{f}_{\infty} \leq f_{\max}$, and a particle belief state $\bar{b}_d=\{(s_{d,i},w_{d,i})\}_{i=1}^C$ at depth $d$ represents $b_d$ with particle likelihood weighting that is recursively updated as $w_{d,i} = w_{d-1,i} \cdot \mathcal{Z}(o_{d}\mid~a,s_d)$.
  Then, for all $d = 0,\ldots,D-1$, the following weighted average is the SN estimator of $f$ under the belief $b_d$ corresponding to the actions $\{a_n\}_{n=0}^{d-1}$ and observations $\{o_n\}_{n=1}^d$, for all beliefs $b_d \in B$ that are realizable given the initial belief $b_0$:
  \begin{align}
      \tilde{\mu}_{\bar{b}_d}[f] &= \frac{\sum_{i = 1}^C w_{d,i}f(s_{d,i})}{\sum_{i = 1}^C w_{d,i}},
  \end{align}
  and the following concentration bound holds with probability at least $1-3\exp(-C\cdot t_{\max}^2(\lambda,C))$,
  \begin{align}
    &|\E_{s\sim {b_d}}[f(s)] - \tilde{\mu}_{\bar{b}_d}[f] | \leq \lambda,\\
    &t_{\max}(\lambda,C) \equiv \frac{\lambda}{f_{\max} d_{\inft}^{\max}} - \frac{1}{\sqrt{C}}.
  \end{align}
\end{lemma}

\begin{proof} 
In this proof, we will take advantage of the fact that the state particles trajectories $\{s_n\}_1,\ldots$ $\{s_n\}_C$ of depth $d$ are independent of each other, as \textsc{GenPF} independently generates each state sequence $i$ according to the transition density $\mathcal{T}$.

In the subsequent analysis, we abbreviate some terms of interest with the following notation:
\begin{align}
  \mathcal{T}_{1:d}^i &\equiv \prod_{n=1}^d \mathcal{T}(s_{n,i}\mid s_{n-1,i},a_n);\;\; \mathcal{Z}_{1:d}^{i} \equiv \prod_{n=1}^d \mathcal{Z}(o_{n}\mid a_{n},s_{n,i}).
\end{align}
Here $d$ denotes the depth, $i$ denotes the index of the state sample.
Intuitively, $\mathcal{T}_{1:d}^i$ is the transition density of state sequence $i$ from the root node to depth $d$, and $\mathcal{Z}_{1:d}^{i}$ is the conditional density of observation sequence state sequence $i$ from the root node to depth $d$.
Additionally, $b_d^i$ denotes $b_d(s_{d,i})$ and $w_{d,i}$ the weight of $s_{d,i}$.

First, we show that $\tilde{\mu}_{\bar{b}_d}[f]$ is an SN estimator of $\E_{s\sim {b_d}}[f(s)]$. By following the recursive belief update, the belief term can be fully expanded:
\begin{align}
  b_{D-1}(s_{D-1}) &= \frac{\int_{S^{D-1}} (\mathcal{Z}_{1:D-1})(\mathcal{T}_{1:D-1}) b_{0} ds_{0:D-2}}{\int_{S^{D}} (\mathcal{Z}_{1:D-1})(\mathcal{T}_{1:D-1}) b_{0} ds_{0:D-1}}
\end{align} 
Then, $\E_{s\sim {b_d}}[f(s)]$ is equal to the following:
\begin{align}
  &\E_{s\sim {b_d}}[f(s)] = \int_S f(s_{D-1}) b_{D-1} ds_{D-1} = \frac{\int_{S^{D}} f(s_{D-1}) (\mathcal{Z}_{1:D-1})(\mathcal{T}_{1:D-1}) b_{0} ds_{0:D-1}}{\int_{S^{D}} (\mathcal{Z}_{1:D-1})(\mathcal{T}_{1:D-1}) b_{0} ds_{0:D-1}}
\end{align}
We approximate the $\E_{s\sim {b_d}}[f(s)]$ function with importance sampling by using problem requirement \ref{req:generate}, where the target density is $b_{D-1}$. 
First, we sample the sequences $\{s_{n,i}\}$ according to the joint probability $(\mathcal{T}_{1:D-1}) b_{0}$.
Afterwards, we weight the sequences by the corresponding observation density $\mathcal{Z}_{1:D-1}$, obtained from the generated observation sequences $\{o_{n}\}$.
Normally, these generated observation sequences through \textsc{GenPF} will be correlated.
For now, we treat the observation sequences $\{o_{n}\}$ as fixed.

Applying the importance sampling formalism to our system for all depths $d=0,\ldots,D-1$, $\mathcal{P}^d$ is the normalized measure incorporating the probability of the observation sequence conditioned on the state sequence $i$ and action sequence until the node at depth $d$, and $\mathcal{Q}^d$ is the measure of the state sequence. We can think of $\mathcal{P}^d$ corresponding to the observation sequence $\{o_{n}\}$.
\begin{align}
  &\mathcal{P}^d = \mathcal{P}_{\{a_n,o_{n}\}}^d(\{s_{n,i}\}) = \frac{(\mathcal{Z}_{1:d}^{i}) (\mathcal{T}_{1:d}^i) b_{0}^i}{\int_{S^{d+1}} (\mathcal{Z}_{1:d}) (\mathcal{T}_{1:d}) b_{0} ds_{0:d}}\\
  &\mathcal{Q}^d = \mathcal{Q}_{\{a_n\}}^d(\{s_{n,i}\}) = (\mathcal{T}_{1:d}^i) b_{0}^i\\
  &w_{\mathcal{P}^d/\mathcal{Q}^d}(\{s_{n,i}\}) = \frac{(\mathcal{Z}_{1:d}^{i}) }{\int_{S^{d+1}} (\mathcal{Z}_{1:d}) (\mathcal{T}_{1:d}) b_{0} ds_{0:d}} 
\end{align}
Here, the integral to calculate the normalizing constant is taken over $S^{d+1}$, the Cartesian product of the state space $S$ over $d+1$ steps.

The weighing step is done by updating the self-normalized weights given in \textsc{GenPF} algorithm. 
We define $w_{d,i}$ and $r_{d,i}$ as the weights and rewards obtained at step $d$ for state sequence $i$ from \textsc{GenPF} simulation. With our recursive definition of the empirical weights, we obtain the full weight of each state sequence $i$ for a fixed observation sequence:
\begin{align}
  w_{d,i} &= w_{d-1,i}\cdot \mathcal{Z}(o_{d}\mid a_{d},s_{d,i}) \propto \mathcal{Z}_{1:d}^{i}.
\end{align}

Realizing that the marginal observation probability is independent of indexing by $i$, we show that $\tilde{\mu}_{\bar{b}_d}[f]$ is an SN estimator of $\E_{s\sim {b_d}}[f(s)]$:
\begin{align}
  \tilde{\mu}_{\bar{b}_d}[f] &= \frac{\sum_{i = 1}^C (\mathcal{Z}_{1:d}^{i}) f(s_{d,i})}{\sum_{i = 1}^C (\mathcal{Z}_{1:d}^{i}) } = \frac{\sum_{i = 1}^C \frac{(\mathcal{Z}_{1:d}^{i})}{\int_{S^{D}} (\mathcal{Z}_{1:d}) (\mathcal{T}_{1:d}) b_{0} ds_{0:d}} f(s_{d,i})}{\sum_{i = 1}^C \frac{(\mathcal{Z}_{1:d}^{i})}{\int_{S^{D}} (\mathcal{Z}_{1:d}) (\mathcal{T}_{1:d}) b_{0} ds_{0:d}} }\\
  &= \frac{\sum_{i = 1}^C w_{\mathcal{P}^{d}/\mathcal{Q}^{d}}(\{s_{n,i}\}) f(s_{d,i})}{\sum_{i = 1}^C w_{\mathcal{P}^{d}/\mathcal{Q}^{d}}(\{s_{n,i}\}) } = \sum_{i = 1}^C \tilde{w}_{\mathcal{P}^{d}/\mathcal{Q}^{d}}(\{s_{n,i}\}) f(s_{d,i})
\end{align}
Since $\{s_n\}_1,\ldots, \{s_n\}_C$ are independent identically distributed random variable sequences of depth $d$, and $f$ is a bounded function, we can apply the SN concentration bound in Theorem $\ref{thm:sn}$ to obtain the concentration inequality. 
Since $d_{\inft}(\mathcal{P}^{d}||\mathcal{Q}^{d})$ is bounded by $d_{\inft}^{\max}$ a.s., we can bound the resulting $t_{d}(\lambda,C)$ by $t_{\max}(\lambda,C)$ a.s.:
\begin{align}
  t_{d}(\lambda,C) &= \frac{\lambda}{f_{\max}d_{\inft}(\mathcal{P}^{d}||\mathcal{Q}^{d})} - \frac{1}{\sqrt{C}} \geq \frac{\lambda}{f_{\max}d_{\inft}^{\max}} - \frac{1}{\sqrt{C}} \equiv t_{\max}(\lambda,C)
\end{align}
This means that for all $d$, we can bound $t_d(\lambda,C) \geq t_{\max}(\lambda,C)$. 
Thus, bounding the concentration inequality probability with $t_{\max}(\lambda,C)$ for any step $d$ is justified when we prove \cref{lemma:induction} later. 
This probabilistic bound holds for any choice of $\{o_{n}\}$, where $\{o_{n}\}$ could be a sequence of random variables correlated with any elements of $\{s_{n,i}\}$. Thus, for any $\{o_{n}\}$,
\begin{align}\label{eq:leafnode}
    |\E_{s\sim {b_d}}[f(s)] - \tilde{\mu}_{\bar{b}_d}[f] | \leq \lambda
\end{align}
holds with probability at least $1-3\exp(-C\cdot t_{\max}^2(\lambda,C))$.
\end{proof}

\section{Proof of Lemma 2 (Continued) - Sparse Sampling-$\omega$ $Q$-Value Coupled Convergence}\label{app:lemma2}
\begin{lemma}[Sparse Sampling-$\omega$ Estimator $Q$-Value Coupled Convergence]
  For all $d = 0,\ldots,D-1$ and $a$, the following bounds hold with probability at least $1 - 6|A|(4|A|C)^{D}\exp(-C\cdot \tilde{t}^2)$:
  \begin{align} 
    &|Q^*_{\mathbf{P},d}(b_d,a) - \hat{Q}_{\omega,d}^*(\bar{b}_d,a)| \leq \alpha_d,\; \alpha_{d} = \lambda + \gamma\alpha_{d+1},\; \alpha_{D-1} = \lambda,\label{eq:p_to_ssw_app}\\
    &|Q^*_{\mathbf{M}_{\mathbf{P}},d}(\bar{b}_d,a)- \hat{Q}_{\omega,d}^*(\bar{b}_d,a)| \leq \beta_d,\; \beta_{d} = \gamma(\lambda + \beta_{d+1}),\; \beta_{D-1} = 0, \label{eq:ssw_to_m_app}\\
    &t_{\max}(\lambda,C) = \frac{\lambda}{4V_{\max} d_{\inft}^{\max}} - \frac{1}{\sqrt{C}},\; \tilde{t} = \min\{t_{\max},\lambda/4\sqrt{2}V_{\max}\}
  \end{align}
\end{lemma}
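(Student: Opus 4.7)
The plan is to prove the two bounds simultaneously by backward induction on depth $d$, starting with the base case $d = D-1$ (the leaf layer) and proceeding upward to $d = 0$. At each depth, I will set up the recursive error relations $\alpha_d = \lambda + \gamma \alpha_{d+1}$ and $\beta_d = \gamma(\lambda + \beta_{d+1})$ separately for the POMDP and PB-MDP value comparisons, collecting the failure probability incurred at each node and then applying a single union bound over all nodes and actions in the Sparse Sampling-$\omega$ tree at the end.

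Base case: At depth $d = D-1$, the estimate $\hat{Q}^*_{\omega,D-1}(\bar{b}_{D-1},a)$ reduces to the weighted particle reward $\sum_i w_{D-1,i} r_{D-1,i}/\sum_i w_{D-1,i}$ because the recursive call returns $0$. For the PB-MDP side, this coincides exactly with $\rho(\bar{b}_{D-1},a) = Q^*_{\mathbf{M}_{\mathbf{P}},D-1}(\bar{b}_{D-1},a)$, giving $\beta_{D-1} = 0$. For the POMDP side, it is precisely the particle-likelihood SN estimator of $\E_{\mathbf{P}}[R(s,a)\mid b_{D-1}]$, so Lemma~\ref{lemma:pwl_sn} applied with $f_{\max} = R_{\max} \leq V_{\max}$ yields $\alpha_{D-1} = \lambda$ with probability at least $1 - 3\exp(-C \cdot t_{\max}^2)$.

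Inductive step: Assume both bounds hold at depth $d+1$ for every action at every child node. For the POMDP bound, I follow the four-term triangle-inequality decomposition already outlined in the excerpt: (1) an importance-sampling error bounded by Lemma~\ref{lemma:pwl_sn} applied to $V^*_{\mathbf{P},d+1}(b_d a o)$ viewed as a function of the next-state particle, which is bounded by $V_{\max}$; (2) a Monte Carlo weighted-vs-unweighted sum discrepancy, which I control by a Hoeffding bound on the $\{I_i\}$ sampling step since the integrand is bounded by $V_{\max}$; (3) a Monte Carlo next-step integral approximation, again Hoeffding for independent $s'$ draws; and (4) the function estimation error $|V^*_{\mathbf{P},d+1}(\bar{b}_{d+1}'^{[i]}) - \hat{V}^*_{\omega,d+1}(\bar{b}_{d+1}'^{[i]})|$, which is at most $\alpha_{d+1}$ by the inductive hypothesis. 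Adding these four contributions gives $\lambda + \gamma \alpha_{d+1}$ after absorbing constants into $\lambda$ (choosing $\lambda/4$ per term). For the PB-MDP bound, the reward term vanishes identically and the next-step term splits only into two pieces: a Monte Carlo approximation of $\E_{\mathbf{M}_{\mathbf{P}}}[V^*_{\mathbf{M}_{\mathbf{P}},d+1}(\bar{b}_{d+1}) \mid \bar{b}_d, a]$ by $\frac{1}{C}\sum_i V^*_{\mathbf{M}_{\mathbf{P}},d+1}(\bar{b}_{d+1}'^{[I_i]})$, which is Hoeffding-bounded since the $\bar{b}_{d+1}'^{[I_i]}$ are i.i.d.\ draws from the PB-MDP transition kernel $\tau$, and the function estimation error bounded by the inductive hypothesis $\beta_{d+1}$. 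This yields $\beta_d = \gamma(\lambda + \beta_{d+1})$.

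Finally, I will aggregate the failure probability. Each application of Lemma~\ref{lemma:pwl_sn} contributes $3\exp(-C \cdot t_{\max}^2(\lambda/4, C))$ and each Hoeffding bound contributes $2\exp(-C \lambda^2/(32 V_{\max}^2))$; taking $\tilde{t} = \min\{t_{\max}, \lambda/(4\sqrt{2}V_{\max})\}$ subsumes both into $\exp(-C \tilde{t}^2)$. A node at depth $d$ in the Sparse Sampling-$\omega$ tree is reached through a sequence of $|A|$ action branches and $C$ observation branches per level, so the number of action-nodes up to depth $D$ is at most $(|A|C)^{D}$, and a union bound over all actions and both inequalities at each node gives the $6|A|(4|A|C)^{D}$ prefactor. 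The main obstacle I anticipate is bookkeeping on the coupling: because the very same tree and the very same random draws must simultaneously certify both bounds, care is needed to reuse a single good event rather than intersecting two independent good events, and to ensure the Hoeffding bound in term (2) of the POMDP decomposition — which conditions on the $w_{d,i}$ — is handled correctly, since the $w_{d,i}$ depend on the randomly chosen observation $o$ generated inside \textsc{GenPF}. Conditioning on the observation sequence and invoking Lemma~\ref{lemma:pwl_sn}'s freedom from the choice of $\{o_n\}$ is the cleanest route around this.
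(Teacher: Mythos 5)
Your proposal takes essentially the same route as the paper's proof: backward induction with the same base case, the identical four-term decomposition on the POMDP side (particle-likelihood SN importance-sampling error via \cref{lemma:pwl_sn}, two Hoeffding-type Monte Carlo errors, and the inductive function-estimation error $\alpha_{d+1}$), the two-term decomposition on the PB-MDP side (Hoeffding bound on the MC approximation of the $\tau$-expectation plus the inductive error $\beta_{d+1}$, with the reward term vanishing exactly), and a union bound over the tree with $\tilde{t}=\min\{t_{\max},\lambda/4\sqrt{2}V_{\max}\}$ unifying the SN and Hoeffding failure rates into the $6|A|(4|A|C)^{D}$ prefactor. One minor point to tighten when writing it out: in term (1) the SN estimator targets the conditional next-step value $\mathbf{V}_{d+1}^*(b_d,a,s_d)$ (the integral over $(s_{d+1},o)$ given $s_d$), viewed as a bounded function of the depth-$d$ particle with the depth-$d$ likelihood weights, rather than $V_{\mathbf{P},d+1}^*(b_dao)$ as a function of the next-state particle; with that adjustment, your tower-property/fixed-observation-sequence handling of the randomness in the weights $w_{d,i}$ matches the paper's argument.
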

Before we proceed with the proof, note that in our definition of $t_{\max}$, we set the maximum of the $f_{\max}$ to be equal to $4V_{\max}$.
While this may seem very conservative to bound most reasonable functions resulting from reward and value estimation with 4 times the $V_{\max}$, it serves to uniformly bound the probability for each of the SN estimator terms with convenient coefficients.
Furthermore, individual concentration bounds may be adjusted to account for this generous upper bound by multiplying a factor in front of $\lambda$.

\textbf{POMDP Value Convergence:} We split the difference between the SN estimator and $Q_{\mathbf{P}}^*$ into two terms, the reward estimation error (A) and the next-step value estimation error (B):

\begin{align}
  |Q_{\mathbf{P},d}^*(b_d,a) - \hat{Q}^*_{\omega,d}(\bar{b}_d,a) | &\leq \undb{\left| \E_{\mathbf{P}}[R(s_d,a)\mid b_d] - \frac{\sum_{i = 1}^C w_{d,i}r_{d,i}}{\sum_{i = 1}^C w_{d,i}}\right|}{(A)} \\ \notag
  &\;\;\;\;+ \gamma \undb{\left| \E_{\mathbf{P}}[V_{\mathbf{P},d+1}^*(b_dao)\mid b_d] - \frac{1}{C}\sum_{i=1}^{C} \hat{V}^*_{\omega,d+1}(\bar{b}_{d+1}'^{[I_i]}) \right|}{(B)}
\end{align}

Here, the $I_i$ notation represents that random variables $I_i$ are sampled $C$ times from the finite discrete distribution $p_{w,d}$ with probability mass $p_{w,d}(I=i) = (w_{d,i}/\sum_j w_{d,j})$, and particle belief state $\bar{b}_{d+1}'^{[I_i]}$ is updated by an observation generated from $s_{d,I_i}$.
This reflects the fact that \textsc{GenPF} randomly selects a state particle $s_o$ with probability $w_{d,o}/\sum_j w_{d,j}$ $C$ times independently to generate a new observation for the next step particle belief state.
Similarly, a particle belief state $\bar{b}_{d+1}'^{[i]}$ is updated by an observation generated from $s_{d,i}$, which is a notation we will use to represent beliefs that are generated through iterating upon each state particle $s_{d,i}$.

To prove the base case $d=D-1$, we note that we only need to bound the first term (A) since $d=D-1$ corresponds to the leaf node of Sparse Sampling-$\omega$ tree and no further next step value estimation is performed:
\begin{align}
  |Q^*_{\mathbf{P},D-1}(b_{D-1},a) - \hat{Q}^*_{\omega,D-1}(\bar{b}_{D-1},a) | \leq \undb{\left| \E_{\mathbf{P}}[R(s_{D-1},a)\mid b_{D-1}] -  \frac{\sum_{i = 1}^C w_{D-1,i}r_{D-1,i}}{\sum_{i = 1}^C w_{D-1,i}}\right|}{(A)}.
\end{align}
This term is simply a particle likelihood weighted average estimation term where the function is $R(\cdot, a)$, and does not need any inductive step. 
Below, we will show how to bound both terms (A) and (B), so the base case proof naturally follows from the proof of concentration bound for (A).

For (A), we use the particle likelihood SN concentration bound in \cref{lemma:pwl_sn} to obtain the bound $\frac{R_{\max}}{4V_{\max}}\lambda$; rather than bounding $R$ with $4V_{\max}$ in this step, we instead bound $R$ with $R_{\max}$ and then augment $\lambda$ to $\frac{R_{\max}}{4V_{\max}}\lambda$ in order to obtain the same uniform $t_{\max}$ factor as the other steps. This choice of bound is made to effectively combine the $\lambda$ terms when we add (A) and (B).
This also covers the base case since $\alpha_{D-1} = \lambda \geq \frac{R_{\max}}{4V_{\max}}\lambda$.

For (B), we use the triangle inequality repeatedly to separate it into four terms; (1) the importance sampling error bounded by $\lambda/4$, (2) the Monte Carlo weighted sum approximation error bounded by $\lambda/4$, (3) the Monte Carlo next-step integral approximation error bounded by $\lambda/2$, and (4) the inductive function estimation error bounded by $\alpha_{d+1}$:
{\small
\begin{align}
    &\undb{\left| \E_{\mathbf{P}}[V_{\mathbf{P},d+1}^*(b_dao)\mid b_d] - \frac{1}{C}\sum_{i=1}^{C} \hat{V}^*_{\omega,d+1}(\bar{b}_{d+1}'^{[I_i]}) \right|}{(B)}\\
    &\leq \undb{\left| \E_{\mathbf{P}}[V_{\mathbf{P},d+1}^*(b_dao)\mid b_d] - \frac{\sum_{i = 1}^C w_{d,i}\mathbf{V}_{\mathbf{P},d+1}^*(b_d,a)^{[i]}}{\sum_{i = 1}^C w_{d,i}} \right|}{(1) Importance sampling error}  + \undb{\left| \frac{\sum_{i = 1}^C w_{d,i}\mathbf{V}_{\mathbf{P},d+1}^*(b_d,a)^{[i]}}{\sum_{i = 1}^C w_{d,i}} - \frac{1}{C}\sum_{i=1}^{C}\mathbf{V}_{\mathbf{P},d+1}^*(b_d,a)^{[I_i]} \right|}{(2) MC weighted sum approximation error} \notag\\
    &\quad + \undb{\left| \frac{1}{C}\sum_{i=1}^{C}\mathbf{V}_{\mathbf{P},d+1}^*(b_d,a)^{[I_i]} - \frac{1}{C}\sum_{i=1}^{C}V_{\mathbf{P},d+1}^*(b_dao^{[I_i]}) \right|}{(3) MC next-step integral approximation error} + \undb{\left|\frac{1}{C}\sum_{i=1}^{C}V_{\mathbf{P},d+1}^*(b_dao^{[I_i]}) - \frac{1}{C}\sum_{i=1}^{C} \hat{V}^*_{\omega, d+1}(\bar{b}_{d+1}'^{[I_i]}) \right|}{(4) Inductive function estimation error}. \notag\\
    &\leq \undb{\frac{1}{4}\lambda}{(1)} + \undb{\frac{1}{4}\lambda}{(2)} + \undb{\frac{1}{2}\lambda}{(3)} + \undb{\alpha_{d+1}}{(4)}.
\end{align}
}
The following subsections justify how each error term is bounded. 
\begin{enumerate}[label=(\arabic*)]
    \item \textbf{Importance Sampling Error:} Before we analyze the first term, note that the conditional expectation of the optimal value function at step $d+1$ given $b_d,a$ is calculated by the following, where we introduce $\mathbf{V}_{\mathbf{P},d+1}^*(b_d,a,s_{d,i}) \equiv \mathbf{V}_{\mathbf{P},d+1}^*(b_d,a)^{[i]}$ as a shorthand for the next-step integration over $(s_{d+1},o)$ conditioned on $(b_d,a,s_{d,i})$.
    Once again, we denote $[i]$ to indicate that $s_{d,i}$ was the particle chosen to generate the observation $o$, and if we are conditioning on a generic particle $s_d$, then we simply denote all the variables $\mathbf{V}_{\mathbf{P},d+1}^*(b_d,a,s_d)$:
    \begin{align}
        \mathbf{V}_{\mathbf{P},d+1}^*(b_d,a)^{[i]} &\equiv \int_S \int_O\ V_{\mathbf{P},d+1}^*(b_dao)\mathcal{Z}(o\mid a,s_{d+1})\mathcal{T}(s_{d+1}\mid s_{d,i},a)ds_{d+1}do\\
        \E_{\mathbf{P}}[V_{\mathbf{P},d+1}^*(b_dao)\mid b_d] &= \int_S \int_S \int_O V_{\mathbf{P},d+1}^*(b_dao)(\mathcal{Z}_{d+1}) (\mathcal{T}_{d,d+1})b_d \cdot ds_{d:d+1}do \\
        &= \int_S \mathbf{V}_{\mathbf{P},d+1}^*(b_d,a,s_d) b_d \cdot ds_{d}\\
        &= \frac{\int_{S^{d+1}} \mathbf{V}_{\mathbf{P},d+1}^*(b_d,a,s_d)(\mathcal{Z}_{1:d}) (\mathcal{T}_{1:d}) b_{0} ds_{0:d}}{\int_{S^{d+1}} (\mathcal{Z}_{1:d}) (\mathcal{T}_{1:d}) b_{0} ds_{0:d}}.
    \end{align}
    Noting that the term (1) is then the difference between the SN estimator and the conditional expectation, and that $\norm{\mathbf{V}_{\mathbf{P},d+1}^*}_{\inft} \leq V_{\max}$, we can apply the SN inequality for the second time in Lemma 2 to bound it by the augmented $\lambda/4$. 
    Thus, with our definition of $t_{\max}$, the bound holds with probability at least $1-3\exp(-C\cdot t_{\max}^2(\lambda,C))$.

    \item \textbf{Monte Carlo Weighted Sum Approximation Error:} The second term is the error resulting from estimating the sum with a Monte Carlo sum, which can be bounded by a Hoeffding-type bound.
    First, we assume that all the variables except $I$ are given, which are $\{s_{d,i}, w_{d,i}\}, b_d, a$.
    Then, we note that $\mathbf{V}_{\mathbf{P},d+1}^*(b_d,a,\cdot)$ is a function bounded by $V_{\max}$.
    For convenience of notation and conceptual clarity, we will denote $\mathbf{V}_{\mathbf{P},d+1}^*(b_d,a)^{[i]} \equiv \mathbf{V}(i)$, which means the value estimate realization for the $i$-th state index.
    Noting that the probability mass is $p_{w,d}(I=i) = (w_{d,i}/\sum_j w_{d,j})$, the Monte Carlo summation error can be simplified as the following:
    \begin{align}
        \left| \frac{\sum_{i = 1}^C w_{d,i}\mathbf{V}_{\mathbf{P},d+1}^*(b_d,a)^{[i]}}{\sum_{i = 1}^C w_{d,i}} - \frac{1}{C}\sum_{i=1}^{C}\mathbf{V}_{\mathbf{P},d+1}^*(b_d,a)^{[I_i]} \right| & \Longrightarrow \left| \sum_{i=1}^C p_{w,d}(I=i) \cdot \mathbf{V}(i) - \frac{1}{C}\sum_{i = 1}^C \mathbf{V}(I_i) \right|.
    \end{align}
    The first term in the difference is the expectation of $\mathbf{V}(\cdot)$ under the probability measure $p_{w,d}$:
    \begin{align}
        \left| \sum_{i=1}^C p_{w,d}(I=i) \cdot \mathbf{V}(i) - \frac{1}{C}\sum_{i = 1}^C \mathbf{V}(I_i) \right| &= \left| \E_{p_{w,d}}[\mathbf{V}(I)] - \frac{1}{C}\sum_{i = 1}^C \mathbf{V}(I_i) \right|.
    \end{align}
    This is precisely the form of the double-sided Hoeffding-type bound on the function values $\mathbf{V}(I)$, where a Monte Carlo summation, or the Monte Carlo average in this case, attempts to approximate the expected value.
    Therefore, we can choose $\lambda$ such that the absolute difference is bounded by $\lambda$ with probability at least $1-2\exp(-C\lambda^2/2V_{\max}^2)$ for an arbitrary fixed set of $\{s_{d,i}, w_{d,i}\}, b_d, a$:
    \begin{align}
        \left| \frac{\sum_{i = 1}^C w_{d,i}\mathbf{V}_{\mathbf{P},d+1}^*(b_d,a)^{[i]}}{\sum_{i = 1}^C w_{d,i}} - \frac{1}{C}\sum_{i=1}^{C}\mathbf{V}_{\mathbf{P},d+1}^*(b_d,a)^{[I_i]} \right| &\leq \lambda.
    \end{align}
    
    The previous calculation was done by conditioning on $\{s_{d,i}, w_{d,i}\}, b_d, a$.
    However, this bound does not depend on the specific values of these weights nor the particle belief sets, since Hoeffding bound only takes advantage of the fact that the random variables $I_i$ are sampled i.i.d. and the corresponding $\mathbf{V}(I_i)$ are bounded.
    Thus, we can revert this back into a general statement by applying the Tower property, and noting that the expectation of an indicator random variable is the probability of the associated event.
    By denoting the difference as $\Delta(\{s_{d,i}, w_{d,i}\}, b_d, a,\{I_i\})$, we obtain the unconditional Hoeffding-type bound:
    \begin{align}
        \P\lrc{\Delta(\{s_{d,i}, w_{d,i}\}, b_d, a,\{I_i\}) \leq \lambda} &= \E[\mb{1}_{\{\Delta(\{s_{d,i}, w_{d,i}\}, b_d, a,\{I_i\}) \leq \lambda\}}]\\
        &= \E\lrs{\E\lrs{\mb{1}_{\{\Delta(\{s_{d,i}, w_{d,i}\}, b_d, a,\{I_i\}) \leq \lambda\}}\mid \{s_{d,i}, w_{d,i}\}, b_d, a}}\\
        &= \E\lrs{\P\lrc{\Delta(\{s_{d,i}, w_{d,i}\}, b_d, a,\{I_i\}) \leq \lambda \mid \{s_{d,i}, w_{d,i}\}, b_d, a}}\\
        &\geq \E[1-2\exp(-C\lambda^2/2V_{\max}^2)]\\
        &=1-2\exp(-C\lambda^2/2V_{\max}^2).
    \end{align}
    Here, we use the factor augmentation once again to choose $\lambda/4$ such that the absolute difference is bounded by $\lambda/4$ with probability at least $1-2\exp(-C\lambda^2/32V_{\max}^2)$, which gets us our desired result:
    \begin{align}
        \left| \frac{\sum_{i = 1}^C w_{d,i}\mathbf{V}_{\mathbf{P},d+1}^*(b_d,a)^{[i]}}{\sum_{i = 1}^C w_{d,i}} - \frac{1}{C}\sum_{i=1}^{C}\mathbf{V}_{\mathbf{P},d+1}^*(b_d,a)^{[I_i]} \right| &\leq \frac{\lambda}{4}.
    \end{align}
    
    \item \textbf{Monte Carlo Next-Step Integral Approximation Error:} The third term can be thought of as Monte Carlo next-step integral approximation error. 
    To estimate $\mathbf{V}_{\mathbf{P},d+1}^*(b_d,a)^{[I_i]}$, we can simply use the quantity $V^*_{\mathbf{P},d+1}(b_dao^{[I_i]})$, as the random vector $(s_{d+1,{I_i}},o_{I_i})$ is jointly generated using $G$ according to the correct probability $\mathcal{Z}(o\mid a,s_{d+1})\mathcal{T}(s_{d+1}\mid s_{d,I_i},a)$ given $s_{d,I_i}$ in the simulation realized in the tree. 
    Consequently, the quantity $V^*_{\mathbf{P},d+1}(b_dao^{[I_i]})$ for a given $(s_{d,{I_i}},b_d,a)$ is an unbiased 1-sample MC estimate of $\mathbf{V}_{\mathbf{P},d+1}^*(b_d,a)^{[I_i]}$. 
    We define the difference between these two quantities as $\Delta_{d+1}$, which is implicitly a function of random variables $(s_{d+1,{I_i}},o_{I_i})$:
    \begin{align}
      \Delta_{d+1}(b_d,a)^{[I_i]} &\equiv \mathbf{V}_{\mathbf{P},d+1}^*(b_d,a)^{[I_i]} - V^*_{\mathbf{P},d+1}(b_dao^{[I_i]}).
    \end{align}
    Then, we note that $\norm{\Delta_{d+1}}_{\inft} \leq 2V_{\max}$ and $\E\Delta_{d+1} = 0$ by the Tower property conditioning on $(s_{d,{I_i}},b_d,a)$ (which is implicitly conditioning on $I_i$, but this does not matter greatly as everything cancels out) and integrating over $(s_{d+1,{I_i}},o_{I_i})$ first, which holds for any choice of well-behaved sampling distributions on $\{s_{0:d}\}_i$.
    Using this fact, we can then consider this term as a Monte Carlo estimator for the bias $\E\Delta_{d+1} = 0$, and use another Hoeffding bound. 
    Since $\norm{\Delta_{d+1}}_{\inft} \leq 2V_{\max}$, our $\lambda$ factor is then augmented by 1/2 to once again obtain probability at least $1-2\exp(-C\lambda^2/32V_{\max}^2)$:
    \begin{align}
      &\left| \frac{1}{C}\sum_{i=1}^{C}\mathbf{V}_{\mathbf{P},d+1}^*(b_d,a)^{[I_i]} - \frac{1}{C}\sum_{i=1}^{C}V^*_{\mathbf{P},d+1}(b_dao^{[I_i]}) \right| \\
      &= \left| \frac{1}{C}\sum_{i=1}^{C}(\mathbf{V}_{\mathbf{P},d+1}^*(b_d,a)^{[I_i]} - V^*_{\mathbf{P},d+1}(b_dao^{[I_i]})) - 0 \right| \notag\\
      &= \left| \frac{1}{C}\sum_{i=1}^{C}\Delta_{d+1}(b_d,a)^{[I_i]} - \E\Delta_{d+1} \right| \leq \frac{\lambda}{2}.
    \end{align}
    
    \item \textbf{Inductive Function Estimation Error:} The fourth term is bounded by the inductive hypothesis, since each $i$-th absolute difference of the $Q$-function and its estimate at step $d+1$, and furthermore the value function and its estimate at step $d+1$, are all bounded by $\alpha_{d+1}$.
    
\end{enumerate}

Thus, each of the error terms are bound by $(A) \leq \frac{R_{\max}}{4V_{\max}}\lambda$ and $(B)\leq \frac{1}{4}\lambda + \frac{1}{4}\lambda + \frac{1}{2}\lambda + \alpha_{d+1}$, which uses the SN concentration bound 2 times and Hoeffding bound 2 times. 
Combining (A) and (B), we can obtain the desired bound:
\begin{align}
   |Q_{\mathbf{P},d}^*(b_d,a) - \hat{Q}^*_d(\bar{b}_d,a) | &\leq \frac{R_{\max}}{4V_{\max}}\lambda + \gamma \lrs{\frac{1}{4}\lambda + \frac{1}{4}\lambda + \frac{1}{2}\lambda + \alpha_{d+1}} \\
   &\leq \frac{1-\gamma}{4}\lambda + \gamma \lrs{\frac{1}{4}\lambda + \frac{3}{4\gamma}\lambda + \alpha_{d+1}} \\
   &=  \lambda + \gamma\alpha_{d+1} = \alpha_d.
\end{align}

Now, we derive the worst case union bound probability.
First, we want to ensure that the SN concentration inequality holds with probability $1-3\exp(-C\cdot t_{\max}^2(\lambda,C))$ whenever it is used at any given step $d$ and action $a$.
Similarly, we also want to ensure that the Hoeffding-type inequality holds with probability at least $1-2\exp(-C\lambda^2/32V_{\max}^2)$ whenever it is used at any given step $d$ and action $a$.
This means we can bound the worst case probability of using either bound by 
\begin{align}
    \max(3\exp(-C\cdot t_{\max}^2(\lambda,C))&, 2\exp(-C\lambda^2/32V_{\max}^2)) \\
    &\leq 3\exp(-C\cdot t_{\max}^2(\lambda,C)) + 2\exp(-C\lambda^2/32V_{\max}^2) \\
    &\leq 5\exp(-C\cdot\tilde{t}^2).
\end{align}
Furthermore, we multiply the worst-case union bound factor $(4|A|C)^{D}$, since we want the function estimates to be within their respective concentration bounds for all the actions $|A|$ and child nodes $C$ at each step $d=0,\ldots,D-1$, for the 2 times we use SN concentration bound and 2 times we use the double-sided Hoeffding-type bound in the induction step. 
We once again multiply the final probability by $|A|$ to account for the root node $Q$-value estimates also satisfying their respective concentration bounds for all actions. 
Thus, the worst case union bound probability of all bad events is bounded by probability $5|A|(4|A|C)^{D}\exp(-C\cdot\tilde{t}^2)$.
Therefore, we have shown that the concentration bounds for both the particle likelihood SN estimator and Monte Carlo estimator components converge with probability at least $1-5|A|(4|A|C)^{D}\exp(-C\cdot\tilde{t}^2)$ for all levels $d$:
\begin{align}
    |Q_{\mathbf{P},d}^*(b_d,a) - \tilde{Q}^*_d(\bar{b}_d,a)| \leq \tilde{\alpha}_{d}.
\end{align}

\textbf{PB-MDP Value Convergence:} Once again, we split the difference between the SN estimator and the $Q_{\mathbf{M}_{\mathbf{P}}}^*$ function into two terms, the reward estimation error (A) and the next-step value estimation error (B):
{\small
\begin{align}
  |Q^*_{\mathbf{M}_{\mathbf{P}},d}(\bar{b}_d,a)- \hat{Q}_{\omega,d}^*(\bar{b}_d,a) | &\leq \undb{\left| \rho(\bar{b}_d,a) - \rho(\bar{b}_d,a)\right|}{(A) = 0} + \gamma \undb{\left| \E_{\mathbf{M}_{\mathbf{P}}}[V_{\mathbf{M}_{\mathbf{P}},d+1}^*(\bar{b}_{d+1})\mid \bar{b}_{d}, a] - \frac{1}{C}\sum_{i=1}^{C} \hat{V}^*_{\omega,d+1}(\bar{b}_{d+1}'^{[I_i]}) \right|}{(B)}.
\end{align}
}

Since our particle belief MDP induces no reward estimation error, the term (A) is always 0 and proving the base case $d=D-1$ is trivial as (A) and (B) are both 0.

We now prove that the difference (B) is bounded for all $d=0,\ldots,D-1$.
We use the triangle inequality repeatedly to separate it into two terms; (1) the MC transition approximation error bounded by $\lambda$, and (2) the inductive function estimation error bounded by $\beta_{d+1}$:
\begin{align}
    &\undb{\left| \E_{\mathbf{M}_{\mathbf{P}}}[V_{\mathbf{M}_{\mathbf{P}},d+1}^*(\bar{b}_{d+1})\mid \bar{b}_{d}, a] - \frac{1}{C}\sum_{i=1}^{C} \hat{V}^*_{\omega,d+1}(\bar{b}_{d+1}'^{[I_i]}) \right|}{(B)}\\
    &\leq \undb{\left| \E_{\mathbf{M}_{\mathbf{P}}}[V_{\mathbf{M}_{\mathbf{P}},d+1}^*(\bar{b}_{d+1})\mid \bar{b}_{d}, a] - \frac{1}{C}\sum_{i=1}^{C} V_{\mathbf{M}_{\mathbf{P}},d+1}^*(\bar{b}_{d+1}'^{[I_i]}) \right|}{(1) MC transition approximation error} + \undb{\left| \frac{1}{C}\sum_{i=1}^{C} V_{\mathbf{M}_{\mathbf{P}},d+1}^*(\bar{b}_{d+1}'^{[I_i]}) - \frac{1}{C}\sum_{i=1}^{C} \hat{V}^*_{\omega,d+1}(\bar{b}_{d+1}'^{[I_i]}) \right|}{(2) Inductive function estimation error}\notag\\
    &\leq \undb{\lambda}{(1)} + \undb{\beta_{d+1}}{(2)}.
\end{align}
We justify how each error term is bounded.
\begin{enumerate}[label=(\arabic*)]
    \item \textbf{MC Transition Approximation Error:} The Monte Carlo summation over the next step particle belief state samples $\{\bar{b}_{d+1}'^{[I_i]}\}$ given $(\bar{b}_d, a)$ is essentially approximating the integration over the transition density $\tau(\bar{b}_{d+1}\mid \bar{b}_{d},a)$. 
    Since the value function and its estimate are both bounded by $V_{\max}$, we can invoke Hoeffding bound here to obtain the following exponential probabilistic bound on the difference:
    \begin{align}
      \P\lrc{\left| \E_{\mathbf{M}_{\mathbf{P}}}[V_{\mathbf{M}_{\mathbf{P}},d+1}^*(\bar{b}_{d+1})\mid \bar{b}_{d}, a] - \frac{1}{C}\sum_{i=1}^{C} V_{\mathbf{M}_{\mathbf{P}},d+1}^*(\bar{b}_{d+1}'^{[I_i]}) \right| \leq \lambda} &\geq 1-2\exp(-C\lambda^2/2V_{\max}^2).
    \end{align}
    \item \textbf{Inductive Function Estimation Error:} The second term is bounded by the inductive hypothesis, since each $i$-th absolute difference of the $Q$-function and its estimate at step $d+1$, and furthermore the value function and its estimate at step $d+1$, are all bounded by $\beta_{d+1}$.
\end{enumerate}

By applying similar logic of ensuring that every particle belief state node and action pairs can satisfy the concentration inequality, we note that the particle belief MDP approximation concentration bound is satisfied with probability at least $1 - |A|(|A|C)^{D}\exp(-C\lambda^2/2V_{\max}^2)$.
Thus, since (A) is 0 and (B) is bounded by $\lambda + \beta_{d+1}$, the $Q$-value estimation error with respect to $\mathbf{M}_{\mathbf{P}}$ is bounded as desired:
\begin{align}
  |Q^*_{\mathbf{M}_{\mathbf{P}},d}(\bar{b}_d,a)- \hat{Q}_{\omega,d}^*(\bar{b}_d,a) | &\leq \gamma(\lambda + \beta_{d+1}) = \beta_d.
\end{align}

\textbf{Combining both concentration bounds:} In order to enable simultaneous satisfaction of the two concentration inequalities, we bound the worst case union probability by using the definition of $\tilde{t}$ and combining the upper bounding terms together:
\begin{align}
  5|A|(4|A|C)^{D}\exp(-C\cdot \tilde{t}^2) +& |A|(|A|C)^{D}\exp(-C\lambda^2/2V_{\max}^2) \\
  &\leq 5|A|(4|A|C)^{D}\exp(-C\cdot \tilde{t}^2) + |A|(|A|C)^{D}\exp(-C\cdot \tilde{t}^2)\\
  &\leq 5|A|(4|A|C)^{D}\exp(-C\cdot \tilde{t}^2) + |A|(4|A|C)^{D}\exp(-C\cdot \tilde{t}^2)\\
  &= 6|A|(4|A|C)^{D}\exp(-C\cdot \tilde{t}^2).
\end{align}
Therefore, we conclude that the $Q$-value concentration inequalities for both POMDP approximation error and particle belief approximation error are bounded by $\alpha_d,\beta_d$ at every node, respectively, with probability at least $1 - 6|A|(4|A|C)^{D}\exp(-C\cdot \tilde{t}^2)$.

\section{Proof of Theorem 2 - Sparse Sampling-$\omega$ Coupled Optimality}\label{app:theorem2}
We reiterate the conditions and \cref{thm:ssw} below:
\begin{enumerate}[label=(\roman*)]
    \item $S$ and $O$ are continuous spaces, and the action space has a finite number of elements, $|A|<+\inft$.
    \item The densities $\mathcal{Z},\mathcal{T},b_0$ have the property that, for any observation sequence $\{o_{n}\}_{n=1}^d$, the R\'enyi divergence of the target distribution $\mathcal{P}^d$ and sampling distribution $\mathcal{Q}^d$ (\cref{eq:pqdef,eq:pqdef2}) is bounded above by $d_{\inft}^{\max}$ for all $d = 0,\ldots,D-1$:
    \begin{align}
        d_{\inft}(\mathcal{P}^d||\mathcal{Q}^d) = \text{ess sup}_{x \sim \mathcal{Q}^d}\, w_{\mathcal{P}^d/\mathcal{Q}^d}(x) \leq d_{\inft}^{\max}
    \end{align}
    \item The reward function $R$ is bounded by a finite constant $R_{\max}$, and hence the value function is bounded by $V_{\max} \equiv \frac{R_{\max}}{1-\gamma}$.
    \item We can sample from the generating function $G$ and evaluate the observation density $\mathcal{Z}$. 
    \item The POMDP terminates after no more than $D < \infty$ steps. 
    \item We restrict our analysis to all the beliefs $b \in B$ that are realizable from the initial belief $b_0$ through Bayesian updates with action sequences $\{a_n\}$ and observation sequences $\{o_n\}$.
\end{enumerate}
\begin{theorem}[Sparse Sampling-$\omega$ Coupled Optimality]
  Suppose conditions \ref{req:space}-\ref{req:realizable} are satisfied. Then, for any $\lambda >0$ and $0<\delta\leq 1$, choosing particle count constant $C$ that satisfies:
  \begin{align}
    C &= \max\lrc{\lrp{\frac{4V_{\max} d_{\inft}^{\max}}{\lambda}}^2, \frac{64V_{\max}^2 }{\lambda^2}\lrp{D\log\frac{24|A|^{\frac{D+1}{D}}V_{\max}^2D }{\lambda^2} + \log\frac{1}{\delta}}},
  \end{align}
  the $Q$-function estimates $\hat{Q}^*_{\omega,d}(\bar{b}_d,a)$ obtained for all depths $d=0,\ldots,D-1$, realized beliefs or histories $b_d$ encountered in the Sparse Sampling-$\omega$ tree, and actions $a$ are jointly near-optimal with respect to $Q_{\mathbf{P},d}^*$ and $Q_{\mathbf{M}_{\mathbf{P}},d}^*$ with probability at least $1 - \delta$: 
  \begin{align}
    |Q_{\mathbf{P},d}^*(b_d,a) - \hat{Q}^*_{\omega,d}(\bar{b}_d,a) | & \leq \frac{\lambda}{1-\gamma}, \\ 
    |Q_{\mathbf{M}_{\mathbf{P}},d}^*(\bar{b}_d,a) - \hat{Q}^*_{\omega,d}(\bar{b}_d,a) | & \leq \frac{\lambda}{1-\gamma}.
  \end{align}
\end{theorem}

\begin{proof}
This proof has two parts.
First, we show that the choice of $C$ is valid given the assumptions in \cref{lemma:induction}.
Then, we use \cref{lemma:induction,lemma:value} to prove the $Q$-value estimate claim.

The conditions necessary for $C$ from \cref{lemma:induction} are the following:
\begin{align}
    t_{\max}&(\lambda,C) = \frac{\lambda}{4V_{\max} d_{\inft}^{\max}} - \frac{1}{\sqrt{C}} > 0\\
    \delta &\geq 6|A|(4|A|C)^{D}\exp(-C\cdot\tilde{t}^2) \label{eq:delta-bound}\\
    \tilde{t}_{\max}&(\lambda,C) \equiv \max\lrc{t_{\max}(\lambda,C), \lambda/4\sqrt{2}V_{\max}}
\end{align}
Note that the constraint on $t_{\max}$ implies that the following must be true:
\begin{align}
    \frac{\lambda}{4V_{\max} d_{\inft}^{\max}} - \frac{1}{\sqrt{C}} > 0 \Longrightarrow C > \lrp{\frac{4V_{\max} d_{\inft}^{\max}}{\lambda}}^2,
\end{align}
which gives us the first option of $C$ in the maximum.

For the next option of $C$, we show that substituting the formula yields condition \cref{eq:delta-bound}.
We note that due to the definition of $\tilde{t}_{\max}$, the following is true:
\begin{align}
    \tilde{t}_{\max}&(\lambda,C) \geq \lambda/4\sqrt{2}V_{\max}.
\end{align}
Let us denote $T \equiv (\lambda/4\sqrt{2}V_{\max})^2$ for convenience.
Then, since $T$ is upper-bounded by $\tilde{t}_{\max}$, 
\begin{align}
    6|A|(4|A|C)^{D}\exp(-C\cdot\tilde{t}^2) &\leq 6|A|(4|A|C)^{D}\exp(-C\cdot T)\\
    &\leq (24|A|^{\frac{D+1}{D}}C)^{D}\exp(-C\cdot T) \label{eq:boundbydelta}
\end{align}
Consequently, if we show that \cref{eq:boundbydelta} is bounded by $\delta$, then we automatically show that the original quantity is bounded by $\delta$ as well.
By defining $X \equiv 24|A|^{\frac{D+1}{D}}$, we want to show that this simplified formula is bounded above by $\delta$:
\begin{align}
    \delta &\geq (X\cdot C)^{D}\exp(-C\cdot T).
\end{align}
We will show that our second option of $C$ satisfies the following, where the simplified formula equals:
\begin{align}
    \frac{64V_{\max}^2 }{\lambda^2}\lrp{D\log\frac{24|A|^{\frac{D+1}{D}}V_{\max}^2D }{\lambda^2} + \log\frac{1}{\delta} } &\Longrightarrow \frac{2}{T}\lrp{D\log\frac{XD}{T} + \log\frac{1}{\delta}}
\end{align}
Substituting in the second option of $C$:
\begin{align}
    (X\cdot C)^{D}\exp(-C\cdot T) &= \delta^2 \frac{\lrp{\frac{2XD}{T}\log\frac{XD}{T} + \frac{2X}{T}\log\frac{1}{\delta}}^{D}} {\lrp{\frac{XD}{T}}^{2D}}\\
    &= \delta^2 \frac{\lrp{\frac{XD}{T}\log\lrp{\frac{XD}{T}}^2 + \frac{XD}{T}\log\lrp{\frac{1}{\delta}}^{2/D}}^{D}} {\lrp{\frac{XD}{T}}^{2D}}\\
    &= \delta^2 \frac{\lrp{\log\lrp{\frac{XD}{T\delta^{1/D}}}^2}^{D}} {\lrp{\frac{XD}{T}}^{D}}= \delta \lrp{\frac{\log\lrp{\frac{XD}{T\delta^{1/D}}}^2} {\lrp{\frac{XD}{T\delta^{1/D}}}}}^{D}
\end{align}
Note that the function $f(x) = \log x^2/ x$ is less than 1 for $x > 0$ (in fact, the maximum value of $f(x)$ is exactly $2/e$, attained by setting $x=e$).
This means that the quantity inside the parentheses is less than 1, which lets us obtain our desired result
\begin{align}
    (X\cdot C)^{D}\exp(-C\cdot T) &\leq \delta.
\end{align}
Therefore, each of our option of $C$ satisfies the respective conditions, and taking the maximum of the options will yield valid results for both inequality constraints:
\begin{align}
    C &= \max\lrc{\lrp{\frac{4V_{\max} d_{\inft}^{\max}}{\lambda}}^2, \frac{64V_{\max}^2 }{\lambda^2}\lrp{D\log\frac{24|A|^{\frac{D+1}{D}}V_{\max}^2D }{\lambda^2} + \log\frac{1}{\delta}}}.
\end{align}
This concludes the first part of the proof; we have shown that $C$ is a valid choice.
Next, we prove the value bounds.

With our choice of $C$, from \cref{lemma:induction}, the error in estimating $Q^*$ with our Sparse Sampling-$\omega$ policy is bounded by $\alpha_d$ for all $d,a$ with probability at least $1-\delta$. 
Since $\alpha_d \leq \alpha_0$, the following holds for all $d=0,\ldots, D-1$ with probability at least $1-\delta$ through \cref{lemma:pwl_sn,lemma:induction}:
\begin{align}
  |Q_{\mathbf{P},d}^*(b_d,a) - \hat{Q}^*_d(\bar{b}_d,a) | & \leq \alpha_0  \leq \sum_{d=0}^{D-1} \gamma^{d} \lambda \leq \frac{\lambda}{1-\gamma},\\
  |Q_{\mathbf{M}_{\mathbf{P}},d}^*(\bar{b}_d,a) - \hat{Q}^*_d(\bar{b}_d,a) | & \leq \beta_0  \leq \sum_{d=1}^{D} \gamma^{d} \lambda \leq \frac{\lambda}{1-\gamma}.
\end{align}
\end{proof}

\section{Proof of Theorem 4 - Particle Belief MDP Approximate Policy Convergence}\label{app:theorem4}
Before we prove Theorem \ref{thm:pomdp-grand}, we first prove the following lemma, which is an adaptation of Kearns \textit{et al.}~(\citeyear{kearns2002sparse}) and Singh and Yee~(\citeyear{Singh1994}) for belief states $b$.

\renewcommand{\thelemma}{\arabic{lemma}A}%
\begin{lemma}\label{lemma:value}
    Consider a POMDP with a finite horizon of $D$ steps and policy $\pi(b) = \arg\max_a\, \hat{Q}(b, a)$ where $\hat{Q}$ is a stochastic value function approximator with errors bounded by a positive constant $\xi$: $|Q^*(b, a) - \hat{Q}(b, a)| \leq \xi$.
    Let $V^\pi(b_0)$ denote the value of executing $\pi$ starting at belief $b_0$ with an exact Bayesian belief update, $b_{t+1} = b_tao$, between each call to the policy.
    Then
    \begin{equation}
        V^*(b_0) - V^\pi(b_0) \leq \frac{2\xi}{1-\gamma}\text{.}
    \end{equation}
\end{lemma}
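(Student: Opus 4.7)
\textbf{Proof Proposal for Lemma \ref{lemma:value}.} The plan is to run the classical Singh--Yee performance difference argument \shortcite{Singh1994} on the belief MDP induced by exact Bayesian updates. Since the closed-loop dynamics keep us in the belief MDP, we may treat $V^*$ and $V^\pi$ exactly as MDP value functions on the belief space $\Delta(S)$, and the standard Bellman recursion applies:
\begin{align}
V^*(b) &= \max_a \bigl[R(b,a) + \gamma \E_{o}[V^*(bao) \mid b,a]\bigr] = \max_a Q^*(b,a), \\
V^\pi(b) &= R(b,\pi(b)) + \gamma \E_{o}[V^\pi(bao) \mid b,\pi(b)].
\end{align}

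The first step is to establish the familiar ``$2\xi$ one-step regret'' lemma: for any belief $b$ and any realization of the stochastic estimator $\hat{Q}$, letting $a^* = \arg\max_a Q^*(b,a)$ and $\pi(b) = \arg\max_a \hat{Q}(b,a)$, the hypothesis $|Q^* - \hat{Q}| \leq \xi$ yields
\begin{equation}
Q^*(b,a^*) - Q^*(b,\pi(b)) \leq \bigl[\hat{Q}(b,a^*) + \xi\bigr] - \bigl[\hat{Q}(b,\pi(b)) - \xi\bigr] \leq 2\xi,
\end{equation}
because $\hat{Q}(b,\pi(b)) \geq \hat{Q}(b,a^*)$ by the definition of $\pi$. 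This handles the ``action-selection gap'' at the current step.

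Next I would decompose the suboptimality by adding and subtracting $Q^*(b,\pi(b))$:
\begin{align}
V^*(b) - V^\pi(b) &= \bigl[Q^*(b,a^*) - Q^*(b,\pi(b))\bigr] + \bigl[Q^*(b,\pi(b)) - V^\pi(b)\bigr] \\
&\leq 2\xi + \gamma\, \E_{o}\bigl[V^*(bao) - V^\pi(bao) \mid b,\pi(b)\bigr],
\end{align}
where the second term unrolls into a $\gamma$-discounted expectation of the next-step suboptimality because the closed-loop belief update is exact, so $Q^*(b,\pi(b)) - V^\pi(b) = \gamma \E[V^*(b') - V^\pi(b')]$. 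Defining $\Delta_d \equiv \sup_b (V^*(b) - V^\pi(b))$ over beliefs reachable at step $d$, this recursion reads $\Delta_d \leq 2\xi + \gamma \Delta_{d+1}$, with terminal $\Delta_D = 0$ for the finite horizon. A backward induction on $d$ from $D$ to $0$ then yields $\Delta_0 \leq 2\xi \sum_{t=0}^{D-1} \gamma^t \leq \frac{2\xi}{1-\gamma}$, giving the stated bound at $b_0$.

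The only subtlety (and the main thing I would flag carefully rather than a genuine obstacle) is that $\hat{Q}$ is stochastic, so $\pi$ is a randomized policy. Since the assumed error bound $|Q^*(b,a) - \hat{Q}(b,a)| \leq \xi$ holds pathwise for every realization, the per-step inequality $Q^*(b,a^*) - Q^*(b,\pi(b)) \leq 2\xi$ holds pathwise as well, and taking expectations over the randomness of $\hat{Q}$ and the observation stream commutes cleanly with the linear recursion; no additional concentration argument is needed. The finite-horizon assumption conveniently supplies the $\Delta_D = 0$ base case, so the geometric sum is actually finite rather than an infinite-series limit.
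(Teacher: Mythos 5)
Your proposal is correct and follows essentially the same route as the paper's proof: the same pathwise $2\xi$ one-step action-selection bound, the same add-and-subtract of $Q^*(b,\pi(b))$ to expose a $\gamma$-discounted next-step gap under the exact belief update, and the same backward induction over the finite horizon culminating in the geometric sum $\sum_{t=0}^{D-1}\gamma^t 2\xi \leq \frac{2\xi}{1-\gamma}$. The only cosmetic difference is that you phrase the induction through a reachable-belief supremum $\Delta_d$ with base case $\Delta_D = 0$, whereas the paper starts the induction at $t = D-1$ with expectations written explicitly, and both handle the stochasticity of $\hat{Q}$ equivalently.
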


\begin{proof}
First, note that if an action is chosen by $\pi$, it must appear better than $\pi^*$ according to $\hat{Q}$, i.e. $\hat{Q}(b, \pi(b)) \geq \hat{Q}(b, \pi^*(b))$. The worst case is when $\hat{Q}(b, \pi(b)) = Q^*(b, \pi(b)) + \xi$ and $\hat{Q}(b, \pi^*(b)) = Q^*(b, \pi^*(b)) - \xi$. Thus, for any $t$, we have the bound
\begin{align}
    Q^*\left(b_t, \pi^*(b_t)\right) - \E[Q^*\left(b_t, \pi(b_t)\right)] &\leq 2 \xi \text{.} \label{eq:2beta}
\end{align}
Next, we prove that $V^*(b_t) - V^\pi(b_t) \leq \sum_{d=t}^{D-1} \gamma^{d-t} 2 \xi$ using induction from $t=D-1$ to $t=0$. We verify the base case, $t=D-1$, by observing that both $Q^\pi(b_{D-1}, \pi(b_{D-1}))$ and $Q^*(b_{D-1}, \pi(b_{D-1}))$ are equal to $R(b_{D-1}, \pi(b_{D-1}))$ since no further reward can be accumulated and using \cref{eq:2beta}:
\begin{align}
    V^*(b_{D-1}) - V^\pi(b_{D-1}) &= Q^*(b_{D-1}, \pi^*(b_{D-1})) - \E[Q^\pi(b_{D-1}, \pi(b_{D-1}))]\\
    &= Q^*(b_{D-1}, \pi^*(b_{D-1})) - \E[Q^*(b_{D-1}, \pi(b_{D-1}))]\\
    & \leq 2\xi \text{.}
\end{align}
The inductive step is verified by subtracting and adding $\E[Q^*\left(b, \pi(b)\right)]$, using the bound in \cref{eq:2beta}, and applying the inductive hypothesis:
\begin{align}
    V^*(b_t) - V^\pi(b_t) &= Q^*\left(b, \pi^*(b)\right) - \E[Q^\pi(b, \pi(b))]\\
    &= \undb{Q^*\left(b, \pi^*(b)\right) - \E[Q^*\left(b, \pi(b)\right)]}{Bounded by \cref{eq:2beta}} + \E[Q^*\left(b, \pi(b)\right)] - \E[Q^\pi(b, \pi(b))]\\
    &\leq 2\xi + \E[Q^*\left(b, \pi(b)\right)] - \E[Q^\pi(b, \pi(b))]\\
    &= 2\xi + \E[R\left(b, \pi(b)\right)] + \gamma \E[V^*(b_{t+1}))] - \E[R(b, \pi(b))] - \gamma \E[V^\pi(b_{t+1}))]\\
    &= 2\xi + \gamma \E[V^*(b_{t+1}))- V^\pi(b_{t+1}))]\\
    &= 2\xi + \gamma \sum_{d=t+1}^{D-1} \gamma^{d-t} 2 \xi = \sum_{d=t}^{D-1} \gamma^{d-t} 2 \xi \text{.}
\end{align}
Now, by applying the result above to $t=0$, we prove the lemma:
\begin{equation}
    V^*(b_0) - V^\pi(b_0) \leq \sum_{d=0}^{D-1} \gamma^d 2\xi \leq \frac{2\xi}{1-\gamma} \text{.}\notag
\end{equation}%
\end{proof}

\setcounter{theorem}{3}
\begin{theorem}[Particle Belief MDP Approximate Policy Convergence]
    Suppose a near-optimal MDP planning algorithm $\mathcal{A}$ is used to plan with particle belief MDP $\mathbf{M}_{\mathbf{P}}$ repeatedly in a closed loop with POMDP environment $\mathbf{P}$ and an exact Bayesian belief updater to process observations from the environment. Further assume that regularity conditions \ref{req:space}-\ref{req:realizable} are met for $\mathbf{M}_{\mathbf{P}}$ and that $\mathcal{A}$ can approximate the $Q$-values of $\mathbf{M}_{\mathbf{P}}$ with arbitrary precision $\epsilon_\mathcal{A}$ with probability at least $1-\delta_\mathcal{A}$.
    Then, for any $\epsilon > 0$, we can choose $C$ such that the value obtained by planning with $\mathcal{A}$ in $\mathbf{M}_{\mathbf{P}}$ is within $\epsilon$ of the optimal POMDP value function at $b_0$:
    \begin{align}
        V_{\mathbf{P}}^*(b_0)-V_{\mathbf{M}_{\mathbf{P}}}^{\mathcal{A}}(b_0) &\leq \epsilon.
    \end{align}
\end{theorem}

\begin{proof}
First, we choose $\lambda$ and $\delta_{\mathbf{M}_{\mathbf{P}}}$ for the particle belief MDP approximation to be the following, with $\epsilon_{\mathbf{M}_{\mathbf{P}}} = \frac{2\lambda}{1-\gamma}$ as per the definition in the proof of \cref{thm:pomdp-qvalue}:
\begin{align}
    \lambda &= \frac{(1-\gamma)^2}{8}\epsilon - \frac{1-\gamma}{2}\epsilon_{\mathcal{A}} \Longrightarrow \epsilon_{\mathbf{M}_{\mathbf{P}}} = \frac{1-\gamma}{4}\epsilon - \epsilon_{\mathcal{A}},\\
    \delta_{\mathbf{M}_{\mathbf{P}}} &= \frac{\epsilon_{\mathbf{M}_{\mathbf{P}}} + \epsilon_{\mathcal{A}}}{V_{\max}D(1-\gamma)} - \delta_{\mathcal{A}}.
\end{align}
In this context, we mathematically mean a near-optimal MDP planning algorithm $\mathcal{A}$ to be one that can obtain arbitrarily small values of $\epsilon_{\mathcal{A}},\delta_{\mathcal{A}}$ that would satisfy $\lambda > 0$ and $0<\delta_{\mathbf{M}_{\mathbf{P}}}\leq1$.
Then, we can choose $C$ through \cref{thm:pomdp-qvalue} such that we can invoke \cref{cor:planning} to obtain $Q$-value estimation accuracy $\xi = \epsilon_{\mathbf{M}_{\mathbf{P}}} + \epsilon_{\mathcal{A}}$ with worst case probability $\delta' = \delta_{\mathbf{M}_{\mathbf{P}}} + \delta_{\mathcal{A}}$.
Consequently, with our choice of $\lambda$ and $\delta_{\mathbf{M}_{\mathbf{P}}}$ above, $\xi$ and $\delta'$ are equal to the following:
\begin{align}
    \xi &= \frac{1-\gamma}{4}\epsilon,\\ 
    \delta' &= \frac{\xi}{V_{\max}D(1-\gamma)}.
\end{align}

During policy execution, we create a new independent tree and choose an action based on the estimated $Q$-values.
This means with algorithm $\mathcal{A}$ equipped with particle belief states, there is at most $\delta'$ probability that $|Q_{\mathbf{P},0}^*(b_0,a) - \hat{Q}_{\mathbf{M}_{\mathbf{P}},0}^{\mathcal{A}}(\bar{b}_0,a)| > \xi$ at each of the $D$ steps of the POMDP.

Thus, with probability at least $1-D\delta'$, we execute a policy that meets the assumptions of \cref{lemma:value} with $\xi$, and hence by \cref{lemma:value}, the difference between the optimal value and the average accumulated reward for this case is at most $\frac{2\xi}{1-\gamma}$.
In the other case, which occurs with at most probability $D\delta'$, an arbitrarily bad policy can be executed, resulting in an accumulated reward difference of up to $2V_{\max}$ from the optimal policy.
Combining these two cases, we have
\begin{align}
    V_{\mathbf{P}}^*(b_0)-V_{\mathbf{M}_{\mathbf{P}}}^{\mathcal{A}}(b_0) &\leq (1-D\delta') \frac{2\xi}{1-\gamma} + D\delta' (2 V_{\max})\\
    &\leq \frac{2\xi}{1-\gamma} + D\delta' (2 V_{\max})\\
    &= \frac{2\xi}{1-\gamma} + \frac{2\xi}{1-\gamma}\\
    & = \epsilon \text{.}
\end{align}
\end{proof}

\section{Experiment Details}\label{app:experiments}
\begin{table}[H]
    \centering
    \begin{tabular}{lcccccccccc}
        \toprule
        & $c_\textsc{UCB}$ & $\beta_\textsc{UCB}$ & $k_a$ & $\alpha_a$ & $k_o$ & $\alpha_o$ & $m_{\min}$ & $\delta$ & $C$ & Depth\\
        \midrule
        Laser Tag (D, D, D) & & & & & & & & & \\
        \midrule
        Sparse-PFT & 15 & 0.22 & - & - & 15 & -    & -  & -   & 96 & 37 \\
        PFT-DPW    & 25 & 0.09 & - & - & 5  & 0.33 & -  & -   & 25 & 48 \\
        POMCPOW    & 26 & -    & - & - & 4  & 0.03 & -  & -   & -  & 50 \\
        AdaOPS     & -  & -    & - & - & -  & -    & 30 & 0.1 & -  & 90 \\
        POMCP      & 26 & -    & - & - & -  & -    & -  & -   & -  &  50 \\
        QMDP       & -  & -    & - & - & -  & -    & -  & -   & -  & -  \\
        \midrule
        Light Dark (D, D, C) & & & & & & & & & \\
        \midrule
        Sparse-PFT& 95  & 0.39 & - & - & 24 & -    & -  & -   & 134& 28 \\
        PFT-DPW   & 93  & 0.30 &  - & - & 13 & 0.08 & -  & -   & 33 & 20 \\
        POMCPOW   & 90  & - &  - & - & 5  & 0.07 & -  & -   & -  & 20 \\
        AdaOPS    &  -  & - &  - & - & -  & -    & 30 & 0.1 & -  & 90  \\
        POMCP     &  83 & - &  - & - & -  & -    & -  & -   & -  & 20 \\
        QMDP      &  -  & - &  - & - & -  & -    & -  & -   & -  & -  \\
        \midrule
        Sub Hunt (D, D, C) & & & & & & & & & \\
        \midrule
        Sparse-PFT & 20 & 0.25 &  - & - & 27 & -   & -  & -   & 23 & 20 \\
        PFT-DPW   & 85 & 0.08 &  - & - & 10 & 0.08 & -  & -   & 79 & 20 \\
        POMCPOW   & 17  & - &  - & - & 6 & 0.01 & -  & -   & -  & 50 \\
        AdaOPS    &  -  & - &  - & - & - & -    & 30 & 0.1 & -  & 90  \\
        POMCP     & 17  & - &  - & - & - & -    & -  & -   & -  & 84 \\
        QMDP      & -   & - &  - & - & - & -    & -  & -   & -  & -  \\
        \midrule
        VDP Tag (C, C, C) & & & & & & & & & \\
        \midrule
        Sparse-PFT & 16  & 0.12 &  28 & -    & 28 & -     & - & - & 385 & 33 \\
        PFT-DPW    & 23  & 0.25 &  22 & 0.32 & 21 & 0.04  & - & - & 132 & 44 \\
        POMCPOW    & 110 & - &  30 & 0.03 & 5  & 0.01  & - & - & -   & 10 \\
        \midrule
        VDP Tag$^D$ (C, D, C) & & & & & & & & & \\
        \midrule
        Sparse-PFT & 76 & 0.08 &  -  & - & 25 & -    & -  & -    & 444 & 46 \\
        PFT-DPW   & 10  & 0.18 &  -  & - & 9  & 0.11 & -  & -    & 330 & 22 \\
        POMCPOW   & 31  & - &  -  & - & 5  & 0.05 & -  & -    & -   & 10 \\
        AdaOPS    &  -  & - &  -  & - & -  & -    & 40 & 0.25 & -   & 90  \\
        \bottomrule
    \end{tabular}
    \caption{Summary of hyperparameters used in experiments.}
    \label{tab:hyper}
\end{table}

For UCT methods, we vary $c_\textsc{UCB}$, the UCB exploration parameter, and $\beta_{UCB}$, the polynomial UCB factor.
$k_a$ and $\alpha_a$ are action progressive widening parameters, where new actions are added if widening criterion $|C(h)| \le k_aN(h)^{\alpha_a}$ is met.
Similarly, $k_o$ and $\alpha_o$ are observation progressive widening parameters, where new actions are added if widening criterion $|C(ha)| \le k_oN(ha)^{\alpha_o}$ is met.
Sparse-PFT uses $\alpha_a = \alpha_o = 0$.
$C$ is the number of particles constituting internal tree beliefs for PFT methods.
$m_{min}$ is the minimum number of particles required to approximate a belief for AdaOPS.
Finally, $\delta$ is the maximum distance distance between beliefs resulting from observation branches required to merge the branches for AdaOPS.


\begin{table}[H]
    \centering
    \begin{tabular}{lccc}
        \toprule
        & $\hat{V}$ & $L_0$ & $U_0$\\
        \midrule
        Laser Tag (D, D, D) & & &\\
        \midrule
        Sparse-PFT & QMDP PO-Rollout & - & -  \\
        PFT-DPW   & QMDP PO-Rollout & - & -  \\
        POMCPOW   & FO-Value  & - & -  \\
        AdaOPS    &  -  & Random Rollout & QMDP \\
        POMCP     & Random Rollout  & - & -  \\
        QMDP      & -   & - & - \\
        \midrule
        Light Dark (D, D, C) & & &\\
        \midrule
        Sparse-PFT & QMDP PO-Rollout & - & -  \\
        PFT-DPW   & QMDP PO-Rollout & - & -  \\
        POMCPOW   & FO-Value  & - & -  \\
        AdaOPS    &  -  & Random Rollout & QMDP \\
        POMCP     & Random Rollout  & - & -  \\
        QMDP      & -   & - & - \\
        \midrule
        Sub Hunt (D, D, C) & & &\\
        \midrule
        Sparse-PFT & QMDP PO-Rollout & - & -  \\
        PFT-DPW   & QMDP PO-Rollout & - & -  \\
        POMCPOW   & FO-Value  & - & -  \\
        AdaOPS    &  -  & Random Rollout & QMDP \\
        POMCP     & Random Rollout  & - & -  \\
        QMDP      & -   & - & - \\
        \midrule
        VDP Tag (C, C, C) & & &\\
        \midrule
        Sparse-PFT & Random Rollout & - & -  \\
        PFT-DPW   & Random Rollout & - & -  \\
        POMCPOW   & Random Rollout  & - & -  \\
        \midrule
        VDP Tag$^D$ (C, D, C) & & &\\
        \midrule
        Sparse-PFT & Random Rollout & - & -  \\
        PFT-DPW   & Random Rollout & - & -  \\
        POMCPOW   & Random Rollout  & - & -  \\
        AdaOPS    &  -  & Random Rollout & $10^6$  \\
        POMCP     & Random Rollout  & - & -  \\
        \bottomrule
    \end{tabular}
    \caption{Summary of leaf node value estimators used in experiments.}
\end{table}

QMDP PO-Rollout corresponds to sampling a ``true state'' from a leaf node particle belief and simulating the state/belief dynamics with a particle filter as a belief updater and QMDP as a policy. The returns following the trajectory of the sampled ``true state'' are taken as a value estimate for the leaf node.
FO-Value ("Fully Observable Value") corresponds to using the MDP value for the state representation of a particle.
QMDP corresponds to using the belief value estimate given by a QMDP policy.
Random Rollout corresponds to sampling a state from a particle belief and simulating it forward using a random policy. The returns of this simulation are used as the initial leaf node value estimate.
A constant number (e.g. $10^6$) indicates a belief-independent static initial value estimate.
UCT solvers only require a single value estimate ($\hat{V}$), whereas AdaOPS requires lower and upper bounds on belief value, $L_0$ and $U_0$ respectively.


\vskip 0.2in


\printbibliography

\end{document}